  \providecommand\BibTeX{{%
    \normalfont B\kern-0.5em{\scshape i\kern-0.25em b}\kern-0.8em\TeX}}}
\newcommand{\p}{\mathbb{P}}
\newcommand{\e}{\mathbb{E}}
\newcommand{\reals}{\mathbb{R}}
\newcommand{\abs}[1]{\left| #1 \right|}
\newcommand{\cD}{\mathcal{D}}
\newcommand{\cE}{\mathcal{E}}
\newcommand{\cF}{\mathcal{F}}
\newcommand{\cG}{\mathcal{G}}
\newcommand{\cH}{\mathcal{H}}
\newcommand{\cS}{\mathcal{S}}
\newcommand{\cU}{\mathcal{U}}
\newcommand{\cX}{\mathcal{X}}
\newcommand{\cY}{\mathcal{Y}}
\newcommand{\cZ}{\mathcal{Z}}
\newcommand{\indep}{\perp \!\!\!\perp}
\newcommand{\loss}{\operatorname{loss}}
\newcommand{\cost}{\operatorname{cost}}
\newcommand{\disp}{\operatorname{disparity}}
\newcommand{\best}{\operatorname{Best}}
\newcommand{\Qhat}{\hat{Q}}
\newcommand{\costhat}{\widehat{\cost}}
\newcommand{\disphat}{\widehat{\disp}}
\newcommand{\epsilonhat}{\hat \epsilon}
\newcommand{\hnuma}{\hat{\omega}_1}
\newcommand{\hnum}{\hat{\omega}}
\newcommand{\hnumo}{\hat{\omega}_0}
\newcommand{\hdena}{\hat{\bar{\omega}}_1}
\newcommand{\hdeno}{\hat{\bar{\omega}}_0}
\newcommand{\hden}{\hat{\bar{\omega}}}
\newcommand{\numa}{\omega_1}
\newcommand{\num}{\omega}
\newcommand{\den}{\bar{\omega}}
\newcommand{\numo}{\omega_0}
\newcommand{\dena}{\bar{\omega}_1}
\newcommand{\deno}{\bar{\omega}_0}
\newtheorem{theorem}{Theorem}
\newtheorem{lemma}{Lemma}
\newtheorem{assumption}{Assumption}
\theoremstyle{condition}
\newtheorem{definition}{Definition}
\newcommand\numberthis{\addtocounter{equation}{1}\tag{\theequation}}
\newcommand{\fairs}{FaiRS\xspace}
\title{Characterizing Fairness Over the Set of Good Models Under Selective Labels}
\date{\today}
\author{
    Amanda Coston\thanks{ Carnegie Mellon University, Heinz College and Machine Learning Department: \texttt{acoston@andrew.cmu.edu} }
    \And
    Ashesh Rambachan\thanks{ Harvard University, Department of Economics: \texttt{asheshr@g.harvard.edu} } 
    \And
    Alexandra Chouldechova\thanks{ Carnegie Mellon University, Heinz College: \texttt{achoulde@andrew.cmu.edu} }
}
\begin{document}
\maketitle
\begin{abstract}
Algorithmic risk assessments are used to inform decisions in a wide variety of high-stakes settings. Often multiple predictive models deliver similar overall performance but differ markedly in their predictions for individual cases, an empirical phenomenon known as the ``Rashomon Effect.'' These models may have different properties over various groups, and therefore have different predictive fairness properties. We develop a framework for characterizing predictive fairness properties over the set of models that deliver similar overall performance, or ``the set of good models.'' Our framework addresses the empirically relevant challenge of selectively labelled data in the setting where the selection decision and outcome are unconfounded given the observed data features. Our framework can be used to 1) replace an existing model with one that has better fairness properties; or 2) audit for predictive bias. We illustrate these uses cases on a real-world credit-scoring task and a recidivism prediction task.
\end{abstract}
\section{Introduction} \label{sec:intro}

Algorithmic risk assessments are used to inform decisions in high-stakes settings such as health care, child welfare, criminal justice, consumer lending and hiring \cite{CaruanaEtAl(15), ChouldechovaEtAl(18), kleinberg2018human, FusterEtAl(20), RaghavanEtAl(20)}.
Unfettered use of such algorithms in these settings risks disproportionate harm to marginalized or protected groups \cite{BarocasSelbst2016, dastin_2018, vigdor_2019}. 
As a result, there is widespread interest in measuring and limiting predictive disparities across groups.

The vast literature on algorithmic fairness offers numerous methods for learning anew the best performing model among those that satisfy a chosen notion of predictive fairness (e.g. \cite{ZemelEtAl(13)}, \cite{agarwal2018reductions}, \cite{AgarwalEtAl(19)-FairRegression}). 
However, for real-world settings where a risk assessment is already in use, practitioners and auditors may instead want to assess disparities with respect to the current model, which we term the \textit{benchmark model}. For example, the benchmark model for a bank may be an existing credit score used to approve loans. The relevant question for practitioners is: Can we improve upon the benchmark model in terms of predictive fairness with minimal change in overall accuracy?


We explore this question through the lens of the ``Rashomon Effect,'' a common empirical phenomenon whereby multiple models perform similarly overall but differ markedly in their predictions for individual cases \cite{Breiman(01)}. These models may perform differently over various groups, and therefore have different predictive fairness properties \cite{ChouldechovaGsell(17)}. We propose an algorithm, Fairness in the Rashomon Set (FaiRS), to probe predictive fairness properties over the set of models that perform similarly to a chosen benchmark model. We refer to this set as \textit{the set of good models} \cite{DongRudin(20)}. \fairs is designed to efficiently answer the following questions: What are the range of predictive disparities that could be generated over the set of good models? What is the disparity minimizing model within the set of good models?

A key empirical challenge in domains such as credit lending is that outcomes are not observed for all cases \cite{lakkaraju2017selective, kleinberg2018human}. 
This \emph{selective labels problem} is particularly vexing in the context of assessing predictive fairness. Our framework addresses the challenges of selectively labelled data in contexts where the selection decision and outcome are unconfounded given the observed data features.

Our methods are useful for legal audits of disparate impact. In various domains, decisions that generate disparate impact must be justified by ``business necessity" \cite{civil-rights-act, ECOA, BarocasSelbst2016}. For instance, financial regulators investigate whether credit lenders could have offered more loans to minority applicants without affecting default rates \cite{gillis}.
Our methods provide one possible formalization of the business necessity criteria.
An auditor can use \fairs to assess whether there exists an alternative model that reduces predictive disparities without compromising performance relative to the benchmark model. If possible, then it is difficult to justify the benchmark model on the grounds of business necessity.

Our methods can also be a useful tool for decision makers who want to improve upon an existing model. A decision maker may use \fairs to search for a prediction function that reduces predictive disparities without compromising performance relative to the benchmark model.
We emphasize that the effective usage of our methods requires careful thought about the broader social context surrounding the setting of interest \cite{SelbstEtAl(19), HolsteinEtAl(19)}.

\textbf{Contributions}: We (1) develop an algorithmic framework, Fairness in the Rashomon Set (FaiRS), to investigate predictive disparities over the set of good models; (2) provide theoretical guarantees on the generalization error and predictive disparities of \fairs [\textsection~\ref{section: reductions approach}]; (3) propose a variant of FaiRS that addresses the selective labels problem and achieves the same guarantees under oracle access to the outcome regression function [\textsection~\ref{section: selective labels}]; (4) use \fairs on a selectively labelled credit-scoring dataset to build a model with lower predictive disparities than the benchmark model [\textsection~\ref{section: consumer lending application}]; and (5) use \fairs to audit the COMPAS risk assessment, finding that it generates larger predictive disparities between black and white defendants than any model in the set of good models [\textsection~\ref{section: recidivism application}]. 

\section{Background and Related Work} \label{sec:lit}

\subsection{Rashomon Effect}
In a seminal paper on statistical modeling, \cite{Breiman(01)} observed that often a multiplicity of good models achieve similar accuracy by relying on different features, which he termed the ``Rashomon effect.” 
These models  may differ along key dimensions, and recent work considers the implications of the Rashomon effect for model simplicity, interpretability, and explainability
\cite{Rudin(19), FisherEtAl(19), MarxEtAl(19), DongRudin(20), SemenovaEtAl(20)}. 
Focusing on algorithmic fairness, 
we develop techniques to investigate the range of predictive disparities that may be generated over the set of good models. 

\subsection{Fair Classification and Fair Regression}

An influential literature on fair classification and fair regression constructs prediction functions that minimize loss subject to a predictive fairness constraint chosen by the decision maker \cite{Dwork2012, ZemelEtAl(13), hardt2016equality, MenonWilliamson(18), DoniniEtAl(18), agarwal2018reductions, AgarwalEtAl(19)-FairRegression, ZafarEtAl(19)}. In contrast, we construct prediction functions that minimize a chosen measure of predictive disparities subject to a constraint on overall performance. This is useful when decision makers find it difficult to specify acceptable levels of predictive disparities, but instead know what performance loss is tolerable. It may be unclear, for instance, how a lending institution should specify acceptable differences in credit risk scores across groups, but the lending institution can easily specify an acceptable average default rate among approved loans. Similar in spirit to our work, \cite{ZafarEtAl(19)} provide a method for selecting a classifier that minimizes a particular notion of predictive fairness, ``decision boundary covariance,'' subject to a performance constraint. Our method applies more generally to a large class of predictive disparities and covers both classification and regression tasks.

While originally developed to solve fair classification and fair regression problems, we show that the ``reductions approach'' used in \cite{agarwal2018reductions, AgarwalEtAl(19)-FairRegression} can be suitably adapted to solve general optimization problems over the set of good models. This provides a general computational approach that may be useful for investigating the implications of the Rashomon Effect for other model properties. 

In constructing the set of good models with comparable performance to a benchmark model, our work bears resemblance to techniques that ``post-process'' existing models.
Post-processing techniques typically modify the predictions from an existing model to achieve a target notion of fairness \cite{hardt2016equality, PleissEtAl(17), KimEtAl(19)}. By contrast, our methods only use the existing model to calibrate the performance constraint, but need not share any other properties with the benchmark model. While post-processing techniques require access to individual predictions from the benchmark model, our approach only requires that we know its average loss.

\subsection{Selective Labels and Missing Data}
In settings such as criminal justice and credit lending, the training data only contain labeled outcomes for a selectively observed sample from the full population of interest. This is a missing data problem \cite{little2019statistical}. Because the outcome label is missing based on a selection mechanism, this type of missing data is known as the \emph{selective labels problem} \cite{lakkaraju2017selective, kleinberg2018human}. One solution treats the selectively labelled population as if it were the population of interest, and proceeds with training and evaluation on the selectively labelled population only. This is also called the ``\emph{known good-bad}'' (KGB) approach \cite{ zeng2014rule, nguyen2016reject}. However, evaluating a model on a population different than the one on which it will be used can be highly misleading, particularly with regards to predictive fairness measures \cite{kallus2018residual, CostonEtAl(20)}. Unfortunately, most fair classification and fair regression methods do not offer modifications to address the selective labels problem, whereas our framework does.

Popular in credit lending applications, ``reject inference'' procedures incorporate information from the selectively unobserved cases (i.e., rejected applicants) in model construction and evaluation by imputing missing outcomes using augmentation, reweighing or extrapolation-based approaches \cite{LiEtAl(20), mancisidor2020deep}. 
These approaches are similar to domain adaptation techniques, and indeed 
the selective labels problem can be cast as domain adaptation since the labelled training data is not sampled from the target distribution. Most relevant to our setting are covariate shift methods for domain adaptation. Reweighing procedures have been proposed for jointly addressing covariate shift and fairness \cite{coston2019fair, SinghViolations}. While \fairs similarly uses iterative reweighing to solve our joint optimization problem, we explicitly use extrapolation to address covariate shift.
Empirically we find extrapolation can achieve lower disparities than reweighing.

\section{Setting and Problem Formulation}\label{section: set-up}
The population of interest is described by the random vector $(X_i, A_i, D_i, Y_i^*) \sim P$, where $X_i \in \cX$ is a feature vector, $A_i \in \{0,1\}$ is a protected or sensitive attribute, $D_i \in \cD$ is the decision and $Y_i^* \in \cY \subseteq [0, 1]$ is a discrete or continuous outcome. The training data consist of $n$ i.i.d. draws from the joint distribution $P$ and may suffer from a \textit{selective labels problem}: There exists $\cD^* \subseteq \cD$ such that the outcome is observed if and only if the decision satisfies $D_i \in \cD^*$. Hence, the training data are $\{(X_i, A_i, D_i, Y_i)\}_{i=1}^{n}$, where $Y_i = Y_i^* 1\{D_i \in \cD^*\})$ is the \textit{observed outcome} and $1\{\cdot\}$ denotes the indicator function.

Given a specified set of prediction functions $\cF$ with elements $f \colon \cX \rightarrow [0, 1]$, we search for the prediction function $f \in \cF$ that minimizes or maximizes a measure of predictive disparities with respect to the sensitive attribute subject to a constraint on predictive performance. We measure performance using average loss, where $l \colon \cY \times [0, 1]\rightarrow [0, 1]$ is the loss function and $\loss(f) := \e\left[ l(Y_i^*, f(X_i)) \right]$. The loss function is assumed to be $1$-Lipshitz under the $l_1$-norm following \cite{AgarwalEtAl(19)-FairRegression}. The constraint on performance takes the form $\loss(f) \leq \epsilon$ for some specified \textit{loss tolerance} $\epsilon \geq 0$. The set of prediction functions satisfying this constraint is the \textit{set of good models}.

The loss tolerance may be chosen based on an existing benchmark model $\tilde f$ such as an existing risk score, e.g., by setting $\epsilon = (1 + \delta) \loss(\tilde f)$ for some $\delta \in [0, 1]$. The set of good models now describes the set of models whose performance lies within a $\delta$-neighborhood of the benchmark model. When defined in this manner, the set of good models is also called the ``Rashomon set'' \cite{Rudin(19), FisherEtAl(19), DongRudin(20), SemenovaEtAl(20)}.

\subsection{Measures of Predictive Disparities}
We consider measures of predictive disparity of the form
    \begin{equation}\label{equation: definition of disparity}
        \disp(f) := \beta_0 \e\left[ f(X_i) | \cE_{i,0} \right] + \beta_1 \e\left[ f(X_i) | \cE_{i,1} \right],
    \end{equation}
where $\cE_{i,a}$ is a group-specific conditioning event that depends on $(A_i, Y_i^*)$ and $\beta_a \in \reals$ for $a \in \{0, 1\}$ are chosen parameters. Note that we measure predictive disparities over the \textit{full} population (i.e., not conditional on $D_i$).

For different choices of the conditioning events $\cE_{i,0}, \cE_{i,1}$ and parameters $\beta_0, \beta_1$, our predictive disparity measure summarizes violations of common definitions of predictive fairness.

\begin{definition} \label{definition: Statistical parity}
    \textbf{Statistical parity} (SP) requires the prediction $f(X_i)$ to be independent of the attribute $A_i$ \cite{Dwork2012, ZemelEtAl(13), FeldmanEtAl(15)}. By setting $\cE_{i,a} = \{A_i = a\}$ for $a \in \{0, 1\}$ and ${\beta_0 = -1},$ ${\beta_1 = 1}$, $\disp(f)$ measures the difference in average predictions across values of the sensitive attribute.
\end{definition} 

\begin{definition}
\label{definition: balance for positive and negative class}
Suppose $\cY = \{0, 1\}$. \textbf{Balance for the positive class} (BFPC) and \textbf{balance for the negative class} (BFNC) requires the prediction $f(X_i)$ to be independent of the attribute $A_i$ conditional on $Y_i^* = 1$ and $Y_i^* = 0$ respectively (e.g., Chapter 2 of \cite{barocas-hardt-narayanan}). Defining $\cE_{i,a} = \{Y_i^* = 1, A_i = a\}$ for $a \in \{0, 1\}$ and $\beta_0 = -1, \beta_1 = 1$, $\disp(f)$ describes the difference in average predictions across values of the sensitive attribute given $Y_i^* = 1$. If instead $\cE_{i,a} = \{Y_i^* = 0, A_i = a\}$ for $a \in \{0, 1\}$, then $\disp(f)$ equals the difference in average predictions across values of the sensitive attribute given $Y_i^* = 0$. 
\end{definition}

Our focus on differences in average predictions across groups is a common relaxation of parity-based predictive fairness definitions \cite{CorbettDaviesEtAl2017, MitchellEtAl(19)}.

Our predictive disparity measure can also be used for \textit{fairness promoting interventions}, which aim to increase opportunities for a particular group. 
For instance, the decision maker may wish to search for the prediction function among the set of good models that minimizes the average predicted risk score $f(X_i)$ for a historically disadvantaged group.

\begin{definition}\label{definition: affirmative action and qualified affirmative action}
    Defining $\cE_{i,1} = \{A_i = 1\}$ and $\beta_0 = 0, \beta_1 = 1$, $\disp(f)$ measures the average risk score for the group with $A_i = 1$. This is an \textbf{affirmative action}-based fairness promoting intervention. Further assuming $\cY = \{0, 1\}$ and defining $\cE_{i,1} = \{Y_i^* = 1, A_i = 1\}$, $\disp(f)$ measures the average risk score for the group with both $Y_i^* = 1$, $A_i = 1$. This is a \textbf{qualified affirmative action}-based fairness promoting intervention.
\end{definition}

Our approach can accommodate other notions of predictive disparities. For instance, in the Supplement, we show how to achieve \textit{bounded group loss}, which requires that the average loss conditional on each value of the sensitive attribute reach some threshold \cite{AgarwalEtAl(19)-FairRegression}. 

\subsection{Characterizing Predictive Disparities over the Set of Good Models}
We develop the algorithmic framework, Fairness in the Rashomon Set (\fairs), to solve two related problems over the set of good models. First, we characterize the range of predictive disparities by minimizing or maximizing the predictive disparity measure over the set of good models. We focus on the minimization problem 
    \begin{equation}\label{equation: disparity range problem}
        \min_{f \in \cF} \, \disp(f) \mbox{ s.t. } \loss(f) \leq \epsilon.
    \end{equation}
Second, we search for the prediction function that minimizes the absolute predictive disparity over the set of good models
    \begin{equation}\label{equation: absolute disparity min problem}
        \min_{f \in \cF} \, \left| \disp(f) \right| \mbox{ s.t. } \loss(f) \leq \epsilon. 
    \end{equation}
For auditors, (\ref{equation: disparity range problem}) traces out the range of predictive disparities that \textit{could} be generated in a given setting, thereby identifying where the benchmark model lies on this frontier. This is crucially related to the legal notion of ``business necessity'' in assessing disparate impact -- the regulator may audit whether there exist alternative prediction functions that achieve similar performance yet generate different predictive disparities \cite{civil-rights-act, ECOA, BarocasSelbst2016}. For decision makers, (\ref{equation: absolute disparity min problem}) searches for prediction functions that reduce absolute predictive disparities without compromising predictive performance. 

\section{A Reductions Approach to Optimizing over the Set of Good Models}\label{section: reductions approach}
We characterize the range of predictive disparities (\ref{equation: disparity range problem}) and find the absolute predictive disparity minimizing model (\ref{equation: absolute disparity min problem}) over the set of good models using techniques inspired by the reductions approach in \cite{agarwal2018reductions, AgarwalEtAl(19)-FairRegression}. Although originally developed to solve fair classification and fair regression problems in the case without selective labels, we show that the reductions approach can be appropriately modified to solve general optimization problems over the set of good models in the presence of selective labels.
For exposition, we first focus on the case without selective labels, where $\cD^* = \cD$ and the outcome $Y_i^*$ is observed for all observations. We solve (\ref{equation: disparity range problem}) in the main text and (\ref{equation: absolute disparity min problem}) in \textsection~\ref{section: computing abs disp min model} of the Supplement. We cover selective labels in \textsection~\ref{section: selective labels}.

\subsection{Computing the Range of Predictive Disparities}\label{section: range of predictive disparities}

We consider randomized prediction functions that select $f \in\cF$ according to some distribution $Q \in \Delta(\cF)$ where $\Delta$ denotes the probability simplex. Let $\loss(Q) := \sum_{f \in \cF} Q(f) \loss(f)$ and $\disp(Q) := \sum_{f \in \cF} Q(f) \disp(f)$. We solve
    \begin{equation}\label{equation: disp range, randomizaton predictor problem}
        \min_{Q \in \Delta(\cF)} \disp(Q) \mbox{ s.t. } \loss(Q) \leq \epsilon.
    \end{equation}
    
While it may be possible to solve this problem directly for certain parametric function classes, we develop an approach that can be applied to any generic function class.\footnote{Our error analysis only covers function classes whose Rademacher complexity can be bounded as in Assumption~\ref{assumption: error analysis for exp grad extremes}.}
A key object for doing so will be classifiers obtained by thresholding prediction functions. For cutoff $z \in [0, 1]$, define $h_f(x, z) = 1\{f(x) \geq z\}$ and let $\cH := \{h_f : f \in \cF\}$ be the set of all classifiers obtained by thresholding prediction functions $f \in \cF$. We first reduce the optimization problem (\ref{equation: disp range, randomizaton predictor problem}) to a constrained classification problem through a discretization argument, and then solve the resulting constrained classification problem through a further reduction to finding the saddle point of a min-max problem.

Following the notation in \cite{AgarwalEtAl(19)-FairRegression}, we define a discretization grid for $[0, 1]$ of size $N$ with $\alpha := 1/N$ and $\cZ_{\alpha} := \{j \alpha \colon j = 1, \hdots, N\}$. Let $\tilde \cY_{\alpha}$ be an $\frac{\alpha}{2}$-cover of $\cY$. The piecewise approximation to the loss function is $l_{\alpha}(y, u) := l(\underline{y}, [u]_\alpha + \frac{\alpha}{2})$, where $\underline{y}$ is the smallest $\tilde y \in \tilde \cY_\alpha$ such that $|y - \tilde y|\leq \frac{\alpha}{2}$ and $[u]_\alpha$ rounds $u$ down to the nearest integer multiple of $\alpha$. For a fine enough discretization grid, $\loss_\alpha(f) := \e\left[l_\alpha(Y_i^*, f(X_i)) \right]$ approximates $\loss(f)$.

Define $c(y, z) := N \times \left( l(y, z + \frac{\alpha}{2}) - l(y, z - \frac{\alpha}{2}) \right)$ and $Z_\alpha$ to be the random variable that uniformly samples $z_\alpha \in \cZ_\alpha$ and is independent of the data $(X_i, A_i, Y_i^*)$. For $h_f \in \cH$, define the cost-sensitive average loss function as $\cost(h_f) := \e\left[ c(\underline{Y}^*_i, Z_\alpha) h_f(X_i, Z_\alpha) \right]$. Lemma 1 in \cite{AgarwalEtAl(19)-FairRegression} shows $\cost(h_f) + c_0 = \loss_\alpha(f)$ for any $f \in \cF$, where $c_0 \geq 0$ is a constant that does not depend on $f$. Since $\loss_\alpha(f)$ approximates $\loss(f)$, $\cost(h_f)$ also approximates $\loss(f)$. For $Q \in \Delta(\cF)$, define $Q_h \in \Delta(\cH)$ to be the induced distribution over threshold classifiers $h_f$. By the same argument, $\cost(Q_h) + c_0 = \loss_\alpha(Q)$, where $\cost(Q_h) := \sum_{h_f \in \cH} Q_h(h) \cost(h_f)$ and $\loss_\alpha(Q)$ is defined analogously.

We next relate the predictive disparity measure defined on prediction functions to a predictive disparity measure defined on threshold classifiers. Define $\disp(h_f) := \beta_0 \e\left[ h_f(X_i, Z_\alpha) \mid \cE_{i,0} \right] + \beta_1 \e\left[ h_f(X_i, Z_\alpha) \mid \cE_{i,1} \right].$
\begin{lemma}\label{lemma: disparity classifier relates to disparity prediction function}
    Given any distribution over $(X_i, A_i, Y_i^*)$ and $f \in \cF$, $\left| \disp(h_f) - \disp(f) \right| \leq \left(|\beta_0| + |\beta_1|\right) \alpha$.
\end{lemma}
\noindent Lemma \ref{lemma: disparity classifier relates to disparity prediction function} combined with Jensen's Inequality imply ${|\disp(Q_h) - \disp(Q)| \leq \left(|\beta_0| + |\beta_1|\right) \alpha}$.

Based on these results, we approximate (\ref{equation: disp range, randomizaton predictor problem}) with its analogue over threshold classifiers
    \begin{equation}\label{equation: discretized problem for extremes}
        \min_{Q_h \in \Delta(\cH)} \disp(Q_h) \mbox{ s.t. } \cost(Q_h) \leq \epsilon - c_0.
    \end{equation}
We solve the sample analogue in which we minimize $\widehat{\disp}(Q_h)$ subject to $\widehat{\cost}(Q_h) \leq \hat{\epsilon}$, where $\hat{\epsilon} := \epsilon - \hat{c}_0$ plus additional slack, and $\hat{c}_0, \widehat{\disp}(Q_h)$, $\widehat{\cost}(Q_h)$ are the associated sample analogues. We form the Lagrangian $L(Q_h, \lambda) := \widehat{\disp}(Q_h) + \lambda (\widehat{\cost}(Q_h) - \hat{\epsilon})$ with primal variable $Q_h \in \Delta(\cH)$ and dual variable $\lambda \in \reals^{+}$. Solving the sample analogue is equivalent to finding the saddle point of the min-max problem $ \min_{Q_h \in \Delta(\cH)} \max_{0 \leq \lambda \leq B_\lambda} L(Q_h, \lambda)$, where $B_\lambda \geq 0$ bounds the Lagrange multiplier. We search for the saddle point by adapting the exponentiated gradient algorithm used in \cite{agarwal2018reductions, AgarwalEtAl(19)-FairRegression}. The algorithm delivers a $\nu$-approximate saddle point of the Lagrangian, denoted $(\hat{Q}_h, \hat{\lambda})$. Since it is standard, we provide the details of and the pseudocode for the exponentiated gradient algorithm in \textsection~\ref{sec: exp grad for range} of the Supplement.

\subsection{Error Analysis}

The suboptimality of the returned solution $\hat{Q}_h$ can be controlled under conditions on the complexity of the model class $\cF$ and how various parameters are set. 

\begin{assumption}\label{assumption: error analysis for exp grad extremes}
    Let $R_n(\cH)$ be the Radermacher complexity of $\cH$. There exists constants $C, C^\prime, C^{\prime \prime} > 0$ and $\phi \leq 1/2$ such that $R_n(\cH) \leq C n^{-\phi}$ and $\hat{\epsilon} = \epsilon - \hat{c}_0 + C^\prime n^{-\phi} - C^{\prime \prime} n^{-1/2}$.
\end{assumption}

\begin{theorem}\label{theorem: error analysis for exp grad extremes}
Suppose Assumption \ref{assumption: error analysis for exp grad extremes} holds for $C' \geq 2C + 2 + \sqrt{2 \ln(8 N/\delta)}$ and $C^{\prime \prime} \geq \sqrt{\frac{-\log(\delta / 8)}{2}}$. Let $n_0, n_1$ denote the number of samples satisfying the events $\cE_{i,0}, \cE_{i,1}$ respectively.

Then, the exponentiated gradient algorithm with $\nu \propto n^{-\phi}$, $B_\lambda \propto n^\phi$ and $N \propto n^\phi$ terminates in $O(n^{4\phi})$ iterations and returns $\hat{Q}_h$, which when viewed as a distribution over $\cF$, satisfies with probability at least $1-\delta$ one of the following: 1) $\Qhat_h \neq null$, $\loss(\hat{Q}_h) \leq \epsilon + \tilde{O}(n^{-\phi})$ and $\disp(\hat{Q}_h) \leq \disp(\tilde{Q}) + \tilde{O}(n_0^{-\phi}) + \tilde{O}(n_1^{-\phi})$ for any $\tilde{Q}$ that is feasible in (\ref{equation: disp range, randomizaton predictor problem}); or 2) $\Qhat_h = null$ and (\ref{equation: disp range, randomizaton predictor problem}) is infeasible.\footnote{The notation $\tilde{O}(\cdot)$ suppresses polynomial dependence on $\ln(n)$ and $\ln(1/\delta)$}
\end{theorem}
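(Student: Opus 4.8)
The plan is to decompose the suboptimality and constraint violation of the returned solution into three sources—discretization, statistical generalization, and optimization error—bound each, and then combine them while resolving the feasibility dichotomy through the slack calibrated into $\epsilonhat$. First I would control the discretization error that lets us pass between the original problem~(\ref{equation: disp range, randomizaton predictor problem}) over prediction functions and its threshold-classifier analogue~(\ref{equation: discretized problem for extremes}). Lemma~\ref{lemma: disparity classifier relates to disparity prediction function} already gives $|\disp(h_f)-\disp(f)| \le (|\beta_0|+|\beta_1|)\alpha$, and Lemma~1 of \cite{AgarwalEtAl(19)-FairRegression} gives $\cost(h_f)+c_0 = \loss_\alpha(f)$ with $\loss_\alpha$ approximating $\loss$ up to $O(\alpha)$. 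Setting $N \propto n^\phi$ so that $\alpha = 1/N \propto n^{-\phi}$ makes both approximation errors $\tilde{O}(n^{-\phi})$, and the same two lemmas let me translate the returned $\Qhat_h \in \Delta(\cH)$ back to a distribution over $\cF$ at the same controlled cost.

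Next I would establish uniform generalization bounds. For the cost, standard Rademacher-based uniform convergence over $\cH$ together with the assumption $R_n(\cH) \le C n^{-\phi}$ controls $\sup_{Q_h}|\costhat(Q_h)-\cost(Q_h)|$ at order $n^{-\phi}$, with an additive concentration term of order $n^{-1/2}$; the range of the cost is bounded because $c(y,z)=N(l(y,z+\tfrac{\alpha}{2})-l(y,z-\tfrac{\alpha}{2}))$ is controlled by the Lipschitz assumption on $l$. For the disparity, each term $\beta_a \e[h_f(X_i,Z_\alpha)\mid\cE_{i,a}]$ is a conditional mean over a group event, so its empirical version concentrates at a rate governed by the group count $n_a$, yielding $\tilde{O}(n_0^{-\phi})+\tilde{O}(n_1^{-\phi})$. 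A union bound over the grid $\cZ_\alpha$ (size $N$) and the two groups, together with the concentration of $\hat{c}_0$, produces the $\ln(8N/\delta)$ factor appearing in the constants; this is exactly why Assumption~\ref{assumption: error analysis for exp grad extremes} requires $C' \ge 2C+2+\sqrt{2\ln(8N/\delta)}$ and $C''\ge\sqrt{-\log(\delta/8)/2}$.

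Then I would handle the optimization error and iteration count. The exponentiated gradient routine returns a $\nu$-approximate saddle point $(\Qhat_h,\lambdahat)$ of the Lagrangian $L$; its convergence analysis (as in \cite{agarwal2018reductions, AgarwalEtAl(19)-FairRegression}) gives termination in $O(B_\lambda^2/\nu^2)$ iterations up to logarithmic factors, so $B_\lambda \propto n^\phi$ and $\nu \propto n^{-\phi}$ yield the claimed $O(n^{4\phi})$ bound. Choosing $\nu \propto n^{-\phi}$ also makes the optimization contribution to suboptimality $\tilde{O}(n^{-\phi})$, and boundedness of the dual variable by $B_\lambda$ converts the approximate complementary slackness into a constraint-violation bound $\costhat(\Qhat_h) \le \epsilonhat + \tilde{O}(n^{-\phi})$. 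Combining the three bounds then gives the dichotomy: the slack in $\epsilonhat = \epsilon-\hat{c}_0+C'n^{-\phi}-C''n^{-1/2}$ is chosen so that, with probability at least $1-\delta$, every population-feasible $\tilde{Q}$ is feasible in the empirical problem (so the empirical optimum has disparity at most $\disp(\tilde Q)$), while if the empirical problem is infeasible the algorithm returns null and~(\ref{equation: disp range, randomizaton predictor problem}) must be infeasible, giving case~2. In case~1, translating $\costhat(\Qhat_h)\le\epsilonhat$ back through the cost-to-loss relation and the cost generalization bound yields $\loss(\Qhat_h)\le\epsilon+\tilde{O}(n^{-\phi})$, and chaining the discretization, generalization, and optimization disparity bounds yields $\disp(\Qhat_h)\le\disp(\tilde Q)+\tilde{O}(n_0^{-\phi})+\tilde{O}(n_1^{-\phi})$.

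The hard part will be calibrating the slack so the feasibility dichotomy is simultaneously sound in both directions: the slack must be large enough that a population-feasible solution is never wrongly rejected as infeasible, yet small enough that the returned model's loss exceeds $\epsilon$ by only $\tilde{O}(n^{-\phi})$. This is what forces the cost generalization bound and the concentration of $\hat{c}_0$ to hold uniformly with the precise constants in Assumption~\ref{assumption: error analysis for exp grad extremes}. A secondary subtlety is that the disparity error must be expressed in the per-group counts $n_0,n_1$ rather than $n$, which requires the two conditional concentration arguments to be carried out and union-bounded separately.
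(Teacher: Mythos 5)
Your proposal is correct and follows essentially the same route as the paper's proof: the same three-way decomposition (discretization via Lemma~\ref{lemma: disparity classifier relates to disparity prediction function} and Lemma~1 of \cite{AgarwalEtAl(19)-FairRegression} with $\alpha = 1/N \propto n^{-\phi}$, group-conditional Rademacher concentration invoked separately on $\cE_{i,0}$ and $\cE_{i,1}$ to get the $n_0, n_1$ rates, Hoeffding for $\hat{c}_0$, and the saddle-point solution-quality lemmas converting $\nu$-approximate optimality and the $B_\lambda$ bound into the empirical disparity and cost-violation bounds), combined with the same slack-calibration argument that population feasibility implies empirical feasibility with probability $1-\delta$. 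One small directional slip worth fixing: it is not that empirical infeasibility forces a null return, but rather that empirical feasibility guarantees a non-null return passing the cost check (Lemma~\ref{lemma: lemma 3 of fair classification, extremes}), so null implies the empirical problem is infeasible and hence, via the slack, the population problem is infeasible with high probability, while a non-null return under population infeasibility makes case~1 hold vacuously.
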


Theorem \ref{theorem: error analysis for exp grad extremes} shows that the returned solution $\hat{Q}_h$ is approximately feasible and achieves the lowest possible predictive disparity up to some error. Infeasibility is a concern if no prediction function $f \in \cF$ satisfies the average loss constraint. Assumption~\ref{assumption: error analysis for exp grad extremes} is satisfied for instance under LASSO and ridge regression. If Assumption 1 does not hold, \fairs delivers good solutions to the sample analogue of Eq.~\ref{equation: discretized problem for extremes} (see Supplement \textsection~\ref{section: solution quality for algorithm exp grad}).

A practical challenge is that the solution returned by the exponentiated gradient algorithm $\hat{Q}_h$ is a stochastic prediction function with possibly large support. Therefore it may be difficult to describe, time-intensive to evaluate, and memory-intensive to store. Results from \cite{CotterEtAl(19)-ALT} show that the support of the returned stochastic prediction function may be shrunk while maintaining the same guarantees on its performance by solving a simple linear program. 
The linear programming reduction reduces the stochastic prediction function to have at most two support points and we use this linear programming reduction in our empirical work (see \textsection~\ref{section: linear program reduction} of the Supplement for details).

\section{Optimizing Over the Set of Good Models Under Selective Labels} \label{section: selective labels}

We now modify the reductions approach to the empirically relevant case in which the training data suffer from the selective labels problem, whereby the outcome $Y_i^*$ is observed only if $D_i \in \cD^*$ with $\cD^* \subset \cD$.
The main challenge concerns evaluating model properties over the target population when we only observe labels for a selective (i.e., biased) sample.
We propose a solution that uses outcome modeling, also known as extrapolation, to estimate these properties.

To motivate this approach, we observe that average loss and measures of predictive disparity (\ref{equation: definition of disparity}) that condition on $Y_i^*$ are not identified under selective labels without further assumptions. 
We introduce the following assumption on the nature of the selective labels problem for the binary decision setting with $\cD = \{0, 1\}$ and $\cD^* = \{1\}$. 
\begin{assumption}\label{assumption: selection on obs and positivity}
    The joint distribution $(X_i, A_i, D_i, Y_i^*) \sim P$ satisfies 1) \textbf{selection on observables}: $D_i \indep Y_i^* \mid X_i$, and 2) \textbf{positivity}: $\p\left(D_i = 1 \mid X_i = x\right) > 1$ with probability one.
\end{assumption}
This assumption is common in causal inference and selection bias settings (e.g., Chapter 12 of \cite{ImbensRubin(15)} and \cite{Heckman(90)})\footnote{Casting this into potential outcomes notation where $Y_i^{d}$ is the counterfactual outcome if decision $d$ were assigned, we define $Y^{0}_i = 0$ and $Y_i^{1} = Y_i^*$ (e.g., a rejected loan application cannot default). The observed outcome $Y_i$ then equals $Y^{1}_i D_i$.} and in covariate shift learning  \cite{moreno2012unifying}.
Under Assumption~\ref{assumption: selection on obs and positivity}, the regression function $\mu(x) := \e[Y_i^* \mid X_i = x]$ is identified as $\e[Y_i \mid X_i, D_i = 1]$, and may be estimated by regressing the observed outcome $Y_i$ on the features $X_i$ among observations with $D_i = 1$, yielding the outcome model $\hat{\mu}(x)$.

We can use the outcome model to estimate loss on the full population. One approach, \emph{Reject inference by extrapolation} (RIE), uses $\hat{\mu}(x)$ as pseudo-outcomes for the unknown observations \cite{crook2004does}.
We consider a second approach,
\emph{Interpolation \& extrapolation} (IE), which uses $\hat{\mu}(x)$ as pseudo-outcomes for \emph{all} applicants, replacing the $\{0,1\}$ labels for known cases with smoothed estimates of their underlying risks.
Letting $n^0$, $n^1$ be the number of observations in the training data with $D_i = 0$, $D_i = 1$ respectively, Algorithms \ref{alg:reject_inference}-\ref{alg:interpolation-extrapolation} summarize the RIE and IE methods. 
If the outcome model could perfectly recover $\mu(x)$, then the IE approach recovers an oracle setting for which the \fairs error analysis 
continues to hold (Theorem \ref{theorem:selective labels extremes} below).

\begin{algorithm}[htbp!]
\KwIn{
    $\{(X_i, Y_i, D_i =1, A_i)\}_{i=1}^{n^1}$,  $\{(X_i, D_i = 0, A_i)\}_{i=1}^{n^0}$ 
}
Estimate $\hat \mu(x)$ by regressing $Y_i \sim X_i \mid D_i = 1$.\\
$\hat{Y}(X_i) \leftarrow (1-D_i) \hat \mu(X_i) + D_i Y_i$ \\
\KwOut{
    $\{(X_i, \hat{Y}_i(X_i), D_i, A_i)\}_{i=1}^{n^1}$,  $\{(X_i, \hat{Y}_i(X_i), D_i, A_i)\}_{i=1}^{n^0}$ 
}
\caption{Reject inference by extrapolation (RIE) for the selective labels setting}
\label{alg:reject_inference}
\end{algorithm}

\begin{algorithm}[htbp!]
\KwIn{
    $\{(X_i, Y_i, D_i =1, A_i)\}_{i=1}^{n^1}$,  $\{(X_i, D_i = 0, A_i)\}_{i=1}^{n^0}$ \\
}
Estimate $\hat \mu(x)$ by regressing $Y_i \sim X_i \mid D_i = 1$.\\
$\hat{Y}(X_i) \leftarrow \hat{\mu}(X_i)$ \\
\KwOut{
    $\{(X_i, \hat{Y}_i(X_i), D_i, A_i)\}_{i=1}^{n^1}$,  $\{(X_i, \hat{Y}_i(X_i), D_i, A_i)\}_{i=1}^{n^0}$
}
\caption{Interpolation and extrapolation (IE) method for the selective labels setting}
\label{alg:interpolation-extrapolation}
\end{algorithm}

Estimating predictive disparity measures  on the full population requires a more general definition of predictive disparity than previously given in Eq.~\ref{equation: definition of disparity}. Define the modified predictive disparity measure over threshold classifiers as
\begin{align}\label{equation: definition of disparity for selective labels}
    \begin{split}
    \disp(h_f) = &\beta_0 \frac{\mathbb{E}\left[ g(X_i,Y_i) h_f(X_i, Z_\alpha) \mid \cE_{i,0} \right]}{\e[g(X_i,Y_i) \mid \cE_{i,0} ]} + \\
&\beta_1 \frac{\mathbb{E}\left[ g(X_i,Y_i)h_f(X_i, Z_\alpha) | \cE_{i,1} \right]}{\e[g(X_i,Y_i) \mid \cE_{i,1} ]},
    \end{split}
\end{align} 
where the nuisance function $g(X_i,Y_i)$ is constructed to identify the measure of interest.\footnote{Note that we state this general form of $g$ to allow $g$ to use $Y_i$ for e.g. doubly-robust style estimates.}
To illustrate, the qualified affirmative action fairness-promoting intervention (Def. \ref{definition: affirmative action and qualified affirmative action}) is identified as $ \mathbb{E}[f(X_i) | Y_i^* = 1, A_i = 1] = \frac{\mathbb{E}[f(X_i) \mu(X_i) | A_i = 1]}{\mathbb{E}[ \mu(X_i) | A_i = 1]}$ under Assumption~\ref{assumption: selection on obs and positivity} (See proof of Lemma~\ref{lemma:disparity concentration selective labels} in the Supplement). This may be estimated by plugging in the outcome model estimate $\hat \mu(x)$. 
Therefore, Eq.~\ref{equation: definition of disparity for selective labels} specifies the qualified affirmative action fairness-promoting intervention by setting $\beta_0 = 0$, $\beta_1 = 1$,  $\cE_{i,1} = 1\left\{A_i =1\right\}$, and $g(X_i, Y_i) = \hat \mu(X_i)$. This more general definition (Eq.~\ref{equation: definition of disparity for selective labels}) is only required for predictive disparity measures that condition on events $\cE$ depending on both $Y^*$ and $A$; It is straightforward 
to compute disparities based on events $\cE$ that only depend on $A$ over the full population. To compute disparities based on events $\cE$ that also depend on $Y^*$, we find the saddle point of the following Lagrangian: ${L(h_f, \lambda) = \hat{\e}\left[ \e_{Z_\alpha}\left[ c_\lambda(\underline{\hat{\mu}}_i, A_i, Z_\alpha) h_f(X_i, Z_\alpha) \right] \right] - \lambda \hat{\epsilon}}$, where we now use case weights ${c_\lambda(\underline{\hat{\mu}}_i, A_i, Z_\alpha) :=}$ ${\frac{\beta_0}{\hat{p}}g(X_i,Y_i) 1\left\{ \cE_{i,0} \right\} + \frac{\beta_1}{\hat{p}}g(X_i,Y_i) 1\left\{ \cE_{i,1} \right\} + \lambda c(\underline{\hat{\mu}}_i, Z_\alpha)}$ and $\hat{p} =\hat{\e}[g(X_i, Y_i)]$. Finally, as before, we find the saddle point using the exponentiated gradient algorithm.

\subsection{Error Analysis under Selective Labels} \label{section:error analysis under selective labels}
Define $\loss_\mu(f) := \e[l(\mu(X_i), f(X_i))]$ for $f \in \cF$ with $\loss_\mu(Q)$ defined analogously for $Q \in \Delta(\cF)$. The error analysis of the exponentiated gradient algorithm continues to hold in the presence of selective labels under oracle access to the true outcome regression function $\mu$.

\begin{theorem}[Selective Labels]\label{theorem:selective labels extremes}
Suppose Assumption \ref{assumption: selection on obs and positivity} holds and the exponentiated gradient algorithm is given as input the modified training data $\{(X_i, A_i, \mu(X_i)\}_{i=1}^{n}$.

Under the same conditions as Theorem \ref{theorem: error analysis for exp grad extremes}, the exponentiated gradient algorithm terminates in $O(n^{4\phi})$ iterations and returns $\hat{Q}_h$, which when viewed as a distribution over $\cF$, satisfies with probability at least $1-\delta$ either one of the following: 1) $\Qhat_h \neq null$, $\loss_\mu(\hat{Q}_h) \leq \epsilon + \tilde{O}(n^{-\phi})$ and $\disp(\hat{Q}_h) \leq \disp(\tilde{Q}) + \tilde{O}(n_0^{-\phi}) + \tilde{O}(n_1^{-\phi})$ for any $\tilde{Q}$ that is feasible in (\ref{equation: disp range, randomizaton predictor problem}); or 2) $\Qhat_h = null$ and (\ref{equation: disp range, randomizaton predictor problem}) is infeasible.
\end{theorem}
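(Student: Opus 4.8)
The plan is to show that, under oracle access to $\mu$, feeding the modified data $\{(X_i, A_i, \mu(X_i))\}_{i=1}^{n}$ to the exponentiated gradient algorithm is equivalent to running the no-selective-labels procedure of Theorem~\ref{theorem: error analysis for exp grad extremes} in a world where the outcome is the deterministic pseudo-outcome $\tilde{Y}_i := \mu(X_i)$, which is now observed for \emph{every} unit (including those with $D_i = 0$). The bulk of the argument is therefore a reduction: I would verify that each ingredient of Theorem~\ref{theorem: error analysis for exp grad extremes} transfers to this relabeled problem, and then isolate the one genuine structural difference --- the ratio form of the disparity measure in Eq.~\ref{equation: definition of disparity for selective labels} --- as the only place requiring new work.

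First I would establish identification. Under Assumption~\ref{assumption: selection on obs and positivity}, $\mu(x) = \e[Y_i^* \mid X_i = x]$ is recovered from the observed data, and both the target loss and any disparity conditioning on $Y_i^*$ are identified through $\mu$. Concretely, I would invoke the identity $\e[f(X_i) \mid Y_i^* = 1, A_i = 1] = \e[f(X_i)\mu(X_i) \mid A_i = 1] / \e[\mu(X_i) \mid A_i = 1]$ (established in the proof of Lemma~\ref{lemma:disparity concentration selective labels}), which shows that the ratio-form disparity with $g = \mu$ and conditioning events depending only on $A_i$ equals the original population disparity. Likewise $\loss_\mu(f) = \e[l(\mu(X_i), f(X_i))]$ is exactly the loss of the relabeled problem, so that Lemma~1 of \cite{AgarwalEtAl(19)-FairRegression} (the cost-sensitive reduction $\cost(h_f) + c_0 = \loss_\alpha(f)$) and Lemma~\ref{lemma: disparity classifier relates to disparity prediction function} (the discretization bound) apply verbatim with $\mu(X_i)$ in the role of $Y_i^*$.

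Next I would handle concentration. For the cost and loss component, uniform convergence of $\widehat{\cost}$ to $\cost$ over $\cH$ via the Rademacher bound of Assumption~\ref{assumption: error analysis for exp grad extremes} is identical to Theorem~\ref{theorem: error analysis for exp grad extremes}, since relabeling the outcome leaves the classifier class $\cH$ unchanged; because $\mu(X_i)$ is available for all $n$ units, this component concentrates at the full-sample rate $\tilde{O}(n^{-\phi})$. The new element is the disparity, which is a ratio rather than a linear functional of $h_f$. Here I would exploit the fact that the Lagrangian $L(h_f, \lambda)$ has already been linearized by folding the denominator estimate $\hat{p} = \hat{\e}[g(X_i, Y_i)]$ into the case weights $c_\lambda$. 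It then suffices to (i) show that $\hat{p}$ concentrates to $p = \e[\mu(X_i) \mid \cE_{i,a}]$ and is bounded away from zero --- using $\mu \in [0,1]$ together with the mild regularity condition that each group carries positive positive-outcome mass --- so that $1/\hat{p} \to 1/p$ at the appropriate rate, and (ii) show that the numerator $\hat{\e}[\mu(X_i) h_f(X_i, Z_\alpha) \mid \cE_{i,a}]$ converges uniformly over $\cH$, again by the Rademacher bound. A standard ratio-concentration argument then delivers the disparity rate $\tilde{O}(n_0^{-\phi}) + \tilde{O}(n_1^{-\phi})$, with the group-specific counts $n_0, n_1$ entering because the relevant complexity is that of $\cH$ restricted to each group.

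The main obstacle is precisely this ratio structure: controlling the empirical ratio uniformly over the (possibly rich) class $\cH$ while keeping the denominator bounded below. Once that bound is secured, the remainder --- the $\nu$-approximate saddle-point guarantee of the exponentiated gradient algorithm, the translation of approximate feasibility and optimality in the discretized $(\cost, \disp)$ problem back to statements about $\loss_\mu$ and $\disp$, and the iteration/termination count $O(n^{4\phi})$ --- is inherited unchanged from the proof of Theorem~\ref{theorem: error analysis for exp grad extremes}, since the oracle assumption $\hat\mu = \mu$ eliminates any outcome-model estimation error and leaves the algorithmic analysis structurally identical.
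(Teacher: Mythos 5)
Your proposal matches the paper's proof essentially step for step: the paper likewise inherits the iteration count and cost bound directly from Theorem~\ref{theorem: error analysis for exp grad extremes} under oracle access to $\mu$, observes the disparity bound is immediate when $\cE_{i,a}$ depends only on $A_i$, and handles the ratio-form disparity via exactly your two-part argument --- uniform concentration of the numerator over the classifier class via the restated Lemma 2 of \cite{AgarwalEtAl(19)-FairRegression} together with Hoeffding bounds keeping the empirical denominator above $\bar{\omega}_{\wedge}/2$ --- packaged as the ratio-concentration result in Lemma~\ref{lemma:disparity concentration selective labels}. The only cosmetic difference is that the paper derives a full-sample rate $\tilde{O}(n^{-\phi})$ and then weakens it to $\tilde{O}(n_0^{-\phi}) + \tilde{O}(n_1^{-\phi})$, whereas you attribute the group-specific rates to group-restricted complexity; both routes yield the stated bound.
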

In practice, estimation error in $\hat{\mu}$ will affect the bounds in Theorem~\ref{theorem:selective labels extremes}. The empirical analysis in the next section finds that our method nonetheless performs well when using $\hat{\mu}$.

\section{Application: Consumer Lending }\label{section: consumer lending application}
Suppose a financial institution wishes to replace an existing credit scoring model with one that has better fairness properties and comparable performance, if such a model exists. 
The following empirical analysis demonstrates how to use \fairs for this task. We use \fairs to find the absolute predictive disparity-minimizing model over the set of good models on a real world consumer lending dataset with selectively labeled outcomes.

We use data from Commonwealth Bank of Australia, a large financial institution in Australia (henceforth, "CommBank"), on a sample of 7,414 personal loan applications submitted from July 2017 to July 2019 by customers that did not have a prior financial relationship with CommBank. A personal loan is a credit product that is paid back with monthly installments and used for a variety of purposes such as purchasing a used car or refinancing existing debt. In our sample, the median personal loan size is AU\$10,000 and the median interest rate is 13.9\% per annum. For each loan application, we observe application-level information such as the applicant's credit score and reported income, whether the application was approved by CommBank, the offered terms of the loan, and whether the applicant defaulted on the loan. There is a selective labels problem as we only observe whether an applicant defaulted on the loan within 5 months ($Y_i$) if the application was funded.\footnote{An application is funded when it is both approved by CommBank and the offered terms were accepted by the applicant}  In our sample, 44.9\% of applications were funded and 2.0\% of funded loans defaulted within 5 months.

Motivated by a decision maker that wishes to reduce credit access disparities across geographic regions, we focus on the task of predicting the likelihood of default $Y^*_i = 1$ based on information in the loan application $X_i$ while limiting predictive disparities across SA4 geographic regions within Australia. SA4 regions are statistical geographic areas defined by the Australian Bureau of Statistics (ABS) and are analogous to counties in the United States. An SA4 region is classified as socioeconomically disadvantaged ($A_i = 1$) if it falls in the top quartile of SA4 regions based on the ABS' Index of Relative Socioeconomic Disadvantage (IRSD), which is an index that aggregates census data related to socioeconomic disadvantage.\footnote{Complete details on the IRSD may be found in \cite{SEIFA(16)} and additional details on the definition of socioeconomic disadvantage are given in \textsection~\ref{section: additional details on the IRSD} of the Supplement.} Applicants from disadvantaged SA4 regions are under-represented among funded applications, comprising 21.7\% of all loan applications, but only 19.7\% of all funded loan applications.


Our experiment investigates the performance of \fairs under our two proposed extrapolation-based solutions to selective labels, RIE and IE (See Algorithms~\ref{alg:reject_inference}), as well as the Known-Good Bad (KGB) approach that uses only the selectively labelled population. 
Because we do not observe default outcomes for all applications, we conduct a semi-synthetic simulation experiment by generating synthetic funding decisions and default outcomes. 
On a $20\%$ sample of applicants, we learn $\pi(x) := \hat{P}(D_i = 1 | X_i = x)$ and $\mu(x) := \hat{P}(Y_i = 1 | X_i = x, D_i =1)$ using random forests. 
We generate synthetic funding decisions $\widetilde D_i$ according to $\widetilde D_i \mid X_i \sim Bernoulli(\pi(X_i))$ and synthetic default outcomes $\widetilde Y^*_i$ according to $\widetilde Y^*_i \mid X_i \sim Bernoulli(\mu(X_i))$. We train all models as if we only knew the synthetic outcome for the synthetically funded applications. We estimate $\hat \mu(x) := \hat{P}(\tilde Y_i = 1 | X_i = x, \tilde D_i =1)$ using random forests and use $\hat \mu(X_i)$ to generate the pseudo-outcomes $\hat Y(X_i)$ for RIE and IE as described in Algorithms~\ref{alg:reject_inference} and ~\ref{alg:interpolation-extrapolation}.
As benchmark models, we use the loss-minimizing linear models learned using KGB, RIE, and IE approaches, whose respective training losses are used to select the corresponding loss tolerances $\epsilon$. 

We compare against the fair reductions approach to classification \href{https://fairlearn.github.io/v0.5.0/api_reference/fairlearn.reductions.html}{(fairlearn)} and the Target-Fair Covariate Shift (TFCS) method. 
TFCS iteratively reweighs the training data via gradient descent on a loss term comprised of the covariate shift-reweighed classification loss and a fairness loss \cite{coston2019fair}.
Fairlearn searches for the loss-minimizing model subject to a fairness parity constraint \cite{agarwal2018reductions}. The fairlearn model is effectively a KGB model since the fairlearn package does not offer modifications for selective labels.\footnote{To accommodate reject inference, a method must support real-valued outcomes. The fairlearn package does not, but the related \emph{fair regressions} method does \cite{AgarwalEtAl(19)-FairRegression}. This is sufficient for statistical parity (Def.~\ref{definition: Statistical parity}), but other parities such as BFPC and BFNC (Def.~\ref{definition: balance for positive and negative class}) require further modifications as discussed in \textsection~\ref{section: selective labels}} We use logistic regression as the base model for both fairlearn and TFCS.
Results are reported on all applicants in a held out test set, and performance metrics are constructed with respect to the synthetic outcome $\tilde Y_i^*$. 

Figure~\ref{fig:selective_disparities} shows the AUC (y-axis) against disparity (x-axis) for the KGB, RIE, IE benchmarks and their \fairs variants as well as the
TFCS models and fairlearn models.
Colors denote the adjustment strategy for selective labels, and the shape specifies the optimization method.
The first row evaluates the models on all applicants in the test set (i.e., the target population).
On the target population, \fairs with reject extrapolation (RIE and IE) reduces disparities while achieving performance comparable to the benchmarks and to the reweighing approach (TFCS). 
It also achieves lower disparities than TFCS, likely because TFCS optimizes a non-convex objective function and may therefore converge to a local minimum. Reject extrapolation achieves better AUC than all KGB models, and only one KGB model (fairlearn) achieves a lower disparity.
The second row evaluates the models on only the \emph{funded} applicants. Evaluation on the funded cases underestimates disparities across the methods and overestimates AUC for the TFCS and KGB models. This underscores the importance of accounting for the selective labels problem in both model construction and evaluation.

\begin{figure}[htbp!]
    \centering
    \includegraphics[scale=0.45]{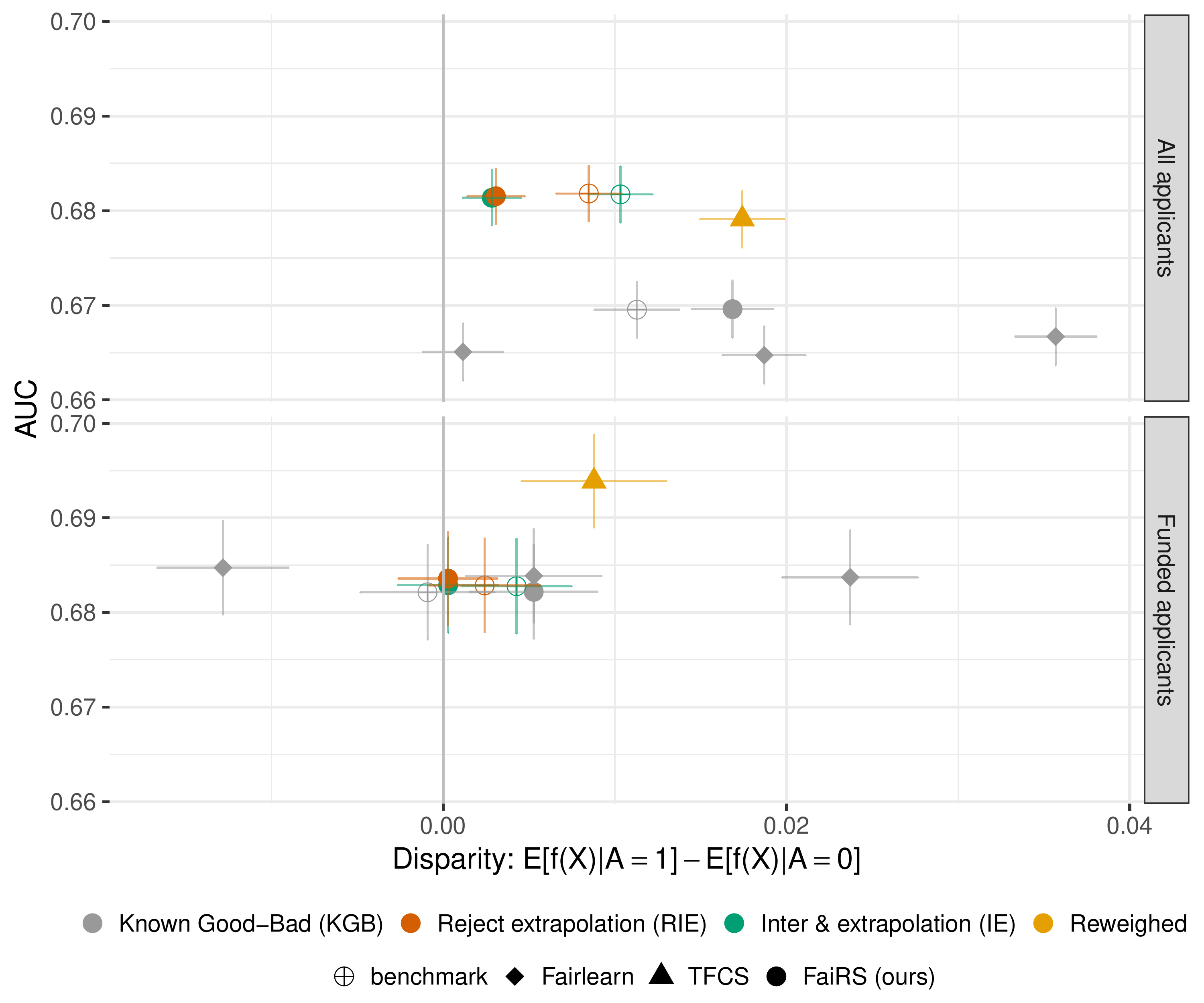}
    \caption{Area under the ROC curve (AUC) with respect to the synthetic outcome against disparity in the average risk prediction for the disadvantaged ($A_i=1$) vs advantaged ($A_i=0$) groups.
    \fairs reduces disparities for the RIE and IE approaches while maintaining AUCs comparable to the benchmark models (first row). 
    Evaluation on only funded applicants (second row) overestimates the performance of TFCS and KGB models and underestimates disparities for all models.
    Error bars show the $95\%$ confidence intervals. See \textsection~\ref{section: consumer lending application} for details.}
    \label{fig:selective_disparities}
\end{figure}

\section{Application: Recidivism Risk Prediction}\label{section: recidivism application}
We next explore the range of disparities over the set of good models in a recidivism risk prediction task applied to ProPublica's COMPAS recidivism data \cite{AngwinEtAl(16)}. Our goal is to illustrate how an auditor may use \fairs to characterize the range of predictive disparities over the set of good models, and examine whether the COMPAS risk assessment generates larger disparities than other competing good models. Such an analysis is a crucial step to assessing legal claims of disparate impact.

COMPAS is a proprietary risk assessment developed by Northpointe (now Equivant) using up to 137 features \cite{Rudin2020Age}. As this data is not publicly available, our audit makes use of ProPublica's COMPAS dataset which contains demographic information and prior criminal history for criminal defendants in Broward County, Florida. Lacking access to the data used to train COMPAS, our set of good models may not include COMPAS itself. Nonetheless, prior work has shown that simple models using age and criminal history perform on par with COMPAS \cite{angelino2018learning}. 
These features will therefore suffice to perform our audit.
A notable limitation of the ProPublica COMPAS dataset is that it does not contain information for defendants who remained incarcerated. 
Lacking both features and outcomes for this group, we proceed without addressing this source of selection bias. We also make no distinction between criminal defendants who had varying lengths of incarceration before release, effectively assuming a null treatment effect of incarceration on recidivism. This assumption is based on findings that a counterfactual audit of COMPAS yields equivalent conclusions \cite{mishler2019modeling}. 

We analyze the range of predictive disparities with respect to race for three common notions of fairness (Definitions \ref{definition: Statistical parity}-\ref{definition: balance for positive and negative class}) among logistic regression models on a quadratic polynomial of the defendant's age and number of prior offenses whose training loss is near-comparable to COMPAS (loss tolerance $\epsilon = 1\%$ of COMPAS training loss).\footnote{We use a quadratic form following the analysis in \cite{Rudin2020Age}.} We split the data 50\%-50\% into a train and test set. Table \ref{table: Compas Race - ref model compas Test} summarizes the range of predictive disparities on the test set. The disparity minimizing and disparity maximizing models over the set of good of models achieve a test loss that is comparable to COMPAS (see \textsection~\ref{section: recidivism additional results} of the Supplement).

\begin{table}[t]
\caption{COMPAS fails an audit of the ``business necessity" defense for disparate impact by race. The set of good models (performing within $1\%$ of COMPAS's training loss) includes models that achieve significantly lower disparities than COMPAS.  The first panel (SP) displays the disparity in average predictions for black versus white defendants (Def. \ref{definition: Statistical parity}). The second panel (BFPC) analyzes the disparity in average predictions for black versus white defendants in the positive class, and the third panel examines the disparity in average predictions for black versus white defendants in the negative class (Def. \ref{definition: balance for positive and negative class}). Standard errors are reported in parentheses. See \textsection~\ref{section: recidivism application} for details.}
\label{table: Compas Race - ref model compas Test}
\vskip 0.05in
\begin{center}
\begin{small}
\begin{sc}
\begin{tabular}{ccccr}
\toprule
& Min. Disp. & Max. Disp.  & COMPAS \\
\midrule
SP & $-$0.060 & 0.120 & 0.194 \\
& (0.004) & (0.007) & (0.013) \\
\midrule 
BFPC & 0.049 & 0.125 & 0.156 \\
& (0.005) & (0.012) & (0.016) \\
\midrule
BFNC & 0.044 & 0.117 & 0.174 \\
& (0.005) & (0.009) & (0.016) \\
\bottomrule
\end{tabular}
\end{sc}
\end{small}
\end{center}
\vskip -0.1in
\end{table}

For each predictive disparity measure, the set of good models includes models that achieve significantly lower disparities than COMPAS. In this sense, COMPAS generates ``unjustified'' disparate impact across groups as there exists competing models in the set of good models that would reduce disparities without compromising performance. Notably, COMPAS' disparities are also larger than the maximum disparity over the set of good models. For example, the difference in COMPAS' average predictions for black relative to white defendants is strictly larger than that of any model in the set of good models (Table \ref{table: Compas Race - ref model compas Test}, SP).  
Interestingly, the minimal balance for the positive class and balance for the negative class disparities between black and white defendants over the set of good models are strictly positive (Table \ref{table: Compas Race - ref model compas Test}, BFPC and BFNC). For example any model whose performance lies in a neighborhood of COMPAS' loss has a higher false positive rate for black defendants than white defendants. This suggests while we can reduce predictive disparities between black and white defendants relative to COMPAS on all measures, we may be unable to eliminate balance for the positive class and balance for the negative class disparities without harming predictive performance. 

In Supplement ~\textsection \ref{section: regression experiments, communities & crime dataset}, we conducted regression experiments on the Communities \& Crime dataset, comparing \fairs against a loss minimizing least squares regression and finding that \fairs improves on statistical parity without compromising performance relative to the benchmark model.

\section{Conclusion}
We develop a framework, Fairness in the Rashomon Set (\fairs), to characterize the range of predictive disparities and find the absolute disparity minimizing model over the set of good models. \fairs is generic, applying to both a large class of prediction functions and a large class of predictive disparities. \fairs is suitable for a variety of applications including settings with selectively labelled outcomes where the selection decision and outcome are unconfounded given the observed features. \fairs can facilitate audits for disparate impact and efficiently search for a more equitable model with performance comparable to a benchmark model. In many settings the set of good models is a rich class, in which models differ substantially in terms of their fairness properties. By leveraging this phenomenon, our approach can reduce predictive disparities without compromising overall performance. 

\clearpage
\singlespacing
\bibliographystyle{unsrt}  
\bibliography{ref.bib} 

\clearpage
\newpage
\appendix

\section{Additional Theoretical Results}\label{section: additional theoretical results}

\subsection{Implementation of the Exponentiated Gradient Algorithm in \textsection~\ref{section: range of predictive disparities}}\label{sec: exp grad for range}
We now provide the details of the exponentiated gradient algorithm discussed in \textsection~\ref{section: range of predictive disparities} for finding the predictive disparity minimizing model within the set of good models. Algorithm \ref{alg: exp grad for fairness frontier, extremes} implements the exponentiated gradient algorithm, except for the best-response functions of the $\lambda$-player and the $Q_h$-player. The best-response function of the $\lambda$-player is
    \begin{equation}
        \best_{\lambda}(Q_h) := \begin{cases}
            0 \mbox{ if } \widehat{\cost}(Q_h) - \hat{\epsilon} \leq 0, \\
            B_\lambda \mbox{ otherwise}.
        \end{cases}
    \end{equation}
The best-response function of the $Q_h$-player may be constructed through a further reduction to cost-sensitive classification. The Lagrangian may be re-written as 
    \begin{equation}
        L(h_f, \lambda) = \hat{\e}\left[ \e_{Z_\alpha}\left[ c_\lambda(\underline{Y}^*_i, A_i, Z_\alpha) h_f(X_i, Z_\alpha) \right] \right] - \lambda \hat{\epsilon},
    \end{equation}
where 
\begin{eqnarray} \label{equation: lambda cost for base case}
 c_\lambda(\underline{Y}^*_i, A_i, Z_\alpha) := \frac{\beta_0}{\hat{p}_0} 1\left\{ \cE_{i,0} \right\} + \frac{\beta_1}{\hat{p}_1} 1\left\{ \cE_{i,1} \right\} + \lambda c(\underline{Y}^*_i, Z_\alpha)
\end{eqnarray}
and $\hat{p}_a := \hat{\e}\left[ \cE_{i,a} \right]$ for $a \in \{0, 1\}$. This is solved by calling cost-sensitive classification oracle on an augmented dataset of size $n \times N$ with observations $\{(X_{i, z_\alpha}, C_{i, z_\alpha}\}_{i \in [n], z_\alpha \in \cZ_\alpha}$ with $X_{i, z_\alpha} = (X_i, z_\alpha)$ and $C_{i, z_\alpha} = c_\lambda(\underline{Y}^*_i, A_i, z_\alpha)$. In our empirical implementation, we use the heuristic least-squares reduction described in \cite{AgarwalEtAl(19)-FairRegression}, which eases the computational burden of the algorithm. The heuristic reduction generally performed well in our empirical work, but performance losses depended on the dataset and the choice of predictive disparity.

\begin{algorithm}[hb!]
\SetAlgoLined
\caption{Algorithm for finding the predictive disparity minimizing model}
\label{alg: exp grad for fairness frontier, extremes}
\KwIn{
    Training data $\{(X_i, Y_i, A_i)\}_{i=1}^{n}$, parameters $\beta_0, \beta_1$, events $\cE_{i,0}, \cE_{i,1}$, empirical loss tolerance $\hat \epsilon$, bound $B_\lambda$, accuracy $\nu$ and learning rate $\eta$.
}
\KwResult{$\nu$-approximate saddle point $(\hat{Q}_{h}, \hat{\lambda})$}
Set $\theta_1 = 0 \in \reals$ \;
\For{$t = 1,2, \hdots$}{ 
    Set $\lambda_{t} = B_\lambda \frac{\exp(\theta_{t})}{1 + \exp(\theta_t)}$\;
    $h_t \leftarrow \best_h(\lambda_t)$\;
    $\hat Q_{h, t} \leftarrow \frac{1}{t} \sum_{s=1}^{t} h_{s}$, $\quad$ $\bar{L} \leftarrow L(\hat{Q}_{h, t}, \best_{\lambda}(\hat{Q}_{h, t})$\;
    $\hat \lambda_t \leftarrow \frac{1}{t} \sum_{s=1}^{t} \lambda_s$, $\quad$ $\underline{L} \leftarrow L(\best_h(\hat{\lambda_t}), \hat{\lambda_t})$\;
    $\nu_t \leftarrow \max\left\{ L(\hat{Q}_{h,t}, \hat{\lambda}_t) - \underline{L}, \bar{L} - L(\hat{Q}_{h, t}, \hat{\lambda}_t) \right\}$\;
    \If{$\nu_t \leq \nu$}{
        \eIf{ $\widehat{\cost}(\Qhat_{h, t}) \leq \hat{\epsilon} + \frac{ |\beta_0| + |\beta_1| + 2\nu }{ B_\lambda }$ }{
            \Return{$(\hat{Q}_{h, t}, \hat{\lambda}_t)$}\;
        }{ 
            \Return{ null }
        }
    }
    Set $\theta_{t+1} = \theta_{t} + \eta \left( \widehat{\cost}(h_t) - \hat \epsilon \right)$\;
} 
\end{algorithm}

\subsection{Shrinking the Support of the Stochastic Risk Score}\label{section: linear program reduction}

As discussed in \textsection~\ref{section: range of predictive disparities}, a key challenge to the practical use of Algorithm \ref{alg: exp grad for fairness frontier, extremes} is it returns a stochastic prediction function $\hat{Q}_h$ with possibly large support. The number of prediction functions in the support of $\hat{Q}_h$ is equal to the total number of iterations taken by the respective algorithm. As a result, $\hat{Q}_h$ may be complex to describe, time-intensive to evaluate, and memory-intensive to store.

The support of the returned stochastic prediction may be shrunk while maintaining the same guarantees on its performance by solving a simple linear program. To do so, we take the set of prediction functions in the support of $\hat{Q}_h$ and solve the following linear program
    \begin{equation}\label{equation: lp shrinkage}
        \min_{p \in \Delta^T} \sum_{t=1}^{T} p_t \widehat{\disp}(h_{t}) \mbox{ s.t. } \sum_{t=1}^{T} p_t \widehat{\cost}(h_t) \leq \hat{\epsilon} + 2\nu, 
    \end{equation}
where $T$ is the number of iterations of Algorithm \ref{alg: exp grad for fairness frontier, extremes}, $\Delta^{T}$ is the $T$-dimensional unit simplex and $h_t$ is the $t$-th prediction function in the support of $\hat{Q}_h$ (i.e., the prediction function constructed at the $t$-th iteration of Algorithm \ref{alg: exp grad for fairness frontier, extremes}). We then use the randomized prediction function that assigns probability $p_t$ to each prediction function in the support of $\hat{Q}_h$. In practice, we calibrate the constraint in (\ref{equation: lp shrinkage}) by choosing the smallest $\nu \geq 0$ such that the linear program has a feasible solution, following the practical recommendations in \cite{CotterEtAl(19)-ALT}. 

Lemma 7 of \cite{CotterEtAl(19)-ALT} shows that the solution to (\ref{equation: lp shrinkage}) has at most $2$ support points and the same performance guarantees as the original solution $\hat{Q}_h$.

\subsection{Computing the Absolute Predictive Disparity Minimizing Model}\label{section: computing abs disp min model}

In this section, we extend the reductions approach to compute the prediction function that minimizes the absolute predictive disparity over the set of good models (\ref{equation: absolute disparity min problem}). We solve
    \begin{equation}\label{equation: absolute disparity min problem, randomized predictor}
        \min_{Q \in \Delta(\cF)} \left| \disp(Q) \right| \mbox{ s.t. } \loss(Q) \leq \epsilon.
    \end{equation}
Through the same discretization argument, this problem may be reduced to a constrained classification problem over the set of threshold classifiers
    \begin{equation}
        \min_{Q_h \in \Delta(\cH)} \left| \disp(Q_h) \right| \mbox{ s.t. } \cost(Q_h) \leq \epsilon - c_0.
    \end{equation}
    
To further deal with the absolute value operator in the objective function, we introduce a slack variable $\xi$ and define the equivalent problem over both $Q_h \in \Delta(\cH)$, $\xi \in \reals$
    \begin{align*}
        \min_{\xi, Q_h \in \Delta(\cH)} \xi \numberthis \label{equation: disparity minimizing, randomized, slack variable} \\
        \mbox{ s.t. } & \disp(Q_h) - \xi \leq 0, \\
        & -\disp(Q_h) - \xi \leq 0, \\
        & \cost(Q_h) \leq \epsilon - c_0.
    \end{align*}
We construct solutions to the empirical analogue of (\ref{equation: disparity minimizing, randomized, slack variable}).
    
Solving the empirical analogue of (\ref{equation: disparity minimizing, randomized, slack variable}) is equivalent to finding the saddle point $\min_{Q_h \in \Delta(\cH), \xi \in [0, B_\xi]} \max_{\|\lambda\| \leq B_\lambda} L(\xi, Q_h, \lambda)$ with Lagrangian $L(\xi, Q_h, \lambda) = \xi + \lambda_{+} \left( \hat \disp(Q_h) - \xi \right) + \lambda_{-} \left(-\hat\disp(Q_h) - \xi \right) + \lambda_{\cost} \left( \costhat(Q_h) - \hat{\epsilon} \right)$, $\lambda = \left(\lambda_{+}, \lambda_{-}, \lambda_{\cost}\right)$ and $B_\xi$ is a bound on the slack variable. Since the absolute predictive disparity is bounded by one, we define $B_\xi = 1$ in practice. We search for the saddle point by treating it as the equilibrium of a two-player zero-sum game in which one player chooses $(\xi, Q_h)$ and the other chooses $\lambda$.

Algorithm \ref{alg: exp grad for fairness frontier, disparity minimizing} computes a $\nu$-approximate saddle point of $L(\xi, Q_h, \lambda)$. The best-response of the $\lambda$-player sets the Lagrange multiplier associated with the maximally violated constraint equal to $B_\lambda$. Otherwise, she sets all Lagrange multipliers to zero if all constraints are satisfied. In order to analyze the best-response of the $(\xi, Q_h)$-player, rewrite the Lagrangian as 
\begin{align*}
    L(\xi, Q_h, \lambda) &= (1-\lambda_+-\lambda_-)\xi \numberthis \\
    & + (\lambda_{+} - \lambda_{-}) \disphat(Q_h) + \lambda_{\cost} (\costhat(Q_h) - \hat{\epsilon}).
\end{align*}
For a fixed value of $\lambda$, minimizing $L(\xi, Q_h, \lambda)$ over $(\xi, Q_h)$ jointly is equivalent to separately minimizing the first term involving $\xi$ and the remaining terms involving $Q_h$. To minimize $(1-\lambda_+-\lambda_-)\xi$, the best-response is to set $\xi = B_\xi$ if $1-\lambda_+-\lambda_- < 0$, and set $\xi = 0$ otherwise. Minimizing 
\begin{equation}
    (\lambda_{+} - \lambda_{-}) \disphat(Q_h) + \lambda_{\cost} (\costhat(Q_h) - \hat{\epsilon})
\end{equation}
over $Q_h$ can be achieved through a reduction to cost-sensitive classification since minimizing the previous display is equivalent to minimizing
    \begin{equation}
        \hat{\e}\left[ \e_{Z_\alpha}\left[ c_\lambda(\underline{Y}^*_i, A_i, Z_\alpha) h_f(X_i, Z_\alpha) \right] \right],
    \end{equation}
where now $c_\lambda(\underline{Y}^*_i, A_i, Z_\alpha) := (\lambda_+ - \lambda_-) \left( \frac{\beta_0}{\hat{p}_0} 1\left\{ \cE_{i,0} \right\} + \frac{\beta_1}{\hat{p}_1} 1\left\{ \cE_{i,1} \right\} \right) + \lambda_{\cost} c(\underline{Y}^*_i, Z_\alpha)$.  

We use an analogous linear program reduction (\textsection~\ref{section: linear program reduction}) to shrink the support of the solution returned by Algorithm \ref{alg: exp grad for fairness frontier, disparity minimizing}.

\begin{algorithm}[htbp!]
\SetAlgoLined
\KwIn{Training data $\{(X_i, Y_i, A_i)\}_{i=1}^{n}$, \\
    Parameters $\beta_0, \beta_1$, Events $\cE_{i,0}, \cE_{i,1}$, and empirical loss tolerance $\hat \epsilon$ \\
    Bounds $B_\lambda$, $B_\xi$, accuracy $\nu$ and learning rate $\eta$
}
\KwResult{$\nu$-approximate saddle point $(\hat \xi, \hat{Q}, \hat{\lambda})$}
Set $\theta_1 = 0 \in \mathbb{R}^{3}$ \;
\For{$t = 1,2, \hdots$}{ 
    Set $\lambda_{t, k} = B_\lambda \frac{\exp(\theta_{t, k})}{1 + \sum_{k'} \exp(\theta_{t,k'})}$ for all $k = \{\cost, +, -\}$\;
    $h_t \leftarrow \best_h(\lambda_t)$, $\quad$ $\xi_t \leftarrow \best_{\xi}(\lambda_t)$ \;
    $\hat Q_{h,t} \leftarrow \frac{1}{t} \sum_{s=1}^{t} h_{s}$, $\quad$ $\hat{\xi}_t \leftarrow \frac{1}{t} \sum_{s=1}^{t} \xi_t$ \; 
    $\bar{L} \leftarrow L(\hat{\xi}_t, \hat{Q}_t, \best_{\lambda}(\hat{\xi}_t. \hat{Q}_t)$\;
    $\hat \lambda_t \leftarrow \frac{1}{t} \sum_{s=1}^{t} \lambda_s$, $\quad$ $\underline{L} \leftarrow L(\best_{\xi}(\lambda_t), \best_h(\hat{\lambda_t}), \hat{\lambda_t})$\;
    $\nu_t \leftarrow \max\left\{ L(\hat{\xi}_t, \hat{Q}_t, \hat{\lambda}_t) - \underline{L}, \bar{L} - L(\hat{\xi}_t, \hat{Q}_t, \hat{\lambda}_t) \right\}$\;
    \If{$\nu_t \leq \nu$}{
        \eIf{$\widehat{\cost}(\Qhat) \leq \hat{\epsilon} + \frac{ B_\xi + 2 \nu }{ B_\lambda }$}{
            \Return{$(\hat{\xi}_t, \hat{Q}_t, \hat{\lambda}_t)$}\;
        }{
            \Return{$null$}\;
        }
    }
    Set $\theta_{t+1} = \theta_{t} + \eta \begin{pmatrix} \widehat{\disp}(h_t) - \xi_t  \\ -\widehat{\disp}(h_t) - \xi_t \\ \widehat{\cost}(h_t) - \hat \epsilon \end{pmatrix} $\;
} 
\caption{Algorithm for finding the absolute predictive disparity minimizing model among the set of good models}
\label{alg: exp grad for fairness frontier, disparity minimizing}
\end{algorithm}

\subsubsection{Error Analysis}

We analyze the suboptimality of the solution returned by Algorithm \ref{alg: exp grad for fairness frontier, disparity minimizing}.

\begin{theorem}\label{theorem: error analysis for exp grad, disp minimizing}
Suppose Assumption \ref{assumption: error analysis for exp grad extremes} holds for $C' \geq 2C + 2 + \sqrt{2 \ln(8 N/\delta)}$ and $C^{\prime \prime} \geq \sqrt{\frac{-\log(\delta / 8)}{2}}$.

Then, Algorithm \ref{alg: exp grad for fairness frontier, disparity minimizing} with $\nu \propto n^{-\phi}, B_\lambda \propto n^\phi, N \propto n^\phi$ terminates in at most $O(n^{4\phi})$ iterations. It returns $\Qhat_h$, which when viewed as a distribution over $\cF$, satisfies with probability at least $1 - \delta$ either one of the following: 1) $\hat{Q}_h \neq null$, $\loss(\Qhat_h) \leq \epsilon + \tilde{O}(n^{-\phi})$ and $\left| \disp(\Qhat_h) \right| \leq \left| \disp(\tilde Q) \right| + \tilde{O}(n_0^{-\phi}) + \tilde{O}(n_1^{-\phi})$ for any $\tilde Q$ that is feasible in (\ref{equation: absolute disparity min problem, randomized predictor}); or 2) $\Qhat_h = null$ and (\ref{equation: absolute disparity min problem, randomized predictor}) is infeasible.
\end{theorem}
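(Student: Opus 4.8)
The plan is to mirror the proof of Theorem~\ref{theorem: error analysis for exp grad extremes}, importing its generalization machinery essentially wholesale and isolating the one genuinely new ingredient: the slack-variable reformulation (\ref{equation: disparity minimizing, randomized, slack variable}) and the three-coordinate dual $\lambda = (\lambda_+, \lambda_-, \lambda_{\cost})$ that handle the absolute value in the objective.

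First I would establish convergence of Algorithm~\ref{alg: exp grad for fairness frontier, disparity minimizing} to a $\nu$-approximate saddle point. The saddle-point problem is a two-player zero-sum game with a bounded bilinear payoff, so the standard regret bound for exponentiated gradient against a best-responding opponent applies. The per-round payoff magnitudes are controlled by $B_\lambda$, by $B_\xi = 1$, and by the discretized cost $c(\cdot,\cdot)$ (bounded because $N \propto n^\phi$ and the loss is $1$-Lipschitz); substituting $\nu \propto n^{-\phi}$, $B_\lambda \propto n^\phi$, and $N \propto n^\phi$ into the regret bound yields termination in $O(n^{4\phi})$ iterations, exactly as in Theorem~\ref{theorem: error analysis for exp grad extremes}.

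Second I would convert the $\nu$-approximate saddle point into empirical-level optimality and feasibility guarantees. Since (\ref{equation: disparity minimizing, randomized, slack variable}) encodes $|\disphat(Q_h)| = \min \xi$ subject to the two one-sided constraints $\pm\disphat(Q_h) - \xi \leq 0$, a Lagrangian-duality argument shows that at the approximate equilibrium the returned $\hat\xi$ both upper-bounds $|\disphat(\Qhat_h)|$ up to $O(\nu)$ and lower-bounds $|\disphat(\tilde Q)|$ up to $O(\nu + B_\xi/B_\lambda)$ for any empirically feasible $\tilde Q$. The crucial point is that $B_\lambda \propto n^\phi$ is large enough that the multiplier cap does not bind: the termination test $\costhat(\Qhat_h) \leq \hat\epsilon + (B_\xi + 2\nu)/B_\lambda$ certifies that the cost constraint is violated by at most $O(n^{-\phi})$, and when the test fails I would argue, as in Theorem~\ref{theorem: error analysis for exp grad extremes}, that no empirically feasible point can exist, so returning \texttt{null} is correct.

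Third I would pass from empirical to population quantities using the concentration results already established for Theorem~\ref{theorem: error analysis for exp grad extremes}. Uniformly over $\cH$, the Rademacher bound $R_n(\cH) \leq C n^{-\phi}$ controls $|\costhat(Q_h) - \cost(Q_h)|$ at rate $n^{-\phi}$, while the group-conditional structure of $\disp$ gives $|\disphat(Q_h) - \disp(Q_h)|$ at the separate rates $n_0^{-\phi}$ and $n_1^{-\phi}$, since the two conditional expectations are estimated from $n_0$ and $n_1$ samples. The constants $C', C''$ entering $\hat\epsilon$ are calibrated so that, with probability $1-\delta$, the empirical feasible set both contains every population-feasible $\tilde Q$ and certifies population feasibility of $\Qhat_h$; Lemma~\ref{lemma: disparity classifier relates to disparity prediction function} and the identity $\cost(Q_h) + c_0 = \loss_\alpha(Q)$ then translate the classifier-level bounds back to the prediction-function-level claims $\loss(\Qhat_h) \leq \epsilon + \tilde O(n^{-\phi})$ and $|\disp(\Qhat_h)| \leq |\disp(\tilde Q)| + \tilde O(n_0^{-\phi}) + \tilde O(n_1^{-\phi})$. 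I expect the main obstacle to be the second step: verifying that the best-response structure of the $(\xi, Q_h)$-player, where $\xi$ is set to $B_\xi$ or $0$ according to the sign of $1 - \lambda_+ - \lambda_-$ and $Q_h$ solves a cost-sensitive problem with weights mixing $\lambda_+ - \lambda_-$ and $\lambda_{\cost}$, produces a saddle value whose primal component is exactly the absolute disparity, and that the two one-sided constraints combine to bound $|\disp|$ rather than a single signed disparity. The generalization analysis, by contrast, is inherited almost verbatim from Theorem~\ref{theorem: error analysis for exp grad extremes}, since the disparity and cost functionals themselves are unchanged.
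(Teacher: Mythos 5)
Your proposal is correct and follows essentially the same route as the paper's proof: iteration complexity via the exponentiated-gradient regret bound (Lemma~\ref{lemma: iteration complexity for disparity minimizing}), empirical optimality and feasibility of $(\hat{\xi}, \Qhat_h)$ via the slack-variable Lagrangian lemmas (Lemmas~\ref{lemma: empirical slack variable bound for disparity minimization}--\ref{lemma: solution quality, disparity minimizing}), concentration and the discretization translation inherited verbatim from Theorem~\ref{theorem: error analysis for exp grad extremes}, and the same feasible/infeasible case split. The one obstacle you flag --- that the two one-sided constraints must combine to bound $\left| \disphat(\Qhat_h) \right|$ rather than a signed disparity --- is resolved exactly as you anticipate: for any feasible $(\xi, Q_h)$ the pair $\left( \left| \disphat(Q_h) \right|, Q_h \right)$ is also feasible, and chaining $\pm\disphat(\Qhat_h) \leq \hat{\xi} + O(n^{-\phi})$ with $\hat{\xi} \leq \xi + 2\nu$ gives the claimed bound.
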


We next provide an oracle result for the absolute disparity minimizing algorithm under selective labels.

\begin{theorem}[Selective Labels for Algorithm \ref{alg: exp grad for fairness frontier, disparity minimizing}]\label{theorem: selective labels error analysis for exp grad, disp minimizing}
Suppose Assumption \ref{assumption: selection on obs and positivity} holds and Algorithm \ref{alg: exp grad for fairness frontier, disparity minimizing} is given as input the modified training data $\{(X_i, A_i, \mu(X_i)\}_{i=1}^{n}$.

Under the same conditions as Theorem \ref{theorem: error analysis for exp grad, disp minimizing}, Algorithm \ref{alg: exp grad for fairness frontier, disparity minimizing} terminates in at most $O(n^{4\phi})$ iterations. It returns $\Qhat_h$, which when viewed as a distribution over $\cF$, satisfies with probability at least $1 - \delta$ either one of the following: 1) $\hat{Q}_h \neq null$, $\loss_\mu(\Qhat_h) \leq \epsilon + \tilde{O}(n^{-\phi})$ and $\left| \disp(\Qhat_h) \right| \leq \left| \disp(\tilde Q) \right| + \tilde{O}(n_0^{-\phi}) + \tilde{O}(n_1^{-\phi})$ for any $\tilde Q$ that is feasible in (\ref{equation: absolute disparity min problem, randomized predictor}); or 2) $\Qhat_h = null$ and (\ref{equation: absolute disparity min problem, randomized predictor}) is infeasible.
\end{theorem}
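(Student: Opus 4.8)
The plan is to reduce this to Theorem~\ref{theorem: error analysis for exp grad, disp minimizing} by observing that, under oracle access to $\mu$, feeding the modified data $\{(X_i, A_i, \mu(X_i))\}_{i=1}^n$ into Algorithm~\ref{alg: exp grad for fairness frontier, disparity minimizing} is exactly an instance of the full-information problem analyzed there, with the unobserved outcome $Y_i^*$ replaced everywhere by the pseudo-outcome $\mu(X_i)$. Concretely, I would first record that the relevant population loss becomes $\loss_\mu$: since the cost-sensitive reduction (Lemma 1 of \cite{AgarwalEtAl(19)-FairRegression}) depends on the outcome only through $\underline{Y}^*_i$, substituting $\underline{\mu}_i$ yields $\cost_\mu(h_f) + c_0 = \loss_{\mu, \alpha}(f)$, so the cost constraint enforced by the algorithm is a discretized surrogate for $\loss_\mu(f) \le \epsilon$ rather than $\loss(f) \le \epsilon$. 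This is precisely why the guarantee is stated in terms of $\loss_\mu$.

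The second step handles the disparity term, splitting into two cases according to the conditioning events. When $\cE_{i,a}$ depends only on $A_i$, the disparity is a functional of the observed distribution over $(X_i, A_i)$ and is unaffected by selective labels, so the case weights $c_\lambda$ of \textsection~\ref{section: computing abs disp min model} apply verbatim. When $\cE_{i,a}$ depends on $Y_i^*$, I would invoke the identification result under Assumption~\ref{assumption: selection on obs and positivity} (the content of Lemma~\ref{lemma:disparity concentration selective labels}) to rewrite each conditional expectation as the reweighted ratio in Eq.~\ref{equation: definition of disparity for selective labels} with $g(X_i, Y_i) = \mu(X_i)$, e.g. $\e[f(X_i) \mid Y_i^*=1, A_i=1] = \e[f(X_i)\mu(X_i) \mid A_i=1] / \e[\mu(X_i) \mid A_i=1]$. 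Under oracle $\mu$, this ratio is a fixed functional of the observed distribution, and the modified weights $c_\lambda(\underline{\mu}_i, A_i, Z_\alpha)$ implement exactly the best-response reduction to cost-sensitive classification for this identified target. Hence the min-max game, the best-response oracles, the exponentiated-gradient updates, and the $\nu$-approximate saddle point analysis are structurally identical to those in Theorem~\ref{theorem: error analysis for exp grad, disp minimizing}.

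Given this equivalence, the iteration bound $O(n^{4\phi})$ and the feasibility/optimality dichotomy follow by rerunning the proof of Theorem~\ref{theorem: error analysis for exp grad, disp minimizing} with $\mu(X_i)$ as the outcome. The only genuinely new estimand is the reweighted disparity, whose sample analogue is a ratio with a random denominator $\hat p = \hat\e[g \mid \cE_{i,a}]$. The last step is therefore a concentration argument for this plug-in ratio: I would bound the numerator and denominator deviations separately using the Rademacher-complexity control of $\cH$ (Assumption~\ref{assumption: error analysis for exp grad extremes}) together with a Hoeffding/McDiarmid bound on the weights $g \in [0,1]$, then combine them through a lower bound on the denominator to obtain deviation $\tilde{O}(n_a^{-\phi})$ for each group $a$.

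I expect this denominator control to be the main obstacle: the ratio form is absent from the full-information analysis, and propagating its error requires that $\e[\mu(X_i) \mid \cE_{i,a}]$ be bounded away from zero (a positivity-type condition on the positive-outcome mass within each group) and that the per-group sample sizes $n_0, n_1$ concentrate, which is exactly why the disparity error splits as $\tilde{O}(n_0^{-\phi}) + \tilde{O}(n_1^{-\phi})$. Everything else — the discretization error from Lemma~\ref{lemma: disparity classifier relates to disparity prediction function}, the slack-variable reformulation, and the three-coordinate $\lambda$-player — transfers unchanged from Theorem~\ref{theorem: error analysis for exp grad, disp minimizing}, since none of these components ever inspects $Y_i^*$ except through $\mu(X_i)$.
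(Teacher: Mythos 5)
Your proposal is correct and matches the paper's intended argument: the paper omits an explicit proof, stating that the analogous steps appear in the proofs of Theorems~\ref{theorem:selective labels extremes} and~\ref{theorem: error analysis for exp grad, disp minimizing}, and your reconstruction is exactly that combination --- substitute the oracle pseudo-outcome $\mu(X_i)$ to get the $\loss_\mu$ guarantee from the saddle-point analysis of Theorem~\ref{theorem: error analysis for exp grad, disp minimizing}, and handle $Y^*$-dependent conditioning events via the identified ratio form with $g=\mu$ and the concentration bound of Lemma~\ref{lemma:disparity concentration selective labels}. Your flagged obstacle (the random-denominator ratio needing $\e[\mu(X_i)\mid \cE_{i,a}]$ bounded away from zero) is also faithful to the paper, where the constants in the $\tilde{O}(\cdot)$ bounds implicitly depend on $1/\bar{\omega}_{\wedge}$.
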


We omit the proof of Theorem \ref{theorem: selective labels error analysis for exp grad, disp minimizing}  since the analogous steps are given in proofs of Theorems~\ref{theorem:selective labels extremes}-\ref{theorem: error analysis for exp grad, disp minimizing} below.

\subsection{Bounded Group Loss Disparity}\label{section: bounded group loss special case}

Bounded group loss is a common notion of predictive fairness that examines the variation in average loss across values of the protected or sensitive attribute. It is commonly used to ensure that the prediction function achieves some minimal threshold of predictive performance across all values of the attribute \cite{AgarwalEtAl(19)-FairRegression}. We define a bounded group loss disparity to be the difference in average loss across values of the attribute, $\disp(f) = \e\left[ l(Y^*_i, f(X_i)) \mid A_i = 1 \right] - \e\left[ l(Y^*_i, f(X_i)) \mid A_i = 0 \right]$. This choice of predictive disparity measure is convenient as it allows us to drastically simplify our algorithm by skipping the discretization step entirely and reducing the problem to an instance of weighted loss minimization. \cite{AgarwalEtAl(19)-FairRegression} apply the same idea in their analysis of fair regression under bounded group loss. 

Take, for example, the problem of finding the range of bounded group loss disparities that are possible over the set of good models. Letting $\loss(f \mid A_i = a) := \e\left[ l(Y^*_i, f(X_i) \mid A_i = a \right]$ and $\loss(Q \mid A_i = a) := \sum_{f \in \cF} Q(f) \loss(f \mid A_i = a)$, we solve
    \begin{align*}
        \min_{Q \in \Delta(\cF)} & \loss(Q \mid A_i = 1) - \loss(f \mid A_i = 0) \\
        & \mbox{ s.t. } \loss(Q) \leq \epsilon.
    \end{align*}
The sample version of this problem is to minimize $\widehat{\loss}(Q \mid A_i = 1) - \widehat{\loss}(f \mid A_i = 0)$ subject to $\widehat{\loss}(Q) \leq \epsilon$. We solve the sample problem by finding a saddle point of the associated Lagrangian $L(Q, \lambda) = \widehat{\loss}(Q \mid A_i = 1) - \widehat{\loss}(f \mid A_i = 0) + \lambda (\widehat{\loss}(Q) - \epsilon)$. We compute a $\nu$-approximate saddle point by treating it as a zero-sum game between a $Q$-player and a $\lambda$-player. The best response of the $\lambda$-player is the same as before: if the constraint $\hat{\loss}(Q) - \epsilon$ is violated, she sets $\lambda = B_\lambda$, and otherwise she sets $\lambda = 0$. The best-response of the $Q$-player may reduced to an instance of weighted loss minimization since
    \begin{align*}
        & \widehat{\loss}(f | \cE_{i,0}) - \widehat{\loss}(f | \cE_{i,1}) + \lambda (\widehat{\loss}(f) - \epsilon) \\
        &= \hat{\mathbb{E}}\left[ \left( \frac{1}{\hat p_0} 1\{\cE_0\} - \frac{1}{\hat p_1} 1\{\cE_1\} + \lambda \right) l(Y_i, f(X_i)) \right]
    \end{align*}
Therefore, defining the weights $W_i = \frac{1}{\hat p_0} 1\{\cE_{i,0}\} - \frac{1}{\hat p_1} 1\{\cE_{i,1}\} + \lambda$, we see that minimizing $L(h, \lambda)$ is equivalent to solving an instance of weighted loss minimization. Algorithm \ref{alg: exp grad for fairness frontier, extremes BGL} formally states the procedure for finding the range of bounded group loss disparities. We may analogously extend Algorithm \ref{alg: exp grad for fairness frontier, disparity minimizing} to find the absolute bounded group loss minimizing model among the set of good models. 

\begin{algorithm}[htbp!]
\SetAlgoLined
\KwIn{Training data $\{(X_i, Y_i, A_i)\}_{i=1}^{n}$, \\
    Parameters $\beta_0, \beta_1$, Events $\cE_{i,0}, \cE_{i,1}$, and loss tolerance $\hat \epsilon$ \\
    Bound $B_\lambda$, accuracy $\nu$ and learning rate $\eta$
}
\KwResult{$\nu$-approximate saddle point $(\hat{Q}_{h}, \hat{\lambda})$}
Set $\theta_1 = 0 \in \reals$ \;
\For{$t = 1,2, \hdots$}{ 
    Set $\lambda_{t} = B_\lambda \frac{\exp(\theta_{t})}{1 + \exp(\theta_t)}$\;
    $f_t \leftarrow \best_f(\lambda_t)$\;
    $\hat Q_{t} \leftarrow \frac{1}{t} \sum_{s=1}^{t} f_{s}$, $\quad$ $\bar{L} \leftarrow L(\hat{Q}_{t}, \best_{\lambda}(\hat{Q}_{t})$\;
    $\hat \lambda_t \leftarrow \frac{1}{t} \sum_{s=1}^{t} \lambda_s$, $\quad$ $\underline{L} \leftarrow L(\best_f(\hat{\lambda_t}), \hat{\lambda_t})$\;
    $\nu_t \leftarrow \max\left\{ L(\hat{Q}_t, \hat{\lambda}_t) - \underline{L}, \bar{L} - L(\hat{Q}_t, \hat{\lambda}_t) \right\}$\;
    \If{$\nu_t \leq \nu$}{
        \eIf{ $\widehat{\loss}(\Qhat_{t}) \leq \hat{\epsilon} + \frac{ |\beta_0| + |\beta_1| + 2\nu }{ B_\lambda }$ }{
            \Return{$(\hat{Q}_{t}, \hat{\lambda}_t)$}\;
        }{ 
            \Return{ null }
        }
    }
    Set $\theta_{t+1} = \theta_{t} + \eta \left( \widehat{\loss}(f_t) - \hat \epsilon \right)$\;
} 
\caption{Algorithm for finding the bounded group loss disparity minimizing model over the set of good models}
\label{alg: exp grad for fairness frontier, extremes BGL}
\end{algorithm}

\section{Proofs of Main Results}\label{section: proofs of main results}

\subsection*{Proof of Lemma \ref{lemma: disparity classifier relates to disparity prediction function}}

Fix $f \in \cF$. For $x \in \cX$ and $z_{\alpha} \in \cZ_{\alpha}$
    \[
    h_f(x, z_{\alpha})  = 1\{f(x) \geq z_\alpha\} = 1\{ \underline{f}(x) \geq z_\alpha\},
    \]
Therefore,
    \[
    \mathbb{E}_{Z_\alpha}\left[ h_f(x, Z_\alpha) \right] = \mathbb{E}_{Z_\alpha}\left[ 1\{\underline{f}(x) \geq Z_\alpha\} \right] = \underline{f}(x),
    \]
and for any $a \in \{0, 1\}$,
    \begin{align*}
    & | \mathbb{E}\left[ h_f(X, Z_\alpha) | \cE_{i,a} \right] - \mathbb{E}\left[ f(X) | \cE_{i,a} \right] | \\
    &= | \mathbb{E}\left[ \mathbb{E}_{Z_\alpha}\left[ h_f(X, Z_\alpha) \right] - f(X) | \cE_{i,a} \right] | \\
    &= | \mathbb{E}\left[ \underline{f}(X) - f(X) | \cE_{i,a} \right] | \leq \alpha
    \end{align*}
where the first equality uses iterated expectations plus the fact that $Z_\alpha$ is independent of $(X, A, Y^*)$ and the final equality follows by the definition of $\underline{f}(X)$. The claim is immediate after noticing $\disp(h_f) - \disp(f)$ equals $\beta_0 \left( \mathbb{E}\left[ h_f(X, Z_\alpha) - f(X) | \cE_{i,0} \right] \right) + \beta_1 \left( \mathbb{E}\left[ h_f(X, Z_\alpha) - f(X) | \cE_{i,1} \right] \right)$ and applying the triangle inequality. $\Box$

\subsection*{Proof of Theorem \ref{theorem: error analysis for exp grad extremes}} 

The claim about the iteration complexity of Algorithm \ref{alg: exp grad for fairness frontier, extremes} follows immediately from Lemma \ref{lemma: iteration complexity for extremes}, substituting in the stated choices of $\nu$ and $B$. 

The proof strategy for the remaining claims follows the proof of Theorems 2-3 in \cite{AgarwalEtAl(19)-FairRegression}. We consider two cases.
    
\paragraph{Case 1: There is a feasible solution $Q^*$ to the population problem (\ref{equation: disp range, randomizaton predictor problem})} 
    
Using Lemmas \ref{lemma: lemma 2 of fair classification, extremes}-\ref{lemma: lemma 3 of fair classification, extremes}, the $\nu$-approximate saddle point $\Qhat_h$ satisfies
    \begin{align}
        & \widehat{\disp}(\Qhat_h) \leq \widehat{\disp}(Q_h) + 2\nu \\
        & \costhat(\Qhat_h) \leq \hat{\epsilon} + \frac{ |\beta_0|+ |\beta_1| + 2\nu }{B}
    \end{align}
for any distribution $Q_h$ that is feasible in the empirical problem. This implies that Algorithm \ref{alg: exp grad for fairness frontier, extremes} returns $\hat{Q} \neq null$. We now show that the returned $\hat{Q}_h$ provides an approximate solution to the discretized population problem.
    
First, define $\widehat{\cost}_{z}(h) := \hat{\mathbb{E}}\left[ c(\underline{Y}^*_i, z) h(X_i, z) \right]$ and $\cost_z(h) := \mathbb{E}\left[ c(\underline{Y}^*_i, z) h(X_i, z) \right]$. Since $c(\underline{Y}^*_i, z) \in [-1, 1]$, we invoke Lemma \ref{lemma: lemma 2 in fair reg paper} with $S_i = c(\underline{Y}^*_i, z_i)$, $U_i = (X_i, z)$, $\cG = \cH$ and $\psi(s, t) = st$ to obtain that with probability at least $1 - \frac{\delta}{4}$ for all $z \in \cZ_\alpha$ and $h \in \cH$
        \[
        \left| \widehat{\cost}_z(h) - \cost_z(h) \right| \leq 
        \]
        \[
        2 R_n(\cH) + \frac{2}{\sqrt{n}} + \sqrt{ \frac{2 \ln(8 N/\delta)}{ n } } = \tilde{O}(n^{-\phi}),
        \]
where the last equality follows by the bound on $R_n(\cH)$ in Assumption \ref{assumption: error analysis for exp grad extremes} and setting $N \, \propto \, n^\phi$. Averaging over $z \in \cZ_\alpha$ and taking a convex combination of according to $Q_h \in \Delta(\cH)$ then delivers via Jensen's Inequality that with probability at least $1 - \delta/4$ for all $Q \in \Delta(\cH)$
        \begin{equation}\label{equation: cost bound i, extremes}
            \left| \widehat{\cost}(Q_h) - \cost(Q_h) \right| \leq \tilde{O}(n^{-\phi}).
        \end{equation}
        
Next, define $\widehat{\disp}_z(h) := \beta_0 \hat{\mathbb{E}}\left[ h(X_i, z) | \cE_{i,0} \right] + \beta_1 \hat{\mathbb{E}}\left[ h(X_i, z) | \cE_{i,1} \right]$ and $\disp_z(h) := \beta_0 \mathbb{E}\left[ h(X_i, z) | \cE_{i,0} \right] + \beta_1 \mathbb{E}\left[ h(X_i, z) | \cE_{i,1} \right]$, where the difference can be expressed as 
    \begin{align*}
        & \widehat{\disp}_z(h) - \disp_z(h) = \\
        & \beta_0 \left( \hat{\mathbb{E}}\left[ h(X_i, z) | \cE_{i,0} \right] - \mathbb{E}\left[ h(X_i, z) | \cE_{i,0} \right] \right) + \\
        & \beta_1 \left(  \hat{\mathbb{E}}\left[ h(X_i, z) | \cE_{i,1} \right] - \mathbb{E}\left[ h(X_i, z) | \cE_{i,1} \right] \right).
    \end{align*}
Therefore, by the triangle inequality, 
    \begin{align*}
        & \left| \widehat{\disp}_z(h) - \disp_z(h) \right| \leq \\
        &|\beta_0| \left| \hat{\mathbb{E}}\left[ h(X_i, z) | \cE_{i,0} \right] - \mathbb{E}\left[ h(X_i, z) | \cE_{i,0} \right] \right| + \\ 
        & |\beta_1| \left| \hat{\mathbb{E}}\left[ h(X_i, z) | \cE_{i,1} \right] - \mathbb{E}\left[ h(X_i, z) | \cE_{i,1} \right] \right|.
    \end{align*}
For each term on the right-hand side of the previous display, we invoke Lemma \ref{lemma: lemma 2 in fair reg paper} applied to the data distribution conditional on $\cE_0$ and $\cE_1$. We set $S = 1$, $U = (X_i, z)$, $\cG = \cH$ and $\psi(s, t) = st$. With probability at least $1 - \frac{\delta}{4}$ for all $z \in \cZ_\alpha$,
\[
\left| \hat{\mathbb{E}}\left[ h(X_i, z) | \cE_{i,0} \right] - \mathbb{E}\left[ h(X_i, z) | \cE_{i,0} \right] \right| \leq 
\]
\[
R_{n_0}(\cH) + \frac{2}{\sqrt{n_0}} + \sqrt{ \frac{ 2 \ln(8 N /\delta)}{n_0} },
\]
\[
\left| \hat{\mathbb{E}}\left[ h(X_i, z) | \cE_{i,1} \right] - \mathbb{E}\left[ h(X_i, z) | \cE_{i,1} \right] \right| \leq
\]
\[
R_{n_1}(\cH) + \frac{2}{\sqrt{n_1}} + \sqrt{ \frac{ 2 \ln(8 N /\delta)}{n_1} }.
\]
Then, averaging over $z \in \cZ_\alpha$ and taking a convex combination according to $Q_h \in \Delta(\cH)$ delivers via Jensen's Inequality that with probability at least $1 - \delta/4$ for all $Q \in \Delta(\cH)$
        \begin{align}
            & \left| \hat{\mathbb{E}}\left[ Q_h | \cE_{i,0} \right] - \mathbb{E}\left[ Q_h | \cE_{i,0} \right] \right| \leq R_{n_0}(\cH) + \frac{2}{\sqrt{n_0}} + \sqrt{ \frac{ 2 \ln(8 N /\delta)}{n_0} } \label{eqn: fairness bound (i), extremes} \\
            & \left| \hat{\mathbb{E}}\left[ Q_h | \cE_{i,1} \right] - \mathbb{E}\left[ Q_h | \cE_{i,1} \right] \right| \leq R_{n_1}(\cH) + \frac{2}{\sqrt{n_1}} + \sqrt{ \frac{ 2 \ln(8 N /\delta)}{n_1} } \label{eqn: fairness bound (ii), extremes}
        \end{align}
By the union bound, both inequalities hold with probability at least $1 - \delta/2$.
    
Finally, Hoeffding's Inequality implies that with probability at least $1 - \delta/4$,
        \begin{equation}\label{equation: Hoeffding for c0}
            \left| \hat{c}_0 - c_0 \right| \leq \sqrt{ \frac{ -\log(\delta / 8) }{2n} }.
        \end{equation}

From Lemma \ref{lemma: solution quality of exp grad for extremes}, we have that Algorithm \ref{alg: exp grad for fairness frontier, extremes} terminates and delivers a distribution $\Qhat_h$ that compares favorably against any feasible $Q$ in the discretized sample problem. That is, for any such $Q_h$, 
        \begin{equation}
            \disphat(\Qhat_h) \leq \disphat(Q_h) + O(n^{-\phi})
        \end{equation}
        \begin{equation}
            \costhat(\Qhat_h) \leq \hat{\epsilon} + O(n^{-\phi}) \label{eqn: costhat bound ii}
        \end{equation}
where we used the fact that $\nu \propto n^{-\phi}$ and $B \propto n^\phi$ by assumption. First, (\ref{equation: cost bound i, extremes}),  (\ref{equation: Hoeffding for c0}), (\ref{eqn: costhat bound ii}) imply
        \begin{equation}\label{equation: Qhat approximate feasibility}
            \cost(\Qhat_h) \leq \hat{\epsilon} + \tilde{O}(n^{-\phi}) \leq \epsilon - c_0 + \tilde{O}(n^{-\phi}),
        \end{equation}
where we used that $\hat{\epsilon} = \epsilon - \hat{\mathbb{E}}[l(\underline{Y}^*_i, \frac{\alpha}{2})] + C'n^{-\phi} - C'' n^{-1/2}$. by assumption. Second, the bounds in (\ref{eqn: fairness bound (i), extremes}), (\ref{eqn: fairness bound (ii), extremes}) imply 
        \begin{equation}\label{equation: Qhat approximate optimality}
            \disp(\hat{Q}_h) \leq \disp(Q_h) + \tilde{O}(n_0^{-\beta}) + \tilde{O}(n_1^{-\phi}). 
        \end{equation}
    
We assumed that $Q_h$ was a feasible point in the discretized sample problem. Assuming that (\ref{equation: cost bound i, extremes}) holds implies that any feasible solution of the population problem is also feasible in the empirical problem due to how we have set $C^\prime$ and $C^{\prime \prime}$. Therefore, we have just shown in (\ref{equation: Qhat approximate feasibility}), (\ref{equation: Qhat approximate optimality}) that $\hat{Q}_h$  is approximately feasible and approximately optimal in the discretized population problem (\ref{equation: discretized problem for extremes}). Our last step is to relate $\hat{Q}_h$ to the original problem over $f \in \cF$ (\ref{equation: disparity range problem}).
    
From Lemma 1 in \cite{AgarwalEtAl(19)-FairRegression} and (\ref{equation: Qhat approximate feasibility}), we observe that 
        \begin{align*}
            \loss_{\alpha}(\Qhat_h) &\overset{(1)}{\leq} \epsilon + \tilde{O}(n^{-\phi}), \\
            \loss(\Qhat_h) &\overset{(2)}{\leq} \epsilon + \tilde{O}(n^{-\phi}),
        \end{align*}
where (1) used Lemma 1 in \cite{AgarwalEtAl(19)-FairRegression} and we now view $\hat{Q}_h$ as a distribution of risk scores $f \in \cF$, (2) used that $\loss(Q) \leq \loss_\alpha(Q) + \alpha$. Next, from Lemma \ref{lemma: disparity classifier relates to disparity prediction function} and (\ref{equation: Qhat approximate optimality}), we observe that
        \begin{equation*}
            \disp(\hat{Q}_h) \leq \disp(\tilde Q) + \left( |\beta_0| + |\beta_1| \right) \alpha + \tilde{O}(n_0^{-\phi}) + \tilde{O}(n_1^{-\phi}). 
        \end{equation*}
where $\hat{Q}_h$ is viewed as a distribution over risk scores $f \in \cF$ and $\tilde Q$ is now any distribution over risk scores $f \in \cF$ that is feasible in the fairness frontier problem. This proves the result for Case I.
    
\paragraph{Case II: There is no feasible solution to the population problem (\ref{equation: disp range, randomizaton predictor problem})} This follows the proof of Case II in Theorem 3 of \cite{AgarwalEtAl(19)-FairRegression}. If the algorithm returns a $\nu$-approximate saddle point $\hat{Q}_h$, then the theorem holds vacuously since there is no feasible $\tilde Q$. Similarly, if the algorithm returns $null$, then the theorem also holds. $\Box$

\subsection*{Proof of Theorem~\ref{theorem:selective labels extremes}}
Under oracle access to $\mu(x)$, the iteration complexity and bound on cost hold immediately from Theorem~\ref{theorem: error analysis for exp grad extremes}. 
The bound on disparity holds immediately for choices $\cE_{i,0}, \cE_{i,1}$ that depend on only $A$.
For choices of $\cE_{i,0}, \cE_{i,1}$ that depends on $Y_i$, such as the qualified affirmative action fairness-enhancing intervention, we rely on Lemma~\ref{lemma:disparity concentration selective labels}.
We first observe that under oracle access to $\mu(x)$, we can identify any disparity as 
\begin{equation}
\frac{\beta_1 \e [f(X) g(\mu(X)) \mid A = 1]}{\e [ g(\mu(X)) \mid A = 1]} - \frac{\beta_0 \e [f(X) g(\mu(X)) \mid A = 0]}{\e [ g(\mu(X)) \mid A = 0]},
\end{equation}
where $g(x) = x$ for the balance for the positive class and qualified affirmative action criteria; $g(x) = (1-x)$ for balance for the negative class; and $g(x) = 1$ for the statistical parity and the affirmative action criteria (see proof of Lemma~\ref{lemma:disparity concentration selective labels} below proof for an example).
We define the shorthand
\begin{eqnarray*}
& \numa := \e [f(X) g(\mu(X)) \mid A = 1]\\
&\dena := \e [ g(\mu(X)) \mid A = 1] \\
&\numo := \e [f(X) g(\mu(X)) \mid A = 0]\\ 
&\deno := \e [ g(\mu(X)) \mid A = 0] 
\end{eqnarray*}
and we use $\hnuma$, $\hdena$, $\hnumo$, and $\hdeno$ to denote their empirical estimates.
Lemma \ref{lemma:disparity concentration selective labels} gives the following bound on the empirical estimate of the disparity:
\begin{align*}
& \p\Bigg[  \abs{\frac{\beta_1 \hnuma }{\hdena} - \frac{\beta_0 \hnumo}{\hdeno} -  \Big(\frac{\beta_1 \numa}{\dena} - \frac{\beta_0 \numo}{\deno} \Big)} \geq \epsilon \Bigg] \\
 &\leq 4\exp\Bigg[-\frac{n}{2}\Bigg(\frac{\epsilon \bar{\omega}_{\wedge}}{8\beta}  - 4 R_n(\cG) - \frac{2}{\sqrt{n}}\Bigg)^2 \Bigg] + 2\exp \Big[\frac{-n \epsilon^2 \bar{\omega}_{\wedge}^4}{64\beta^2\omega_{\vee}^2} \Big] \\
 &+2\exp\Big[\frac{-n\bar{\omega}_{\wedge}^2}{4} \Big]
\end{align*} 
where $\omega_{\vee} = \max(\numa, \numo)$, $\bar{\omega}_{\wedge} = \min(\dena, \deno)$ and ${\beta = \max(\left| \beta_1 \right|, \left |\beta_0 \right|)}$.

We now proceed to relax and simplify the bound.
For $\epsilon \leq 4 \frac{\beta \omega_{\vee} }{\bar{\omega}_{\wedge}}$, we have
\begin{equation*}
     2\exp \Big[\frac{-n \epsilon^2 \bar{\omega}_{\wedge}^4}{64\beta^2\omega_{\vee}^2} \Big] \geq 2\exp\Big[\frac{-n\bar{\omega}_{\wedge}^2}{4} \Big]
\end{equation*}

\textbf{Case 1: }
We first consider the likely case that $\bar{\omega}_{\wedge} \geq \omega_{\vee}$. Then we have
\begin{equation*}
     2\exp \Big[\frac{-n \epsilon^2 \bar{\omega}_{\wedge}^4}{64\beta^2\omega_{\vee}^2} \Big] \leq 2\exp \Big[\frac{-n \epsilon^2 \bar{\omega}_{\wedge}^2}{64\beta^2} \Big] 
\end{equation*}
\emph{1a)} If 
\begin{equation}
    \frac{\epsilon \bar{\omega}_{\wedge}}{8\beta} \geq 4R_n(\cG) + \frac{2}{\sqrt{n}}
\end{equation} then 
\begin{equation*}
     \exp \Big[\frac{-n \epsilon^2 \bar{\omega}_{\wedge}^2}{64\beta^2} \Big] \leq \exp\Bigg[-\frac{n}{2}\Bigg(\frac{\epsilon \bar{\omega}_{\wedge}}{8\beta}  - 4 R_n(\cG) - \frac{2}{\sqrt{n}}\Bigg)^2 \Bigg] 
\end{equation*}
Then we have
\begin{eqnarray}
&\p\Bigg[  \abs{\frac{\beta_1 \hnuma }{\hdena} - \frac{\beta_0 \hnumo}{\hdeno} -  \Big(\frac{\beta_1 \numa}{\dena} - \frac{\beta_0 \numo}{\deno} \Big)} \geq \epsilon \Bigg] \\
& \leq 8\exp\Bigg[-\frac{n}{2}\Bigg(\frac{\epsilon \bar{\omega}_{\wedge}}{8\beta}  - 4 R_n(\cG) - \frac{2}{\sqrt{n}}\Bigg)^2 \Bigg] 
\end{eqnarray}
Inverting this bound yields the following: with probability at least $1- \delta$, 
\[
\abs{\frac{\beta_1 \hnuma }{\hdena} - \frac{\beta_0 \hnumo}{\hdeno} -  \Big(\frac{\beta_1 \numa}{\dena} - \frac{\beta_0 \numo}{\deno} \Big)} \leq
\]
\[
\frac{8\beta}{\bar{\omega}_{\wedge}}\Bigg( 4 R_n(\cG) + \frac{2}{\sqrt{n}} +\sqrt{\frac{2}{n}\log\Big(\frac{8}{\delta}\Big)}  \Bigg)
\]

\emph{1b)}
\begin{equation}
    \frac{\epsilon \bar{\omega}_{\wedge}}{8\beta} < 4R_n(\cG) + \frac{2}{\sqrt{n}}
\end{equation} implies that 
\[
\abs{\frac{\beta_1 \hnuma }{\hdena} - \frac{\beta_0 \hnumo}{\hdeno} -  \Big(\frac{\beta_1 \numa}{\dena} - \frac{\beta_0 \numo}{\deno} \Big)} \leq
\]
\[
\frac{8\beta}{\bar{\omega}_{\wedge}}\Bigg( 4 R_n(\cG) + \frac{2}{\sqrt{n}} \Bigg).
\]

\textbf{Case 2: } We now consider the unlikely but plausible case that $\bar{\omega}_{\wedge} < \omega_{\vee}$.
Then we have 
\[
\exp\Bigg[-\frac{n}{2}\Bigg(\frac{\epsilon \bar{\omega}_{\wedge}}{8\beta}  - 4 R_n(\cG) - \frac{2}{\sqrt{n}}\Bigg)^2 \Bigg]  \leq
\]
\[
 \exp\Bigg[-\frac{n}{2}\Bigg(\frac{\epsilon \omega_{\vee}}{8\beta}  - 4 R_n(\cG) - \frac{2}{\sqrt{n}}\Bigg)^2 \Bigg] 
\]
and 
\begin{equation*}
\exp \Big[\frac{-n \epsilon^2 \bar{\omega}_{\wedge}^4}{64\beta^2\omega_{\vee}^2} \Big] \leq \exp \Big[\frac{-n \epsilon^2 \omega_{\vee}^2}{64\beta^2} \Big]
\end{equation*}
We proceed with the same steps as in Case 1 to conclude that 
with probability at least $1- \delta$,
\[
\abs{\frac{\beta_1 \hnuma }{\hdena} - \frac{\beta_0 \hnumo}{\hdeno} -  \Big(\frac{\beta_1 \numa}{\dena} - \frac{\beta_0 \numo}{\deno} \Big)} \leq
\]
\[
\frac{8\beta}{\bar{\omega}_{\wedge}}\Bigg( 4 R_n(\cG) + \frac{2}{\sqrt{n}} +\sqrt{\frac{2}{n}\log\Big(\frac{8}{\delta}}\Big)  \Bigg)
\]
Applying our assumption that 
    \[
    R_n(\cH) \leq C n^{-\phi} \mbox{ and } \hat{\epsilon} = \epsilon - \hat{c}_0 + C^\prime n^{-\phi} - C^{\prime \prime} n^{-1/2}.
    \]
for  $\phi \leq 1/2$  and $C' \geq 2C + 2 + \sqrt{2 \ln(8 N/\delta)}$ and $C^{\prime \prime} \geq \sqrt{\frac{-\log(\delta / 8)}{2}}$, then
\begin{equation}
    \disp(\hat{Q}_h) \leq \disp(\tilde{Q}) + \tilde{O}(n^{-\phi}),
\end{equation} which implies 
\begin{equation}
    \disp(\hat{Q}_h) \leq \disp(\tilde{Q}) + \tilde{O}(n_0^{-\phi}) + \tilde{O}(n_1^{-\phi}).
\end{equation}
$\Box$

\subsection*{Proof of Theorem \ref{theorem: error analysis for exp grad, disp minimizing}}

The claim about the iteration complexity of Algorithm \ref{alg: exp grad for fairness frontier, disparity minimizing} follows from Lemma \ref{lemma: iteration complexity for disparity minimizing} after substituting in the stated choices of $\nu, B_\lambda$. We consider two cases.
        
\paragraph{Case 1: There is a feasible solution $\tilde Q$ to the population problem (\ref{equation: absolute disparity min problem, randomized predictor})} Using Lemmas \ref{lemma: empirical slack variable bound for disparity minimization}-\ref{lemma: empirical cost violation, disparity minimizing}, the $\nu$-approximate saddle point $(\hat \xi, \Qhat_h)$ satisfies 
            \begin{align}
                & \disphat(\Qhat_h) - \hat \xi \leq \frac{B_\xi + 2 \nu}{B_\lambda}, \\
                & -\disphat(\Qhat_h) - \hat \xi \leq \frac{B_\xi + 2 \nu}{B_\lambda} \\
                & \costhat(\Qhat_h) - \hat{\epsilon}_{\cost} \leq \frac{ B_\xi + 2 \nu }{ B_\lambda }
            \end{align}
for any $(\xi, Q)$ that is feasible in the empirical problem. This implies that Algorithm \ref{alg: exp grad for fairness frontier, disparity minimizing} returns $\Qhat \neq null$. We will now show that the $(\hat \xi, \Qhat)$ provides an approximate solution to the discretized population problem.
        
First, through the same argument as in the proof of Theorem \ref{theorem: error analysis for exp grad extremes}, we obtain that with probability at least $1 - \delta/4$ for all $Q_h \in \Delta(\cH)$
        \begin{equation}\label{equation: cost bound disp min i}
            \left| \costhat(Q_h) - \cost(Q_h) \right| \leq \tilde{O}(n^{-\phi}).
        \end{equation}
Second, with probability at least $1 - \delta/2$ for all $Q \in \Delta(\cH)$,
        \begin{align}\label{equation: disp bounds, disp min}
            & \left| \hat{\mathbb{E}}[Q_h | \cE_{i,0}] - \mathbb{E}[Q_h | \cE_{i,0}] \right| \leq \tilde{O}(n_0^{-\phi}) \\
            & \left| \hat{\mathbb{E}}[Q_h | \cE_{i,1}] - \mathbb{E}[Q_h | \cE_{i,1}] \right| \leq \tilde{O}(n_1^{-\phi}).
        \end{align} 
Finally, Hoeffding's Inequality implies that with probability at least $1 - \delta/4$,
        \begin{equation}\label{equation: Hoeffding for c0, disp min}
            |\hat{c}_0 - c_0| \leq \sqrt{ \frac{-\log(\delta/8)}{2n}}. 
        \end{equation}
        
From Lemma \ref{lemma: solution quality, disparity minimizing}, we have that Algorithm \ref{alg: exp grad for fairness frontier, disparity minimizing} terminates and delivers $(\hat \xi, \hat Q_h)$ that compares favorable with any feasible $(\xi, Q_h)$ in the discretized sample problem. That is, for any such $(\xi, Q_h)$,
            \begin{align}
                & \hat{\xi} \leq \xi + O(n^{-\phi}), \label{equation:slack xi bound} \\ 
                & \disphat(\Qhat_h) \leq \hat \xi + O(n^{-\phi}), \\
                & -\disphat(\Qhat_h) \leq \hat \xi + O(n^{-\phi})  \label{equation: bound disp min negative} \\ 
                & \costhat(\Qhat_h) \leq \hat{\epsilon}_{\cost} + O(n^{-\phi}) \label{equation: cost bound disp min ii}
            \end{align}
Notice that (\ref{equation: cost bound disp min i}), (\ref{equation: Hoeffding for c0, disp min}) and (\ref{equation: cost bound disp min ii}) imply that 
            \begin{equation}
                \cost(\Qhat_h) \leq \epsilon - c_0 + \tilde{O}(n^{-\phi}),
            \end{equation}
where we used that $\hat{\epsilon} = \epsilon - \hat{c}_0 + C' n^{-\phi} - C'' n^{-\phi}$. For any feasible $(\xi, Q_h)$, then $( \left|\disp(Q_h) \right|, Q_h)$ is also feasible. Then, combining ~(\ref{equation:slack xi bound})-(\ref{equation: bound disp min negative}) yields
\begin{equation}
    \left|\disp(\Qhat_h) \right| \leq \left|\disp(Q_h) \right| + \tilde{O}(n^{-\phi})
\end{equation}

Second, notice that this implies that 
    \begin{equation}
        \left|\disp(\Qhat_h) \right| \leq \left|\disp(Q_h) \right| + \tilde{O}(n_0^{-\phi}) + \tilde{O}(n_1^{-\phi})
    \end{equation}
We assumed that $(\xi, Q_h)$ were feasible in the discretized sample problem. Assuming that (\ref{equation: cost bound disp min i}) holds implies that any feasible solution of the population problem is also feasible in the empirical problem due to how we set $C', C''$. Therefore, we have just shown that $(\hat \xi, \Qhat_h)$ are approximately optimal in the discretized population problem. 
        
Then, following the proof of Theorem \ref{theorem: error analysis for exp grad extremes}, we observe that $\loss(\Qhat_h) \leq \epsilon + \tilde{O}(n^{-\phi})$, where we now interpret $\Qhat_h$ as a distribution over risk scores $f \in \cF$. This proves the result for Case I.
        
\paragraph{Case II: There is no feasible solution to the population problem (\ref{equation: absolute disparity min problem, randomized predictor})} This follows the proof of Case II of Theorem 3 in \cite{AgarwalEtAl(19)-FairRegression}. If the algorithm returns a $\nu$-approximate saddle point $\Qhat_h$, then the theorem holds vacuously since there is no feasible $\tilde Q$. Similarly, if the algorithm returns $null$, then the theorem also holds. $\Box$

\section{Auxiliary Lemmas for Main Results}\label{section: auxiliary lemmas}

In this section, we state and prove a series of auxiliary lemmas that are used in the proofs of our main results in the text.

\subsection{Auxiliary Lemmas for the Proof of Theorem \ref{theorem: error analysis for exp grad extremes}} 

\subsubsection{Iteration Complexity of Algorithm \ref{alg: exp grad for fairness frontier, extremes}} 

\begin{lemma}\label{lemma: iteration complexity for extremes}
    Letting $\rho := \max_{h \in \cH} | \costhat(h) - \hat{\epsilon} |$, Algorithm \ref{alg: exp grad for fairness frontier, extremes} satisfies the inequality 
        \begin{equation*}
            \nu_{t} \leq \frac{B \log(2) }{ \eta t} + \eta \rho^2 B.
        \end{equation*}
    For $\eta = \frac{\nu}{2 \rho^2 B}$, Algorithm \ref{alg: exp grad for fairness frontier, extremes} will return a $\nu$-approximate saddle point of $L$ in at most $ \frac{4 \rho^2 B^2 \log(2)}{\nu^2}$. Since in our setting, $\rho \leq 1$, the iteration complexity of Algorithm \ref{alg: exp grad for fairness frontier, extremes} is $4 B^2 \log(2)/\nu^2$. 
    
    \begin{proof}
        Follows immediately from the proof of iteration complexity in Theorem 3 of \cite{AgarwalEtAl(19)-FairRegression}. Since the cost is bounded on $[-1, 1]$ and $\widehat{\cost}(h) - \hat{\epsilon} \leq \widehat{\cost}(h) \leq 1$ for any $h \in \cH$, we see that $\rho \leq 1$.
    \end{proof}
\end{lemma}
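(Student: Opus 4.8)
The plan is to recognize the saddle-point search in Algorithm~\ref{alg: exp grad for fairness frontier, extremes} as a standard instance of no-regret dynamics for a two-player zero-sum game, and to reduce both claimed bounds to the regret guarantee of exponentiated gradient, exactly as in the reductions framework of \cite{agarwal2018reductions, AgarwalEtAl(19)-FairRegression}. Concretely, I would view the Lagrangian $L(Q_h, \lambda)$ as the payoff of a game in which the $Q_h$-player is the minimizer and the $\lambda$-player is the maximizer over the bounded dual domain $[0, B]$. At round $t$ the $\lambda$-player commits to $\lambda_t$ (via the logistic reparametrization of $\theta_t$) and the $Q_h$-player responds with the exact best response $h_t = \best_h(\lambda_t)$.

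The first step is to relate the reported gap $\nu_t$ of the time-averaged iterates $(\Qhat_{h,t}, \hat{\lambda}_t)$ to the players' regrets. Since for any fixed $\lambda$ the map $Q_h \mapsto L(Q_h, \lambda)$ is minimized by a best response and the $Q_h$-player plays one at every round, her cumulative regret relative to the best fixed action is nonpositive; hence the duality gap, which upper bounds $\nu_t$ by the definition of $\nu_t$ in the pseudocode, is controlled entirely by the $\lambda$-player's average external regret over the first $t$ rounds. This is the key structural observation and is what makes the bound one-sided.

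The second step is to invoke the exponentiated-gradient (multiplicative-weights) regret bound for the $\lambda$-player. Her per-round linear payoff is the constraint violation $\costhat(h_t) - \hat{\epsilon}$, whose magnitude is at most $\rho$ by the definition of $\rho$, and her action set is effectively the two-point set $\{0, B\}$, which is the source of the $\log(2)$ term (a single, $k=1$, cost constraint). The standard mirror-descent analysis then decomposes the cumulative regret into a range/entropy term scaling like $B \log(2)/\eta$ and a second-order term accumulating $\eta \rho^2 B$ per round; dividing by $t$ yields $\nu_t \le \frac{B \log(2)}{\eta t} + \eta \rho^2 B$. Plugging in $\eta = \frac{\nu}{2\rho^2 B}$ sets the second term to $\nu/2$ and forces the first below $\nu/2$ once $t \ge \frac{4 \rho^2 B^2 \log(2)}{\nu^2}$, giving the iteration complexity; finally, since costs lie in $[-1,1]$ we have $\costhat(h) - \hat{\epsilon} \le \costhat(h) \le 1$, so $\rho \le 1$ and the bound simplifies to $4 B^2 \log(2)/\nu^2$.

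The main obstacle I anticipate is bookkeeping the $B$-dependence of the regret bound: one must verify that exponentiated gradient over the rescaled dual domain $[0,B]$ contributes a range term of order $B \log(2)$ and a per-round term of order $\eta \rho^2 B$ (rather than $\eta \rho^2 B^2$), and that the logistic update in the pseudocode is precisely the mirror-descent step for this domain. Since this is exactly the computation carried out in the proof of Theorem~3 of \cite{AgarwalEtAl(19)-FairRegression}, the remaining work is to confirm that our Lagrangian, our constraint-violation payoffs, and our bound $B$ map onto their setup so that their iteration-complexity analysis transfers with only the substitution $k=1$.
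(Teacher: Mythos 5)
Your proposal is correct and follows essentially the same route as the paper: the paper's proof simply defers to the iteration-complexity analysis in Theorem 3 of \cite{AgarwalEtAl(19)-FairRegression}, which is precisely the no-regret argument you reconstruct --- the best-responding $Q_h$-player contributes no regret, the $\lambda$-player's exponentiated-gradient regret over the scaled two-point simplex (one cost constraint, hence $\log(2)$) gives $\nu_t \leq \frac{B\log(2)}{\eta t} + \eta \rho^2 B$, and the choice $\eta = \frac{\nu}{2\rho^2 B}$ yields the stated iteration count. Your closing observation that $\rho \leq 1$, via $\widehat{\cost}(h) - \hat{\epsilon} \leq \widehat{\cost}(h) \leq 1$ with costs bounded in $[-1,1]$, is the same (equally terse) justification the paper gives.
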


\subsubsection{Solution Quality for Algorithm \ref{alg: exp grad for fairness frontier, extremes}} \label{section: solution quality for algorithm exp grad}

Let $\Lambda := \left\{ \lambda \in \mathbb{R}_{+} \colon \lambda \leq B \right\}$ denote the domain of $\lambda$. Throughout this section, we assume we are given a pair $(\hat{Q}_h, \hat{\lambda})$ that is a $\nu$-approximate saddle point of the Lagrangian
    \begin{align*}
        & L(\hat{Q}_h, \hat{\lambda}) \leq L(Q_h, \hat \lambda) + \nu \mbox{ for all } Q_h \in \Delta(\cH), \\
        & L(\hat{Q}_h, \hat{\lambda}) \geq L(\hat{Q}_h, \lambda) - \nu \mbox{ for all } 0 \leq \lambda \leq B.
    \end{align*}
We extend Lemma 1, Lemma 2 and Lemma 3 of \cite{agarwal2018reductions} to our setting.

\begin{lemma}\label{lemma: lemma 1 of fair classification, extremes}
    The pair $(\hat{Q}_h, \hat{\lambda})$ satisfies
        \begin{equation*}
            \hat{\lambda} \left( \widehat{\cost}(\hat{Q}_h) - \hat{\epsilon} \right) \geq B \left( \widehat{\cost}(\hat{Q}_h) - \hat{\epsilon} \right)_{+} - \nu,
        \end{equation*}
    where $\left(x\right)_{+} = \max\{x, 0\}$.
    
    \begin{proof}
        We consider a dual variable $\lambda$ that is defined as
            \begin{equation*}
                \lambda = \begin{cases}
                    0 \mbox{ if } \widehat{\cost}(\hat{Q}_h) \leq \hat{\epsilon} \\
                    B \mbox{ otherwise}.
                \end{cases}
            \end{equation*}
        From the $\nu$-approximate optimality conditions,
            \begin{align*}
                \widehat{\disp}(\hat{Q}) + \hat{\lambda} \left( \widehat{\cost}(\hat{Q}) - \hat{\epsilon} \right) &= L(\hat{Q}, \hat{\lambda}) \\
                &\geq L(\hat{Q}, \lambda) - \nu \\
                &= \widehat{\disp}(\hat{Q}) + \lambda \left( \costhat(Q) - \hat{\epsilon} \right),
            \end{align*}
        and the claim follows by our choice of $\lambda$. 
    \end{proof}
\end{lemma}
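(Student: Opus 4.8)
The plan is to extract the claim from a single ingredient: the dual player's half of the $\nu$-approximate saddle point condition, $L(\hat{Q}_h, \hat{\lambda}) \geq L(\hat{Q}_h, \lambda) - \nu$ for every admissible $\lambda \in [0, B]$. The guiding idea is that this inequality holds for \emph{every} competitor $\lambda$ in the feasible range, so I am free to instantiate it at whatever $\lambda$ makes the right-hand side most useful. A single judicious choice will turn the term $\lambda(\widehat{\cost}(\hat{Q}_h) - \hat{\epsilon})$ into exactly the positive-part expression $B(\widehat{\cost}(\hat{Q}_h) - \hat{\epsilon})_{+}$ appearing in the statement.

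Concretely, I would first take the competitor $\lambda$ to be the best response of the $\lambda$-player to $\hat{Q}_h$, namely $\lambda = B$ when $\widehat{\cost}(\hat{Q}_h) > \hat{\epsilon}$ and $\lambda = 0$ otherwise. Since $B \geq 0$, this choice lies in $[0, B]$ and is therefore an admissible competitor. The second step is to record the identity $\lambda(\widehat{\cost}(\hat{Q}_h) - \hat{\epsilon}) = B(\widehat{\cost}(\hat{Q}_h) - \hat{\epsilon})_{+}$, verified by the case split: when the constraint is violated both sides equal $B(\widehat{\cost}(\hat{Q}_h) - \hat{\epsilon})$, and when it is satisfied both sides are zero. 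The third step is to substitute this $\lambda$ into the approximate optimality inequality, expand the Lagrangian $L(\hat{Q}_h, \lambda) = \widehat{\disp}(\hat{Q}_h) + \lambda(\widehat{\cost}(\hat{Q}_h) - \hat{\epsilon})$ on both sides, and cancel the common $\widehat{\disp}(\hat{Q}_h)$ term. This yields
\begin{equation*}
\hat{\lambda}\left(\widehat{\cost}(\hat{Q}_h) - \hat{\epsilon}\right) \;\geq\; B\left(\widehat{\cost}(\hat{Q}_h) - \hat{\epsilon}\right)_{+} - \nu,
\end{equation*}
which is precisely the claim.

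I do not expect a genuine obstacle: this is a direct specialization of Lemma~1 of \cite{agarwal2018reductions} to our Lagrangian, and the whole argument reduces to a one-line cancellation once the right competitor $\lambda$ is selected. The only points demanding mild care are confirming that the chosen $\lambda$ is feasible (immediate from $B \geq 0$) and checking that the two-case definition correctly reproduces the positive-part operator, so that the substitution lands exactly on $B(\widehat{\cost}(\hat{Q}_h) - \hat{\epsilon})_{+}$ rather than some weaker quantity. With those in place, the result follows immediately.
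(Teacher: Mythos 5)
Your proposal is correct and follows essentially the same route as the paper's own proof: you instantiate the dual player's half of the $\nu$-approximate saddle point condition at the best-response competitor $\lambda \in \{0, B\}$, verify the identity $\lambda(\widehat{\cost}(\hat{Q}_h) - \hat{\epsilon}) = B(\widehat{\cost}(\hat{Q}_h) - \hat{\epsilon})_{+}$ by the same case split, and cancel the common $\widehat{\disp}(\hat{Q}_h)$ term. Your write-up is, if anything, slightly more explicit than the paper's about the feasibility of the competitor and the positive-part identity, but the argument is identical.
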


\begin{lemma}\label{lemma: lemma 2 of fair classification, extremes}
    The distribution $\hat{Q}_h$ satisfies 
    $$\widehat{\disp}(\hat{Q}_h) \leq \widehat{\disp}(Q_h) + 2 \nu$$ 
    for any $Q_h$ satisfying the empirical constraint (i.e., any $Q_h$ such that $\widehat{\cost}(Q_h) \leq \hat{\epsilon}$).
    
    \begin{proof}
        Assume $Q_h$ satisfies $\widehat{\cost}(Q_h) \leq \hat{\epsilon}$. Since $\hat{\lambda} \geq 0$, we have that 
            \[
                L(Q_h, \hat \lambda) = \widehat{\disp}(Q_h) + \hat{\lambda} \left( \widehat{\cost}(Q_h) - \hat{\epsilon} \right) \leq \widehat{\disp}(Q_h).
            \]
        Moreover, the $\nu$-approximate optimality conditions imply that $L(\hat{Q}_h, \hat{\lambda}) \leq L(Q_h, \hat{\lambda}) + \nu$. Together, these inequalities imply that 
            \[
                L(\hat{Q}_h, \hat{\lambda}) \leq \widehat{\disp}(Q_h) + \nu.
            \]
        Next, we use Lemma \ref{lemma: lemma 1 of fair classification, extremes} to construct a lower bound for $L(\hat{Q}_h, \hat{\lambda})$. We have that 
            \begin{align*}
                L(\hat{Q}_h, \hat{\lambda}) &= \disphat(\Qhat_h) + \hat{\lambda} \left( \costhat(Q_h) - \hat{\epsilon}^\prime \right) \\
                &\geq \disphat(\hat{Q}_h) + B \left( \costhat(\Qhat) - \hat{\epsilon}^\prime \right)_{+} - \nu \\
                &\geq \disphat(\hat{Q}_h) - \nu.
            \end{align*}
        By combining the inequalities $L(\hat{Q}_h, \hat{\lambda}) \geq \disphat(\hat{Q}_h) - \nu$ and $L(\hat{Q}_h, \hat{\lambda}) \leq \disphat(Q_h) + \nu$, we arrive at the claim.
    \end{proof}
\end{lemma}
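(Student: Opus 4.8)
The plan is to sandwich the Lagrangian value $L(\Qhat_h, \lambdahat)$ at the approximate saddle point between an upper bound driven by the feasibility of the competitor $Q_h$ and a lower bound driven by the near-complementary-slackness already recorded in Lemma~\ref{lemma: lemma 1 of fair classification, extremes}. The target inequality then follows by transitivity, with each of the two approximations contributing one factor of $\nu$ for a total of $2\nu$.

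First I would establish the upper bound. Fix any $Q_h$ with $\costhat(Q_h) \leq \hat{\epsilon}$. Since $\lambdahat \geq 0$ (it lives in $\Lambda = \{\lambda \geq 0 : \lambda \leq B\}$) and the slack $\costhat(Q_h) - \hat{\epsilon}$ is nonpositive, the penalty term is nonpositive, so $L(Q_h, \lambdahat) = \disphat(Q_h) + \lambdahat(\costhat(Q_h) - \hat{\epsilon}) \leq \disphat(Q_h)$. Invoking $\nu$-approximate optimality of $\Qhat_h$ in the primal variable, $L(\Qhat_h, \lambdahat) \leq L(Q_h, \lambdahat) + \nu$, and chaining gives $L(\Qhat_h, \lambdahat) \leq \disphat(Q_h) + \nu$.

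Next I would establish the lower bound, which is where Lemma~\ref{lemma: lemma 1 of fair classification, extremes} does the work. Writing $L(\Qhat_h, \lambdahat) = \disphat(\Qhat_h) + \lambdahat(\costhat(\Qhat_h) - \hat{\epsilon})$, that lemma controls the penalty term from below via $\lambdahat(\costhat(\Qhat_h) - \hat{\epsilon}) \geq B(\costhat(\Qhat_h) - \hat{\epsilon})_{+} - \nu \geq -\nu$, since the hinge term is nonnegative and $B \geq 0$. Hence $L(\Qhat_h, \lambdahat) \geq \disphat(\Qhat_h) - \nu$. Combining with the upper bound yields $\disphat(\Qhat_h) - \nu \leq \disphat(Q_h) + \nu$, which is the claim.

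The two ingredients are individually elementary, and the only substantive step is the lower bound, whose force is inherited entirely from Lemma~\ref{lemma: lemma 1 of fair classification, extremes} (itself the $\nu$-approximate dual-optimality condition specialized to the best response $\lambda \in \{0, B\}$). So I expect no genuine obstacle: the content is simply that the approximate saddle-point value is simultaneously close to the best achievable disparity (from the primal side) and close to $\disphat(\Qhat_h)$ (from the dual side, via the controlled constraint violation). The one point I would watch is the sign bookkeeping, namely ensuring the penalty is bounded \emph{above} by $0$ when evaluated at the feasible $Q_h$ and bounded \emph{below} by $-\nu$ when evaluated at $\Qhat_h$, so that the two copies of $\nu$ add rather than cancel.
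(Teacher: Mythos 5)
Your proposal is correct and follows essentially the same argument as the paper: the identical upper bound $L(\Qhat_h, \lambdahat) \leq \disphat(Q_h) + \nu$ via feasibility of $Q_h$ and primal $\nu$-optimality, and the identical lower bound $L(\Qhat_h, \lambdahat) \geq \disphat(\Qhat_h) - \nu$ via Lemma~\ref{lemma: lemma 1 of fair classification, extremes}, combined by transitivity. Your sign bookkeeping is also exactly how the paper gets the two factors of $\nu$ to add, so there is nothing to fix.
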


\begin{lemma}\label{lemma: lemma 3 of fair classification, extremes}
    Assume the empirical constraint $\costhat(Q_h) \leq \hat{\epsilon}$ is feasible. Then, the distribution $\hat{Q}_h$ approximately satisfies the empirical cost constraint with
        \begin{equation*}
            \costhat(\hat{Q}_h) - \hat{\epsilon} \leq \frac{|\beta_0| + |\beta_1| + 2 \nu}{B}.
        \end{equation*}
     
    \begin{proof}
        Let $Q_h$ satisfy $\costhat(Q_h) \leq \hat{\epsilon}$. Recall from the proof of Lemma \ref{lemma: lemma 2 of fair classification, extremes}, we showed that
        \[
        \disphat(\hat{Q}_h) + B \left( \costhat(\Qhat_h) - \hat{\epsilon} \right)_{+} - \nu \leq L(\hat{Q}_h, \hat{\lambda}) \leq
        \]
        \[
        \disphat(Q_h) + \nu.
        \]
        Therefore, we observe that 
            \begin{equation*}
                B \left( \costhat(Q_h) - \hat{\epsilon} \right) \leq \left( \disphat(Q_h) - \disphat(\hat{Q}_h) \right) + 2\nu.
            \end{equation*}
        Since we can bound $\disphat(Q_h) - \disphat(\hat{Q}_h)$ by $|\beta_0| + |\beta_1|$, the result follows.
    \end{proof}
\end{lemma}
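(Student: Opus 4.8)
The plan is to reuse the Lagrangian-duality argument from the proof of Lemma~\ref{lemma: lemma 2 of fair classification, extremes}, but now extract a constraint-violation bound instead of an optimality bound. Since the empirical problem is assumed feasible, I would first fix any reference distribution $Q_h \in \Delta(\cH)$ with $\costhat(Q_h) \le \hat{\epsilon}$, and then sandwich the Lagrangian value $L(\hat{Q}_h, \hat{\lambda})$ between two quantities before solving for $\costhat(\hat{Q}_h) - \hat{\epsilon}$.

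For the lower bound, I would apply Lemma~\ref{lemma: lemma 1 of fair classification, extremes} to the $\nu$-approximate saddle point, which gives $\hat{\lambda}\bigl(\costhat(\hat{Q}_h) - \hat{\epsilon}\bigr) \ge B\bigl(\costhat(\hat{Q}_h) - \hat{\epsilon}\bigr)_{+} - \nu$; substituting into $L(\hat{Q}_h, \hat{\lambda}) = \disphat(\hat{Q}_h) + \hat{\lambda}\bigl(\costhat(\hat{Q}_h) - \hat{\epsilon}\bigr)$ yields
\[
L(\hat{Q}_h, \hat{\lambda}) \ge \disphat(\hat{Q}_h) + B\bigl(\costhat(\hat{Q}_h) - \hat{\epsilon}\bigr)_{+} - \nu.
\]
For the upper bound, I would use the $\nu$-approximate primal optimality $L(\hat{Q}_h, \hat{\lambda}) \le L(Q_h, \hat{\lambda}) + \nu$ and then discard the penalty term using $\hat{\lambda} \ge 0$ together with the feasibility $\costhat(Q_h) - \hat{\epsilon} \le 0$, giving $L(\hat{Q}_h, \hat{\lambda}) \le \disphat(Q_h) + \nu$. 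Chaining the two bounds and cancelling $\disphat(\hat{Q}_h)$ leaves
\[
B\bigl(\costhat(\hat{Q}_h) - \hat{\epsilon}\bigr)_{+} \le \bigl(\disphat(Q_h) - \disphat(\hat{Q}_h)\bigr) + 2\nu.
\]

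The one new ingredient, and the step I expect to require the most care, is uniformly bounding the disparity gap $\disphat(Q_h) - \disphat(\hat{Q}_h)$. Here I would observe that each threshold classifier $h_f$ takes values in $\{0,1\}$, so every conditional mean $\hat{\e}[h_f(X_i, Z_\alpha) \mid \cE_{i,a}]$ --- and hence its mixture under any $Q_h \in \Delta(\cH)$ --- lies in $[0,1]$; therefore each summand $\beta_a \hat{\e}[\,\cdot \mid \cE_{i,a}]$ varies over an interval of width $|\beta_a|$, so $\disphat$ has total range at most $|\beta_0| + |\beta_1|$ and the gap is bounded by $|\beta_0| + |\beta_1|$. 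Dividing by $B$ and using $\costhat(\hat{Q}_h) - \hat{\epsilon} \le \bigl(\costhat(\hat{Q}_h) - \hat{\epsilon}\bigr)_{+}$ then gives the claim. The argument parallels Lemma~3 of \cite{agarwal2018reductions}; the only subtlety is carrying the two additive $\nu$ slacks correctly through the approximate (rather than exact) saddle-point inequalities.
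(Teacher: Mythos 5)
Your proof is correct and takes essentially the same route as the paper's: the paper invokes the identical sandwich $\disphat(\hat{Q}_h) + B\left( \costhat(\hat{Q}_h) - \hat{\epsilon} \right)_{+} - \nu \leq L(\hat{Q}_h, \hat{\lambda}) \leq \disphat(Q_h) + \nu$ (established inside its proof of Lemma~\ref{lemma: lemma 2 of fair classification, extremes} via Lemma~\ref{lemma: lemma 1 of fair classification, extremes} and approximate primal optimality, exactly as you rederive it) and then bounds the disparity gap by $|\beta_0| + |\beta_1|$. Your explicit justification that the conditional means of the $\{0,1\}$-valued threshold classifiers lie in $[0,1]$, and your careful use of $x \leq (x)_{+}$, only make rigorous what the paper leaves implicit (its final display even contains a typo, writing $\costhat(Q_h)$ where $\costhat(\hat{Q}_h)$ is meant), so there is nothing to fix.
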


\begin{lemma}\label{lemma: solution quality of exp grad for extremes}
    Suppose that $Q_h$ is any feasible solution to discretized sample problem. Then, the solution $\hat{Q}_h$ returned by Algorithm \ref{alg: exp grad for fairness frontier, extremes} satisfies 
        \begin{align*}
            & \disphat(\hat{Q}_h) \leq \disphat(Q_h) + 2 \nu \\
            & \costhat(\hat{Q}_h) \leq \hat{\epsilon} + \frac{|\beta_0| + |\beta_1| + 2\nu}{B}.
        \end{align*}
        
    \begin{proof}
        This is an immediate consequence of Lemma \ref{lemma: iteration complexity for extremes}, Lemma \ref{lemma: lemma 2 of fair classification, extremes} and Lemma \ref{lemma: lemma 3 of fair classification, extremes}. If the algorithm returns $null$, then these inequalities are vacuously satisfied. 
    \end{proof}
\end{lemma}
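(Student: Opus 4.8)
The plan is to obtain the lemma as a direct corollary of the three preceding auxiliary results, so that essentially all of the work lies in assembling them and checking that the object each of them speaks about --- a $\nu$-approximate saddle point of the Lagrangian $L(Q_h, \lambda) = \disphat(Q_h) + \lambda(\costhat(Q_h) - \hat{\epsilon})$ --- is precisely the pair $(\Qhat_h, \hat{\lambda})$ that Algorithm~\ref{alg: exp grad for fairness frontier, extremes} returns. First I would invoke Lemma~\ref{lemma: iteration complexity for extremes}, which bounds the suboptimality gap $\nu_t$ and shows that, for the stated learning rate, the stopping criterion $\nu_t \leq \nu$ is met after finitely many iterations. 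At termination the running averages $(\Qhat_h, \hat{\lambda})$ form a $\nu$-approximate saddle point, which is exactly the regime assumed at the top of this subsection and in the hypotheses of Lemmas~\ref{lemma: lemma 2 of fair classification, extremes}--\ref{lemma: lemma 3 of fair classification, extremes}.

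With the saddle-point property in hand, the two displayed inequalities follow by citing the two lemmas in turn. The disparity bound $\disphat(\Qhat_h) \leq \disphat(Q_h) + 2\nu$ is exactly the conclusion of Lemma~\ref{lemma: lemma 2 of fair classification, extremes}, applied to the comparator $Q_h$; the hypothesis of that lemma --- feasibility $\costhat(Q_h) \leq \hat{\epsilon}$ --- is precisely the hypothesis of the present lemma. The cost bound $\costhat(\Qhat_h) \leq \hat{\epsilon} + (|\beta_0| + |\beta_1| + 2\nu)/B$ is the conclusion of Lemma~\ref{lemma: lemma 3 of fair classification, extremes}, whose hypothesis that the empirical constraint is feasible is again witnessed by $Q_h$. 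I would then observe that this cost bound coincides with the return condition checked inside Algorithm~\ref{alg: exp grad for fairness frontier, extremes}, so that under the feasibility hypothesis the check passes and the algorithm emits the non-null pair $(\Qhat_h, \hat{\lambda})$ rather than null.

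The only remaining case is when the algorithm returns null; here there is no returned $\Qhat_h$ to bound, so both inequalities hold vacuously and the degenerate branch is dispensed with. I do not expect a genuine obstacle in this argument --- the substance lives entirely in the three auxiliary lemmas --- but the one point that requires care is the bookkeeping reconciling the algorithm's two return branches with the saddle-point hypotheses: one must confirm that the $\Qhat_h$ appearing in Lemmas~\ref{lemma: lemma 2 of fair classification, extremes}--\ref{lemma: lemma 3 of fair classification, extremes} is literally the averaged iterate the algorithm outputs, and that the multiplier bound $B$ in those lemmas is the same $B_\lambda$ used by the algorithm, so that the claimed cost slack matches the slack permitted by the return condition.
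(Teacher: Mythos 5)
Your proposal is correct and follows essentially the same route as the paper's proof, which likewise assembles Lemma~\ref{lemma: iteration complexity for extremes} (termination at a $\nu$-approximate saddle point), Lemma~\ref{lemma: lemma 2 of fair classification, extremes} (the disparity bound) and Lemma~\ref{lemma: lemma 3 of fair classification, extremes} (the cost bound), and dispatches the $null$ branch as vacuous. Your added bookkeeping --- checking that the cost slack $\frac{|\beta_0|+|\beta_1|+2\nu}{B}$ matches the algorithm's return condition, so feasibility forces a non-null return --- is a faithful elaboration of what the paper leaves implicit.
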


\subsubsection{Concentration Inequality}
 We restate Lemma 2 in \cite{AgarwalEtAl(19)-FairRegression}, which provides a uniform concentration inequality on the convergence of a sample moment over a function class. 
 
 Let $\cG$ be a class of functions $g \colon \cU \rightarrow \mathbb{R}$ over some space $\cU$. The \textit{Rademacher complexity} of the function class $\cG$ is defined as 
    \[
    R_n(\cG) := \sup_{u_1, \hdots, u_n \in \cU} \mathbb{E}_{\sigma}\left[ \sup_{g \in \cG} \left| \frac{1}{n} \sum_{i=1}^{n} \sigma_i g(u_i) \right| \right],
    \]
where the expectation is defined over the i.i.d. random variables $\sigma_1, \hdots, \sigma_n$ with $P(\sigma_i = 1) = P(\sigma_i = -1) = 1/2$. 

\begin{lemma}[Lemma 2 in \cite{AgarwalEtAl(19)-FairRegression}]\label{lemma: lemma 2 in fair reg paper}
    Let $D$ be a distribution over a pair of random variables $(S, U)$ taking values in $\cS \times \cU$. Let $\cG$ be a class of functions $g \colon \cU \rightarrow [0, 1]$, and let $\psi \colon \cS \times [0, 1] \rightarrow [-1, 1]$ be a contraction in its second argument (i.e., for all $s \in \cS$ and $t, t' \in [0, 1]$, $|\psi(s, t) - \psi(s, t')| \leq |t - t'|$). Then, with probability $1 - \delta$, for all $g \in \cG$,
    \[
    \left| \hat{\mathbb{E}}\left[ \psi(S, g(U)) \right] - \mathbb{E}\left[\psi(S, g(U)) \right]   \right| \leq
    \]
    \[
    4 R_n(\cG) + \frac{2}{\sqrt{n}} + \sqrt{ \frac{2 \ln(2/\delta)}{n}  },
    \]
    where the expectation is with respect to $D$ and the empirical expectation is based on $n$ i.i.d. draws from $D$. If $\psi$ is linear in its second argument, then a tighter bound holds with $4 R_n(\cG)$ replaced by $2 R_n(\cG)$. 
\end{lemma}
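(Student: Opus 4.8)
The plan is to treat this as a standard uniform-convergence bound and prove it via the symmetrization / contraction / bounded-differences recipe. Define the uniform deviation
\[
\Phi(S_{1:n}, U_{1:n}) := \sup_{g \in \cG} \abs{ \frac{1}{n}\sum_{i=1}^{n} \psi(S_i, g(U_i)) - \e[\psi(S, g(U))] }.
\]
The goal is to show $\Phi \le 4R_n(\cG) + \tfrac{2}{\sqrt n} + \sqrt{2\ln(2/\delta)/n}$ with probability at least $1-\delta$, with the leading constant improving to $2R_n(\cG)$ when $\psi$ is linear in its second argument.

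First I would control $\Phi$ around its mean. Because $\psi$ takes values in $[-1,1]$, replacing a single pair $(S_i, U_i)$ perturbs $\tfrac1n\sum_i \psi(S_i, g(U_i))$ by at most $2/n$ uniformly over $g \in \cG$, so $\Phi$ has bounded differences with constants $2/n$. McDiarmid's inequality (applied to the two one-sided sup-deviations $\sup_g(\hat{\mathbb{E}}-\e)$ and $\sup_g(\e-\hat{\mathbb{E}})$ and combined by a union bound, which produces the $\ln(2/\delta)$ factor) then yields $\Phi \le \e[\Phi] + \sqrt{2\ln(2/\delta)/n}$ with probability at least $1-\delta$. This accounts for the last term.

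Next I would bound $\e[\Phi]$ by symmetrization: introducing i.i.d.\ Rademacher signs $\sigma_i$ and a ghost sample, the standard argument gives $\e[\Phi] \le 2\, \e_{S,U,\sigma}\big[\sup_{g\in\cG} \abs{\tfrac1n \sum_i \sigma_i \psi(S_i, g(U_i))}\big]$. To expose the Rademacher complexity of $\cG$ itself I would split off the offset: write $\psi(S_i, g(U_i)) = \big(\psi(S_i, g(U_i)) - \psi(S_i, 0)\big) + \psi(S_i, 0)$. The offset term does not depend on $g$, and since $\abs{\psi(S_i,0)}\le 1$, Cauchy--Schwarz (or Khintchine) gives $\e_\sigma \abs{\tfrac1n\sum_i \sigma_i \psi(S_i,0)} \le 1/\sqrt n$; carrying the symmetrization factor $2$ through yields the $2/\sqrt n$ term. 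The centered map $\tilde\psi(s,\cdot) := \psi(s,\cdot)-\psi(s,0)$ is a contraction vanishing at $0$, so the Ledoux--Talagrand contraction inequality (in its absolute-value form) peels it off at the cost of a factor $2$, giving $\e_\sigma[\sup_g \abs{\tfrac1n\sum_i\sigma_i\tilde\psi(S_i,g(U_i))}] \le 2\, \e_\sigma[\sup_g \abs{\tfrac1n\sum_i\sigma_i g(U_i)}] \le 2R_n(\cG)$, where the final step upper-bounds the expected empirical Rademacher average by the worst-case quantity $R_n(\cG)$ defined above. Together with the symmetrization factor this produces $4R_n(\cG)$, and summing the three contributions gives the claim.

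The main obstacle is the contraction step: the absolute-value form of the Ledoux--Talagrand inequality requires the contraction to vanish at the origin, which is precisely why the offset $\psi(s,0)$ must be separated out first, and it is this absolute value that forces the extra factor of $2$ (hence $4R_n(\cG)$ rather than $2R_n(\cG)$). When $\psi$ is linear in its second argument, the peeling can instead be carried out by pulling the Lipschitz coefficient directly out of the supremum without the absolute-value penalty, recovering the sharper $2R_n(\cG)$ constant. All remaining steps are routine bookkeeping of constants.
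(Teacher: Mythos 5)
The paper never proves this lemma; it is imported verbatim as Lemma~2 of \cite{AgarwalEtAl(19)-FairRegression} (as the bracketed attribution in the statement indicates), so there is no in-paper proof to compare against. Your symmetrization--offset--contraction--McDiarmid argument is correct and is essentially the standard proof given in that reference: bounded differences with constants $2/n$ applied to the two one-sided suprema yields the $\sqrt{2\ln(2/\delta)/n}$ term, Cauchy--Schwarz on the $g$-independent offset $\psi(s,0)$ (carried through the symmetrization factor $2$) yields $2/\sqrt{n}$, and the absolute-value form of the Ledoux--Talagrand contraction principle---which, as you correctly note, requires recentring to $\psi(s,\cdot)-\psi(s,0)$ so the per-coordinate contractions vanish at the origin---produces the factor $4R_n(\cG)$. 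The only step you gloss is the linear case: there $\psi(s,t)=a(s)t+b(s)$ with $|a(s)|\le 1$, and removing the multipliers $a(S_i)$ without the absolute-value penalty takes a short extra argument (flip signs so $a(S_i)\ge 0$, then use convexity in each $a_i$ plus Jensen to reduce to $a_i\in\{0,1\}$), but this is routine and the claimed $2R_n(\cG)$ constant is right.
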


\subsection{Auxiliary Lemmas for the Proof of Theorem \ref{theorem:selective labels extremes}}
\subsubsection{Concentration result for disparity under selective labels}
\begin{lemma}
\label{lemma:disparity concentration selective labels}
\begin{align*}
& \p\Bigg[  \abs{\frac{\beta_1 \hnuma }{\hdena} - \frac{\beta_0 \hnumo}{\hdeno} -  \Big(\frac{\beta_1 \numa}{\dena} - \frac{\beta_0 \numo}{\deno} \Big)} \geq \epsilon \Bigg] \\
 &\leq 4\exp\Bigg[-\frac{n}{2}\Bigg(\frac{\epsilon \bar{\omega}_{\wedge}}{8\beta}  - 4 R_n(\cG) - \frac{2}{\sqrt{n}}\Bigg)^2 \Bigg] + 2\exp \Big[\frac{-n \epsilon^2 \bar{\omega}_{\wedge}^4}{64\beta^2\omega_{\vee}^2} \Big] \\
 &+2\exp\Big[\frac{-n\bar{\omega}_{\wedge}^2}{4} \Big]
\end{align*} 
where $\omega_{\vee} = \max(\numa, \numo)$, $\bar{\omega}_{\wedge} = \min(\dena, \deno)$ and ${\beta = \max(\left| \beta_1 \right|, \left |\beta_0 \right|)}$
\end{lemma}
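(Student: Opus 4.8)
The plan is to analyze each group's ratio estimator separately and then recombine. Writing $\beta = \max(\abs{\beta_0}, \abs{\beta_1})$, the triangle inequality bounds the quantity inside the probability by $\beta$ times the sum over $a \in \{0,1\}$ of the single-group ratio deviations $\abs{\hat{\omega}_a/\hat{\bar{\omega}}_a - \omega_a/\bar{\omega}_a}$. Hence the event $\{\,\text{total} \ge \epsilon\,\}$ is contained in the union over the two groups of $\{\abs{\hat{\omega}_a/\hat{\bar{\omega}}_a - \omega_a/\bar{\omega}_a} \ge \epsilon/(2\beta)\}$. I would then expand each single-group deviation through the exact identity $\frac{\hat{\omega}_a}{\hat{\bar{\omega}}_a} - \frac{\omega_a}{\bar{\omega}_a} = \frac{\hat{\omega}_a - \omega_a}{\hat{\bar{\omega}}_a} - \frac{\omega_a(\hat{\bar{\omega}}_a - \bar{\omega}_a)}{\hat{\bar{\omega}}_a \bar{\omega}_a}$, which cleanly separates a numerator-deviation term from a denominator-deviation term.

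The central device is a ``good event'' on which the random denominators stay away from zero. Define $\cE$ to be the event that $\hat{\bar{\omega}}_a \ge \bar{\omega}_a/2$ for both $a$. Since $g(\mu(X_i)) \in [0,1]$, Hoeffding's inequality controls $\p(\hat{\bar{\omega}}_a < \bar{\omega}_a/2)$, and a union over the two groups produces the third term $2\exp[-n\bar{\omega}_{\wedge}^2/4]$. On $\cE$ I replace every occurrence of $\hat{\bar{\omega}}_a$ in a denominator by its lower bound $\bar{\omega}_a/2 \ge \bar{\omega}_{\wedge}/2$, so that each single-group deviation is bounded by $\frac{2}{\bar{\omega}_{\wedge}}\abs{\hat{\omega}_a - \omega_a} + \frac{2\omega_{\vee}}{\bar{\omega}_{\wedge}^2}\abs{\hat{\bar{\omega}}_a - \bar{\omega}_a}$, where I have also used $\omega_a \le \omega_{\vee}$.

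It then remains to control the two raw deviations, and the essential observation is that only the numerators depend on the model $f$. For the numerators, $\hat{\omega}_a$ is an empirical average of $f(X)g(\mu(X))$ that must be controlled \emph{uniformly} over $f \in \cF$, so I would invoke Lemma~\ref{lemma: lemma 2 in fair reg paper} on the data distribution conditional on $A = a$, taking $S = g(\mu(X)) \in [0,1]$, $U = X$, and the contraction $\psi(s,t) = st$. Demanding that the numerator piece consume at most half of the per-group budget, i.e. $\abs{\hat{\omega}_a - \omega_a} \le \frac{\epsilon \bar{\omega}_{\wedge}}{8\beta}$, and inverting the lemma's bound $4R_n(\cG) + 2/\sqrt{n} + \sqrt{2\ln(2/\delta')/n}$ produces the first term $4\exp[-\tfrac{n}{2}(\tfrac{\epsilon\bar{\omega}_{\wedge}}{8\beta} - 4R_n(\cG) - \tfrac{2}{\sqrt{n}})^2]$, the factor $4$ arising as two groups times the two-sided inversion. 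For the denominators, $\hat{\bar{\omega}}_a$ is $f$-free, so a pointwise Hoeffding bound suffices: requiring $\abs{\hat{\bar{\omega}}_a - \bar{\omega}_a} \le \frac{\epsilon \bar{\omega}_{\wedge}^2}{8\beta\omega_{\vee}}$ yields the second term $2\exp[-n\epsilon^2\bar{\omega}_{\wedge}^4/(64\beta^2\omega_{\vee}^2)]$. A final union bound over these three failure events assembles the stated inequality.

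The main obstacle is the ratio structure: because the random denominator $\hat{\bar{\omega}}_a$ can in principle be small, the decomposition is only valid on the good event $\cE$, and the thresholds allotted to the numerator and denominator deviations must be chosen compatibly so that each stays within its share of the $\epsilon$ budget while delivering the precise dependence on $\omega_{\vee}$ and $\bar{\omega}_{\wedge}$. A secondary but important point is keeping the two concentration regimes distinct: the numerator averages require the uniform Rademacher control of Lemma~\ref{lemma: lemma 2 in fair reg paper}, whereas the denominators admit the sharper $f$-independent Hoeffding bounds, and this separation is exactly what makes $R_n(\cG)$ appear in the first term but not in the other two.
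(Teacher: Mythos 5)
Your proposal is correct and follows essentially the same route as the paper's proof: triangle inequality plus union bound to reduce to per-group ratio deviations at level $\epsilon/(2\beta)$, a decomposition of each ratio into a numerator-deviation term (controlled uniformly over the function class via the inverted Lemma~\ref{lemma: lemma 2 in fair reg paper}) and an $f$-free denominator-deviation term (controlled by Hoeffding), with the event $\hat{\bar{\omega}}_a \geq \bar{\omega}_a/2$ handling the random denominator — the paper packages this last step as bounding $\p[\abs{\bar{\omega}(\hat{\bar{\omega}}-\bar{\omega}) + \bar{\omega}^2] \leq \bar{\omega}^2/2]$ inside a single common-denominator fraction rather than as an explicit good event, but the content is identical. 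The only discrepancies are immaterial constant-factor bookkeeping in the Hoeffding terms (e.g., a $32$ versus $64$ in the middle exponent), of the same order as the paper's own slack with factors of two.
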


\begin{proof}
For exposition, we first show the steps for qualified affirmative action and then extend the result to the general disparity.
We can rewrite the qualified affirmative action criterion as
\begin{equation}
    \mathbb{E}[f(X) | Y = 1, A = 1] = \frac{\mathbb{E}[f(X) \mu(X) | A = 1]}{\mathbb{E}[ \mu(X) | A = 1]}
\end{equation} 
where  $\mu(x) := \e[Y \mid X = x]$.

$   \mathbb{E}[f(X) | Y = 1, A = 1] $
\begin{eqnarray}
  & =  \frac{\mathbb{E}[f(X) 1\{ Y = 1\} \mid A = 1]}{P(Y = 1 \mid A = 1)} \\
      & =  \frac{\mathbb{E}[f(X) \mathbb{E} [ 1\{ Y = 1\} \mid X, A = 1] \mid A = 1]}{E[P(Y = 1 \mid X, A = 1) \mid A =1]} \\
        & =  \frac{\mathbb{E}[f(X) P(Y = 1 \mid X, A = 1) \mid A = 1]}{E[\mu(X) \mid A =1]} \\
        & =  \frac{\mathbb{E}[f(X) \mu(X) \mid A = 1]}{E[\mu(X) \mid A =1]}
\end{eqnarray}

Assuming access to the oracle $\mu$ function, we can estimate this on the full training data as
\begin{equation}
     \frac{\mathbb{\hat E}[f(X) \mu(X, A = 1) | A = 1]}{\mathbb{\hat E}[ \mu(X, A = 1) | A = 1]}
\end{equation}

Next we will make use of Lemma 2 of \cite{AgarwalEtAl(19)-FairRegression}, which we restate here again for convenience. Under certain conditions on $\phi$ and $g$, with probability at least $1- \delta$
\[
\left| \hat{\mathbb{E}}\left[ \phi(S, g(U)) \right] - \mathbb{E}\left[ \phi(S, g(U)) \right]   \right| \leq 
\]
\[
4 R_n(\cG) + \frac{2}{\sqrt{n}} + \sqrt{ \frac{2 \ln(2/\delta)}{n}  }.
\]

We invert the bound by setting $\epsilon  = 4 R_n(\cG) + \frac{2}{\sqrt{n}} + \sqrt{ \frac{2 \ln(2/\delta)}{n}} $ and solving for $\delta$ to get
\begin{equation}
\delta = 
    2\exp\Bigg[-\frac{n}{2}\Bigg(\epsilon  - 4 R_n(\cG) - \frac{2}{\sqrt{n}}\Bigg)^2 \Bigg]
\end{equation}
     
Now we can restate Lemma 2 of \cite{AgarwalEtAl(19)-FairRegression} as 
\begin{equation} \label{eq:lemma2_restated}
   \mathbb{P}\Big[ \left| \hat{\mathbb{E}}\left[ \phi(S, g(U)) \right] - \mathbb{E}\left[ \phi(S, g(U)) \right]   \right| > \epsilon \Big]
\end{equation}
\[
\leq 2\exp\Bigg[-\frac{n}{2}\Bigg(\epsilon  - 4 R_n(\cG) - \frac{2}{\sqrt{n}}\Bigg)^2 \Bigg]
\]

Next we revisit the quantity that we want to bound:
\begin{equation} \label{eq:target}
    \abs{\frac{\num}{\den} - \frac{\hnum}{\hden}}
\end{equation}

where $\num = \mathbb{E}[f(X) \mu(X, A = 1) | A = 1]$ and $\den = \mathbb{E}[ \mu(X, A = 1) | A = 1]$ and correspondingly for $\hnum$ and $\hden$.
We will rewrite Expression~\ref{eq:target} as a ratio of differences.
We have 
\begin{eqnarray}
   \abs{\frac{\hnum}{\hden} - \frac{\num}{\den}} &= \abs{\frac{\hnum \den - \hden \num}{\hden \den}} \\ 
   &= \abs{\frac{\den(\hnum - \num)  - \num(\hden - \den)}{\den(\hden -\den) + \den^2}} \\ 
\end{eqnarray}
By triangle inequality and union bound, we have 
\[
\mathbb{P}\Big[|\frac{\den(\hnum - \num)  - \num(\hden - \den)}{\den(\hden - \den) + \den^2} | \geq \frac{t}{\den^2/2} \Big] 
\]
\[
< \mathbb{P}\Big[|\den(\hnum - \num)|  + |\num(\hden - \den)| \geq t\Big] + \mathbb{P}\Big[|(\hden - \den) + \den^2| \leq \frac{\den^2}{2} \Big]
\]
\[
<  \mathbb{P}\big[|\den(\hnum - \num)| \geq \frac{t}{2}\big]  + \mathbb{P}\big[|\num(\hden - \den)| \geq \frac{t}{2}\big] + \mathbb{P}\Big[|\den(\hden -\den) + \den^2| \leq \frac{\den^2}{2} \Big] 
\]

Since $0 \leq \mu(X, A = 1) \leq 1$, we can use a Hoeffding bound for the quantity $|(\hden -\den)|$.
Note that $0 \leq  \num \leq \den \leq 1$.
Then applying Hoeffding's inequality gives us
\begin{eqnarray}
   \mathbb{P}\big[\abs{\num(\hden - \den)} \geq \frac{t}{2}\big] \leq 2 \exp \Big[\frac{-n t^2}{4\num^2} \Big]
\end{eqnarray}

Next we bound the third term: 
\begin{eqnarray}
   \mathbb{P}\Big[|\den(\hden -\den) + \den^2| \leq \frac{\den^2}{2} \Big] &\leq 
   \mathbb{P}\Big[|\den(\hden -\den)| \geq \frac{\den^2}{2} \Big] \\
   &=   \mathbb{P}\Big[|(\hden -\den)| \geq \frac{\den}{2} \Big] \\
   & \leq 2 \exp\Big[\frac{-n\den^2}{4} \Big]
\end{eqnarray}

where we again used Hoeffding's inequality for the last line.

We bound the first term using the restated Lemma in~\ref{eq:lemma2_restated}:
\begin{equation}
     \mathbb{P}\big[|\den(\hnum - \num)| \geq \frac{t}{2}\big] \leq 2\exp\Bigg[-\frac{n}{2}\Bigg(\frac{t}{2\den}  - 4 R_n(\cG) - \frac{2}{\sqrt{n}}\Bigg)^2 \Bigg]
\end{equation}

Now we let $\tilde \epsilon = \frac{t}{\den^2/2}$ to get
\begin{equation} \label{equation:omega general bound}
   \p \Big[  \abs{\frac{\hnum}{\hden} - \frac{\num}{\den}}  \geq \tilde \epsilon \Big] 
\end{equation}
\[
\leq 2\exp\Bigg[-\frac{n}{2}\Bigg(\frac{\tilde \epsilon \den}{4}  - 4 R_n(\cG) - \frac{2}{\sqrt{n}}\Bigg)^2 \Bigg] + \exp \Big[\frac{-n \tilde \epsilon^2 \den^4}{16\num^2} \Big] +\exp\Big[\frac{-n\den^2}{4} \Big]
\]

Now we turn to the general case. 
Recalling that we define $\beta = \max(\abs{\beta_1, \beta_0})$, we have
\[
\p\Bigg[  \abs{\frac{\beta_1 \hnuma }{\hdena} - \frac{\beta_0 \hnumo}{\hdeno} -  \Big(\frac{\beta_1 \numa}{\dena} - \frac{\beta_0 \numo}{\deno} \Big)} \geq \epsilon \Bigg] \leq
\]
\[
\p\Bigg[  \abs{\beta_1}\abs{\frac{ \hnuma }{\hdena}  -  \frac{ \numa}{\dena}} + \abs{\beta_0}\abs{\frac{\hnumo}{\hdeno} - \frac{\numo}{\deno}} \geq \epsilon \Bigg] \leq
\]
\[
\p\Bigg[ \abs{\frac{ \hnuma }{\hdena}  -  \frac{ \numa}{\dena}}  \geq \frac{\epsilon}{2\beta} \Bigg] + \p\Bigg[ \abs{\frac{\hnumo}{\hdeno} - \frac{\numo}{\deno}} \geq \frac{\epsilon}{2\beta} \Bigg] \leq
\]
\[
2\exp\Bigg[-\frac{n}{2}\Bigg(\frac{\epsilon \dena}{8\beta}  - 4 R_n(\cG) - \frac{2}{\sqrt{n}}\Bigg)^2 \Bigg] + \exp \Big[\frac{-n \epsilon^2 \dena^4}{64\beta\numa^2} \Big] +\exp\Big[\frac{-n\dena^2}{4} \Big] +
\]
\[
2\exp\Bigg[-\frac{n}{2}\Bigg(\frac{\epsilon \deno}{8\beta}  - 4 R_n(\cG) - \frac{2}{\sqrt{n}}\Bigg)^2 \Bigg] + 
\]
\[
\exp \Big[\frac{-n \epsilon^2 \deno^4}{64\beta\numo^2} \Big] +\exp\Big[\frac{-n\deno^2}{4} \Big] \leq
\]
\[
4\exp\Bigg[-\frac{n}{2}\Bigg(\frac{\epsilon \bar{\omega}_{\wedge}}{8\beta}  - 4 R_n(\cG) - \frac{2}{\sqrt{n}}\Bigg)^2 \Bigg] + 2\exp \Big[\frac{-n \epsilon^2 \bar{\omega}_{\wedge}^4}{64\beta^2\omega_{\vee}^2} \Big]
\]
\[
+2\exp\Big[\frac{-n\bar{\omega}_{\wedge}^2}{4} \Big]
\]
where the first inequality holds by triangle inequality, the second inequality holds by the union bound, the third inequality applies~(\ref{equation:omega general bound}) for $\tilde{\epsilon} = \frac{\epsilon}{2\beta}$, and the final inequality simplifies the bound using the notation $\omega_{\vee} = \max(\numa, \numo)$ and $\bar{\omega}_{\wedge} = \min(\dena, \deno)$.
\end{proof}

\subsection{Auxiliary Lemmas for the Proof of Theorem \ref{theorem: error analysis for exp grad, disp minimizing} }

\subsubsection{Iteration Complexity for Algorithm \ref{alg: exp grad for fairness frontier, disparity minimizing}}

\begin{lemma}\label{lemma: iteration complexity for disparity minimizing}
    Defining $\rho := \max_{h \in \cH, \xi \in [0, B_\xi]} \max\{\disphat(h) - \xi, -\disphat(h) - \xi, \costhat(h) - \hat{\epsilon}\}$, Algorithm \ref{alg: exp grad for fairness frontier, disparity minimizing} satisfies the inequality
        \[
            \nu_t \leq \frac{ B_\lambda \log(3) }{ \eta t } + \eta \rho^2 B.
        \]
    For $\eta = \frac{\nu}{2 \rho^2 B_\lambda}$, Algorithm \ref{alg: exp grad for fairness frontier, disparity minimizing} will return a $\nu$-approximate saddle point of $L$ in at most $\frac{ 4 \rho^2 B_\lambda^2 \log(3) }{\nu^2}$ iterations. Setting $B_{\xi} = 1$, we observe $\rho \leq 1$, and so the iteration complexity of Algorithm \ref{alg: exp grad for fairness frontier, disparity minimizing} is $\frac{ 4 B_\lambda^2 \log(3) }{\nu^2}$.
    
    \begin{proof}
     Follows immediately from the proof of Theorem 3 in \cite{AgarwalEtAl(19)-FairRegression} and the same argument given in the proof of Lemma \ref{lemma: iteration complexity for extremes}.
    \end{proof}
\end{lemma}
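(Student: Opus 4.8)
This is an iteration-complexity (regret) bound for the exponentiated-gradient saddle-point solver, and the plan is to prove it exactly as its extremes analogue, Lemma~\ref{lemma: iteration complexity for extremes}, by reading Algorithm~\ref{alg: exp grad for fairness frontier, disparity minimizing} as no-regret dynamics for the two-player zero-sum game associated with $L(\xi, Q_h, \lambda)$. First I would identify the roles: the $\lambda$-player updates its three multipliers $(\lambda_+, \lambda_-, \lambda_{\cost})$ by exponentiated gradient over the scaled simplex $\{\lambda \in \reals^3_+ : \|\lambda\|_1 \le B_\lambda\}$ (which is precisely what the softmax parametrization $\lambda_{t,k} = B_\lambda \exp(\theta_{t,k})/(1+\sum_{k'}\exp(\theta_{t,k'}))$ implements), while the $(\xi, Q_h)$-player best-responds through the decomposition already derived above, picking $\xi \in \{0, B_\xi\}$ from the sign of $1-\lambda_+-\lambda_-$ and picking $Q_h$ via the cost-sensitive reduction. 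The whole argument then reduces to bounding the $\lambda$-player's average regret and converting it into a bound on the measured saddle gap $\nu_t$.

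The key step is the standard multiplicative-weights regret guarantee. Because the per-round payoff vector presented to the $\lambda$-player has each coordinate $\disphat(h_t)-\xi_t$, $-\disphat(h_t)-\xi_t$, and $\costhat(h_t)-\hat\epsilon$ bounded in magnitude by $\rho$ (exactly the quantity in the statement), the exponentiated-gradient iterates incur regret at most $\tfrac{B_\lambda \log 3}{\eta} + \eta \rho^2 B_\lambda\, t$ over $t$ rounds, where the logarithmic factor is the entropy of the dual feasible set and the $\rho^2$ factor is the squared-gradient bound; dividing by $t$ gives the average regret $\tfrac{B_\lambda \log 3}{\eta t} + \eta\rho^2 B_\lambda$ (here $B = B_\lambda$). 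Next I would apply the Freund--Schapire averaging argument: since the primal player best-responds at each round, the averaged iterates $(\hat\xi_t, \hat Q_{h,t}, \hat\lambda_t)$ form an approximate equilibrium whose two-sided suboptimality $\nu_t := \max\{L(\hat\xi_t,\hat Q_t,\hat\lambda_t) - \underline L,\ \bar L - L(\hat\xi_t,\hat Q_t,\hat\lambda_t)\}$ computed inside the algorithm is dominated by this average regret, which is the first displayed inequality.

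The remaining two claims are then arithmetic. Balancing the two terms by setting $\eta = \nu/(2\rho^2 B_\lambda)$ makes the second term $\nu/2$, so forcing the first term below $\nu/2$ requires $t \ge 2B_\lambda \log 3/(\eta\nu) = 4\rho^2 B_\lambda^2 \log 3/\nu^2$, the stated complexity. Finally I would bound $\rho \le 1$ under $B_\xi = 1$: each term in the defining $\max$ is at most $1$, since the disparity is a signed average of predictions lying in $[-1,1]$ while $\xi \in [0,1]$ (so both $\pm\disphat(h)-\xi \le 1$), and $\costhat(h)-\hat\epsilon \le \costhat(h) \le 1$ because the cost is bounded on $[-1,1]$ and $\hat\epsilon \ge 0$ — the same observation used in Lemma~\ref{lemma: iteration complexity for extremes}.

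Most of this is routine and importable essentially verbatim from Theorem~3 of \cite{AgarwalEtAl(19)-FairRegression} together with the extremes analysis. The hard part will be the min-max bookkeeping: verifying that the \emph{operationally defined} gap $\nu_t$, assembled from the best-response Lagrangian values $\underline L$ and $\bar L$, is genuinely controlled by the $\lambda$-player's regret now that the primal player optimizes \emph{jointly} over $(\xi, Q_h)$ and solves the $Q_h$-subproblem only through the cost-sensitive oracle. I would need to confirm that the joint best-response preserves the one-sided inequalities underlying the saddle-gap decomposition and that the entropy constant enters correctly (it tracks the dimension of the dual domain rather than the single multiplier of the extremes case). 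Once that is checked, the three conclusions follow as stated.
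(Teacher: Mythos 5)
Your proposal is correct and follows essentially the same route as the paper, whose proof is just a pointer to Theorem~3 of \cite{AgarwalEtAl(19)-FairRegression} and the argument of Lemma~\ref{lemma: iteration complexity for extremes}: you unpack exactly that citation, reading Algorithm~\ref{alg: exp grad for fairness frontier, disparity minimizing} as exponentiated-gradient/no-regret play for the $\lambda$-player against a best-responding $(\xi, Q_h)$-player, invoking the multiplicative-weights regret bound with the entropy constant for the three-multiplier dual domain, converting average regret into the gap $\nu_t$ via Freund--Schapire averaging, and closing with the same step-size arithmetic and the same $\rho \leq 1$ observation (bounded cost and $\xi \in [0, B_\xi]$ with $B_\xi = 1$) used in the extremes case. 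The only frictions you would hit are inherited from the paper's statement rather than your argument (e.g., $\rho$ is written without absolute values even though the regret bound needs a magnitude bound on the payoff coordinates, just as in Lemma~\ref{lemma: iteration complexity for extremes}).
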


\subsubsection{Solution Quality for Algorithm \ref{alg: exp grad for fairness frontier, disparity minimizing}}

Let $\Lambda = \{\lambda \in \mathbb{R}_{+}^{3} : \|\lambda\| \leq B_\lambda\}$. Assume we are given $\left( \hat{\xi}, \Qhat_h, \hat{\lambda} \right)$, which is a $\nu$-approximate saddle point satisfying $L(\hat{\xi}, \Qhat_h, \hat{\lambda}) \leq L(\xi, Q_h, \hat{\lambda}) + \nu$ for all $Q_h \in \Delta(\cH), \xi \in [0, B_\xi]$ and $L(\hat{\xi}, \Qhat_h, \hat{\lambda}) \geq L(\hat{\xi}, \Qhat_h, \lambda) - \nu$ for all $\|\lambda\| \leq B_\lambda$. We extend Lemmas \ref{lemma: lemma 1 of fair classification, extremes}-\ref{lemma: lemma 3 of fair classification, extremes} to the problem of finding the absolute disparity minimizing model. 

\begin{lemma}\label{lemma: complementary slackness for disparity minimization} $\left( \hat{\xi}, \Qhat_h, \hat{\lambda} \right)$ satisfies
    \[
    \hat{\lambda}_{+} ( \disphat(\Qhat_h) - \hat{\xi} ) + \hat{\lambda}_{-} ( -\disphat(\Qhat_h) -\hat{\xi}) + \hat{\lambda}_{cost} \left( \costhat(\Qhat_h) - \hat{\epsilon} \right)
    \]
    \[
    \geq B_{\lambda} \max\{ \disphat(\Qhat_h) - \hat{\xi}, -\disphat(\Qhat_h) -\hat{\xi}, \costhat(\Qhat_h) - \hat{\epsilon}\} - \nu.
    \]
    \begin{proof}
        The argument is the same as the proof of Lemma \ref{lemma: lemma 1 of fair classification, extremes}.
    \end{proof}
\end{lemma}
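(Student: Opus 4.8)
The plan is to mirror the proof of Lemma~\ref{lemma: lemma 1 of fair classification, extremes} exactly, the only differences being that the dual variable is now the vector $\lambda = (\lambda_+, \lambda_-, \lambda_{\cost})$ and that the primal carries an extra coordinate $\xi$. I would invoke only the maximization half of the $\nu$-approximate saddle point property, namely $L(\hat{\xi}, \Qhat_h, \hat{\lambda}) \geq L(\hat{\xi}, \Qhat_h, \lambda) - \nu$ for every $\lambda \in \Lambda = \{\lambda \in \reals_+^3 : \|\lambda\| \leq B_\lambda\}$, and evaluate the right-hand side at a well-chosen comparison point, namely the best response of the $\lambda$-player.

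Writing the three constraint slacks at the returned point as $g_+ := \disphat(\Qhat_h) - \hat{\xi}$, $g_- := -\disphat(\Qhat_h) - \hat{\xi}$, and $g_{\cost} := \costhat(\Qhat_h) - \hat{\epsilon}$, I would take $\lambda^\star$ to be the vector equal to $B_\lambda$ in the coordinate achieving $\max\{g_+, g_-, g_{\cost}\}$ whenever that maximum is positive, and $\lambda^\star = 0$ otherwise. This $\lambda^\star$ is feasible because placing all the mass $B_\lambda$ on a single coordinate yields a vector of norm exactly $B_\lambda$, i.e. it lies on the boundary of $\Lambda$. Crucially, the coefficient of $\xi$ in $L$ is $1$ independently of $\lambda$, so the $\hat{\xi}$ term is common to $L(\hat{\xi}, \Qhat_h, \hat{\lambda})$ and $L(\hat{\xi}, \Qhat_h, \lambda^\star)$ and will cancel, exactly as the $\widehat{\disp}(\hat{Q})$ term cancels in the scalar proof.

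Substituting this choice gives $L(\hat{\xi}, \Qhat_h, \lambda^\star) = \hat{\xi} + B_\lambda \left( \max\{g_+, g_-, g_{\cost}\} \right)_+$, and relaxing via $(x)_+ \geq x$ lower-bounds this by $\hat{\xi} + B_\lambda \max\{g_+, g_-, g_{\cost}\}$. Chaining this with the saddle-point inequality $L(\hat{\xi}, \Qhat_h, \hat{\lambda}) \geq L(\hat{\xi}, \Qhat_h, \lambda^\star) - \nu$, expanding $L(\hat{\xi}, \Qhat_h, \hat{\lambda}) = \hat{\xi} + \hat{\lambda}_+ g_+ + \hat{\lambda}_- g_- + \hat{\lambda}_{\cost} g_{\cost}$, and cancelling $\hat{\xi}$ from both sides delivers precisely the stated inequality.

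I do not anticipate any genuine obstacle, since this is a direct vectorization of Lemma~\ref{lemma: lemma 1 of fair classification, extremes}. The only two points warranting care are confirming that the best response $\lambda^\star$ indeed lies in the constraint set $\Lambda$ (which holds because a single-coordinate vector of height $B_\lambda$ has norm $B_\lambda$), and recognizing that it is the relaxation $(\cdot)_+ \mapsto (\cdot)$ that converts the best-response value into the one-sided bound featuring the raw $\max$ on the right-hand side.
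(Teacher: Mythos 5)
Your proposal is correct and follows essentially the same route as the paper, which proves this lemma by the argument of Lemma~\ref{lemma: lemma 1 of fair classification, extremes}: invoke the dual half of the $\nu$-approximate saddle-point property against the $\lambda$-player's best response, namely $\lambda^\star$ placing mass $B_\lambda$ on the maximally violated constraint (or $\lambda^\star = 0$ if none is violated), exactly as described in \textsection~\ref{section: computing abs disp min model}. Your two points of care (feasibility of the single-coordinate $\lambda^\star$ in $\Lambda$, and the relaxation $(\cdot)_+ \geq (\cdot)$ that replaces the positive part by the raw maximum) are both handled correctly, and in fact your chain establishes the slightly stronger bound with $B_\lambda \left( \max\{\cdot\} \right)_+$ before relaxing.
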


\begin{lemma}\label{lemma: empirical slack variable bound for disparity minimization}
    The value $\hat{\xi}$ satisfies 
        \begin{equation*}
            \hat{\xi} \leq \xi + 2 \nu
        \end{equation*}
    for any $\xi$ such that there exists $Q_h$ satisfying $\disphat(Q_h) - \xi \leq 0$, $-\disphat(Q_h) - \xi \leq 0$ and $\costhat(Q_h) \leq \hat{\epsilon}$.
    
    \begin{proof}
        Assume the pair $(\xi, Q_h)$ satisfies $\disphat(Q_h) - \xi \leq 0$, $-\disphat(Q_h) - \xi \leq 0$ and $\costhat(Q_h) \leq \hat{\epsilon}$. Since $\hat{\lambda} \geq 0$, we have that $L(\xi, Q, \hat{\lambda}) \leq \xi$. Moreover, the $\nu$-approximate optimality conditions imply that $L(\hat \xi, \Qhat, \hat \lambda) \leq L(\xi, Q, \hat \lambda) + \nu$. Together, these inequalities imply that 
            \[
            L(\hat \xi, \Qhat, \hat \lambda) \leq \xi + \nu.
            \]
        Next, we can use Lemma \ref{lemma: complementary slackness for disparity minimization} to construct a lower bound for $L(\hat \xi, \Qhat, \hat \lambda)$. To do so, observe that 
            \begin{align*}
                & L(\hat \xi, \Qhat, \hat \lambda) \\
                &\geq \hat{\xi} + B_{\lambda} \max\{ \disphat(\Qhat) - \hat{\xi}, -\disphat(\Qhat) -\hat{\xi}, \costhat(\Qhat) - \hat{\epsilon}\} - \nu \\
                &\geq \hat{\xi} - \nu.
            \end{align*}
        By combining the inequalities, $L(\hat \xi, \Qhat, \hat \lambda) \geq \hat{\xi} - \nu$ and $L(\hat \xi, \Qhat, \hat \lambda) \leq \xi + \nu$, we arrive at the claim.
    \end{proof}
\end{lemma}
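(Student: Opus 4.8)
The plan is to prove the bound $\hat{\xi} \leq \xi + 2\nu$ by sandwiching the Lagrangian value at the returned $\nu$-approximate saddle point $(\hat\xi, \Qhat_h, \hat\lambda)$ between an upper bound, obtained from the primal side of the approximate-saddle-point condition evaluated at a feasible comparator, and a lower bound, obtained from the dual side. This mirrors the argument used for Lemma~\ref{lemma: lemma 2 of fair classification, extremes} in the range problem, with the slack variable $\xi$ now playing the role that $\disphat$ played there, and with the single cost constraint replaced by the three constraints $\disphat(Q_h)-\xi\le 0$, $-\disphat(Q_h)-\xi\le 0$, and $\costhat(Q_h)-\hat\epsilon\le 0$.

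For the upper bound, I would fix any pair $(\xi, Q_h)$ that is feasible in the empirical problem. Because the dual iterate lives in $\Lambda = \{\lambda \in \reals_+^3 : \|\lambda\| \leq B_\lambda\}$, every coordinate of $\hat\lambda$ is nonnegative, so in $L(\xi, Q_h, \hat\lambda) = \xi + \hat\lambda_+(\disphat(Q_h)-\xi) + \hat\lambda_-(-\disphat(Q_h)-\xi) + \hat\lambda_{\cost}(\costhat(Q_h)-\hat\epsilon)$ each of the three weighted constraint terms is a product of a nonnegative multiplier with a nonpositive feasible-constraint value, hence is $\le 0$. This gives $L(\xi, Q_h, \hat\lambda) \leq \xi$, and invoking the primal inequality $L(\hat\xi, \Qhat_h, \hat\lambda) \leq L(\xi, Q_h, \hat\lambda) + \nu$ yields $L(\hat\xi, \Qhat_h, \hat\lambda) \leq \xi + \nu$.

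For the lower bound I would write $L(\hat\xi, \Qhat_h, \hat\lambda) = \hat\xi + \langle \hat\lambda, g(\Qhat_h)\rangle$, where $g(\Qhat_h)$ collects the three constraint values at $\Qhat_h$, and bound $\langle \hat\lambda, g(\Qhat_h)\rangle$ below by $-\nu$. The cleanest route is the dual inequality $L(\hat\xi, \Qhat_h, \hat\lambda) \geq L(\hat\xi, \Qhat_h, \lambda) - \nu$ specialized to $\lambda = 0$, which gives $\langle \hat\lambda, g(\Qhat_h)\rangle \geq -\nu$ directly; this is exactly the content of Lemma~\ref{lemma: complementary slackness for disparity minimization}, since the $\lambda$-player's best response places all mass $B_\lambda$ on the most violated constraint and mass zero when all constraints are satisfied, so $\max_{\lambda\in\Lambda}\langle\lambda,g(\Qhat_h)\rangle = B_\lambda\max\{g_1,g_2,g_3,0\} \ge 0$. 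Combining $L(\hat\xi, \Qhat_h, \hat\lambda) \geq \hat\xi - \nu$ with the upper bound gives $\hat\xi - \nu \leq \xi + \nu$, i.e.\ $\hat\xi \leq \xi + 2\nu$.

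The step I expect to be the main obstacle — or at least the one requiring care — is the lower bound, specifically ensuring that the aggregated dual term is genuinely $\geq -\nu$. The subtlety is that the maximum of the three raw constraint values $\disphat(\Qhat_h)-\hat\xi$, $-\disphat(\Qhat_h)-\hat\xi$, $\costhat(\Qhat_h)-\hat\epsilon$ need not be nonnegative at an \emph{approximate} saddle point; the nonnegativity only holds once one includes the option $\lambda=0$ (equivalently, takes the positive part), which is precisely why comparing against $\lambda = 0$ in the dual inequality, as encapsulated in Lemma~\ref{lemma: complementary slackness for disparity minimization}, is the right move. Everything else is bookkeeping with the nonnegativity of $\hat\lambda$ and the two approximate-saddle-point inequalities.
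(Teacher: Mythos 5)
Your proposal is correct and follows the same sandwich argument as the paper: the upper bound $L(\hat\xi, \Qhat_h, \hat\lambda) \leq \xi + \nu$ via feasibility of $(\xi, Q_h)$, nonnegativity of $\hat\lambda$, and the primal half of the approximate-saddle-point condition is identical to the paper's first step. Where you diverge is the lower bound, and your version is in fact tighter than what the paper writes. The paper routes through Lemma~\ref{lemma: complementary slackness for disparity minimization} and then asserts
\[
\hat{\xi} + B_{\lambda} \max\bigl\{ \disphat(\Qhat_h) - \hat{\xi},\, -\disphat(\Qhat_h) -\hat{\xi},\, \costhat(\Qhat_h) - \hat{\epsilon}\bigr\} - \nu \;\geq\; \hat{\xi} - \nu,
\]
which requires the raw maximum of the three constraint values to be nonnegative; as you correctly observe, this can fail at an approximate saddle point (e.g., when $\hat\xi > |\disphat(\Qhat_h)|$ and $\costhat(\Qhat_h) < \hat\epsilon$, all three values are strictly negative). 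Your remedy --- invoking the dual half of the saddle-point condition at the specific comparator $\lambda = 0$, which gives $L(\hat\xi, \Qhat_h, \hat\lambda) \geq L(\hat\xi, \Qhat_h, 0) - \nu = \hat\xi - \nu$ directly --- is exactly the right fix, and it is what the paper's Lemma~\ref{lemma: complementary slackness for disparity minimization} implicitly relies on (the $\lambda$-player's best response includes $\lambda = 0$, so the bound really holds with the positive part $\max\{\cdot, 0\}$ in place of the raw max). So: same overall architecture, but your handling of the lower bound closes a small gap in the paper's displayed chain of inequalities; the paper's route buys nothing extra here, since the complementary-slackness detour is only needed for the other lemmas (Lemmas~\ref{lemma: empirical fairness violations, disparity minimizing} and~\ref{lemma: empirical cost violation, disparity minimizing}), where the $B_\lambda \max\{\cdot\}$ term is actually used rather than discarded.
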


\begin{lemma}\label{lemma: empirical fairness violations, disparity minimizing} 
    Assume the empirical cost constraint $\costhat{Q_h} \leq \epsilonhat$ and the slack variable constraints $\disphat(Q_h) - \xi \leq 0$ and $-\disphat(Q_h) - \xi \leq 0$ are feasible. Then, the pair $(\hat \xi, \Qhat_h)$ satisfies
        \begin{align*}
            & \disphat(\Qhat_h) - \hat \xi \leq \frac{B_\xi + 2 \nu}{B_\lambda}, \\
            & -\disphat(\Qhat_h) - \hat \xi \leq \frac{B_\xi + 2 \nu}{B_\lambda}.
        \end{align*}

    \begin{proof}
        Let $\xi$ be a feasible value of the slack variable such that there exists $Q_h$ satisfying $\costhat(Q_h) \leq \hat{\epsilon}$ and the slack variable constraints $\disphat(Q_h) - \xi \leq 0$, $-\disphat(Q_h) - \xi \leq 0$. Recall from the Proof of Lemma \ref{lemma: empirical slack variable bound for disparity minimization}, we showed that 
            \begin{align*}
                & \hat{\xi} + B_{\lambda} \max\{ \disphat(\Qhat_h) - \hat{\xi}, -\disphat(\Qhat_h) -\hat{\xi}, \costhat(\Qhat_h) - \hat{\epsilon}\} - \nu \\
                &\leq L(\hat \xi, \Qhat_h, \hat \lambda) \leq \xi + \nu.
            \end{align*}
        Therefore, it is immediate that
            \[
                 B_{\lambda} \max\{ \disphat(\Qhat_h) - \hat{\xi}, -\disphat(\Qhat_h) -\hat{\xi}, \costhat(\Qhat_h) - \hat{\epsilon}\} \leq \left( \xi - \hat \xi \right) + 2 \nu,
            \]
         and so
            \begin{align*}
               & B_{\lambda} \left( \disphat(\Qhat_h) - \hat{\xi} \right) \leq \left( \xi - \hat \xi \right) + 2 \nu, \\
               & B_{\lambda} \left( -\disphat(\Qhat_h) -\hat{\xi} \right) \leq  \left( \xi - \hat \xi \right) + 2 \nu.
            \end{align*}
        Since $\xi \in [0, B_\xi]$, we can bound $\xi - \hat \xi$ by $B_\xi$. The result follows.
    \end{proof}
\end{lemma}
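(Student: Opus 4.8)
The plan is to recycle the two-sided sandwich on the Lagrangian value $L(\hat{\xi}, \Qhat_h, \hat{\lambda})$ already obtained in the proof of Lemma~\ref{lemma: empirical slack variable bound for disparity minimization}, and then read off the two slack-variable violations directly instead of only the bound on $\hat{\xi}$. Concretely, I would fix any pair $(\xi, Q_h)$ that is feasible in the empirical problem, i.e.\ one satisfying $\costhat(Q_h) \leq \hat{\epsilon}$ together with $\disphat(Q_h) - \xi \leq 0$ and $-\disphat(Q_h) - \xi \leq 0$. Since every component of $\hat{\lambda}$ is nonnegative and each constraint is satisfied at $(\xi, Q_h)$, all three penalty terms in $L(\xi, Q_h, \hat{\lambda}) = \xi + \hat{\lambda}_+(\disphat(Q_h) - \xi) + \hat{\lambda}_-(-\disphat(Q_h) - \xi) + \hat{\lambda}_{\cost}(\costhat(Q_h) - \hat{\epsilon})$ are nonpositive, so $L(\xi, Q_h, \hat{\lambda}) \leq \xi$. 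The $\nu$-approximate optimality of the $(\xi, Q_h)$-player then yields the upper bound $L(\hat{\xi}, \Qhat_h, \hat{\lambda}) \leq \xi + \nu$.

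For the matching lower bound I would invoke the complementary-slackness estimate of Lemma~\ref{lemma: complementary slackness for disparity minimization}, which gives
\[
L(\hat{\xi}, \Qhat_h, \hat{\lambda}) \;\geq\; \hat{\xi} + B_\lambda \max\bigl\{\disphat(\Qhat_h) - \hat{\xi},\; -\disphat(\Qhat_h) - \hat{\xi},\; \costhat(\Qhat_h) - \hat{\epsilon}\bigr\} - \nu.
\]
Chaining this with the upper bound and cancelling $\hat{\xi}$ leaves $B_\lambda \max\{\cdots\} \leq (\xi - \hat{\xi}) + 2\nu$. The point worth emphasizing is that this single inequality simultaneously controls all three constraint violations: since each of $\disphat(\Qhat_h) - \hat{\xi}$ and $-\disphat(\Qhat_h) - \hat{\xi}$ is at most the maximum, both are bounded by $\bigl((\xi - \hat{\xi}) + 2\nu\bigr)/B_\lambda$, and the companion cost bound (Lemma~\ref{lemma: empirical cost violation, disparity minimizing}) falls out of the very same relation.

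To finish I would bound $\xi - \hat{\xi}$ crudely: because $\hat{\xi} \geq 0$ and the slack variable is constrained to $[0, B_\xi]$, we have $\xi - \hat{\xi} \leq \xi \leq B_\xi$, and dividing through by $B_\lambda$ delivers both claimed inequalities $\disphat(\Qhat_h) - \hat{\xi} \leq (B_\xi + 2\nu)/B_\lambda$ and $-\disphat(\Qhat_h) - \hat{\xi} \leq (B_\xi + 2\nu)/B_\lambda$. There is no genuine analytic obstacle here—the argument is a direct transcription of Lemma~\ref{lemma: lemma 3 of fair classification, extremes} from the single-constraint setting to the three-constraint game. The only step requiring any care is recognizing that the complementary-slackness bound must be used in its full $\max$ form so that the two disparity-slack constraints are dominated at once, and that the loose replacement $\xi - \hat{\xi} \leq B_\xi$ is exactly what produces the clean numerator $B_\xi$.
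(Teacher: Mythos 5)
Your proposal is correct and follows essentially the same route as the paper's own proof: sandwich $L(\hat{\xi}, \hat{Q}_h, \hat{\lambda})$ between the complementary-slackness lower bound of Lemma~\ref{lemma: complementary slackness for disparity minimization} and the upper bound $\xi + \nu$ from feasibility plus $\nu$-approximate optimality, cancel $\hat{\xi}$, read each disparity-slack term off the $\max$, and bound $\xi - \hat{\xi} \leq B_\xi$. Your explicit use of $\hat{\xi} \geq 0$ to justify that last step is a slightly more careful rendering of what the paper leaves implicit, but the argument is the same.
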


\begin{lemma}\label{lemma: empirical cost violation, disparity minimizing}
    Assume the empirical cost constraint $\costhat(Q_h) \leq \hat{\epsilon}$ and the slack variable constraints $\disphat(Q_h) - \xi \leq 0$, $-\disphat(Q_h) - \xi \leq 0$ are feasible. Then the distribution $\Qhat_h$ satisfies 
        \[
            \costhat(\Qhat_h) - \hat{\epsilon} \leq \frac{ B_\xi + 2 \nu }{ B_\lambda }.
        \]
    \begin{proof}
         The proof is analogous to the proof of Lemma \ref{lemma: empirical fairness violations, disparity minimizing}.
    \end{proof}
\end{lemma}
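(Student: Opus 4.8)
The plan is to adapt the argument used for Lemma~\ref{lemma: empirical fairness violations, disparity minimizing} essentially verbatim, exploiting the fact that the cost violation $\costhat(\Qhat_h) - \hat{\epsilon}$ is one of the three arguments of the maximum appearing in the complementary-slackness bound of Lemma~\ref{lemma: complementary slackness for disparity minimization}. Since that maximum dominates each of its arguments, any upper bound on the maximum immediately bounds the cost violation, and the rest is the same $B_\xi$-and-$B_\lambda$ bookkeeping as in the fairness-violation lemma.

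First I would fix a feasible pair $(\xi, Q_h)$, i.e.\ one satisfying $\costhat(Q_h) \leq \hat{\epsilon}$ together with $\disphat(Q_h) - \xi \leq 0$ and $-\disphat(Q_h) - \xi \leq 0$; such a pair exists by the feasibility hypothesis. Because $\hat{\lambda} \geq 0$ and every constraint is nonpositive at this pair, the Lagrangian satisfies $L(\xi, Q_h, \hat{\lambda}) \leq \xi$, and the $\nu$-approximate optimality of $(\hat{\xi}, \Qhat_h)$ in the primal then gives $L(\hat{\xi}, \Qhat_h, \hat{\lambda}) \leq \xi + \nu$, exactly as established in the proof of Lemma~\ref{lemma: empirical slack variable bound for disparity minimization}.

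Next I would invoke Lemma~\ref{lemma: complementary slackness for disparity minimization} to lower-bound the same Lagrangian, producing the chain
\[
\hat{\xi} + B_\lambda \max\{ \disphat(\Qhat_h) - \hat{\xi},\ -\disphat(\Qhat_h) - \hat{\xi},\ \costhat(\Qhat_h) - \hat{\epsilon} \} - \nu \leq L(\hat{\xi}, \Qhat_h, \hat{\lambda}) \leq \xi + \nu.
\]
Cancelling $\hat{\xi}$ and rearranging yields $B_\lambda \max\{\cdots\} \leq (\xi - \hat{\xi}) + 2\nu$. Because the term $\costhat(\Qhat_h) - \hat{\epsilon}$ is one of the arguments of the maximum, it is no larger than the maximum, so $B_\lambda \big(\costhat(\Qhat_h) - \hat{\epsilon}\big) \leq (\xi - \hat{\xi}) + 2\nu$. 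Finally, using $\xi \in [0, B_\xi]$ and $\hat{\xi} \geq 0$ to bound $\xi - \hat{\xi} \leq B_\xi$, and dividing through by $B_\lambda > 0$, gives $\costhat(\Qhat_h) - \hat{\epsilon} \leq (B_\xi + 2\nu)/B_\lambda$, which is the claim.

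I do not anticipate a genuine obstacle: the entire content is already contained in the displayed inequality derived while proving Lemma~\ref{lemma: empirical slack variable bound for disparity minimization} and reused in Lemma~\ref{lemma: empirical fairness violations, disparity minimizing}. The only point requiring a moment's care is reading the cost term off as a specific argument of the maximum rather than as the maximum itself; once that is noted, the bound follows immediately from the $B_\xi$-bound on $\xi - \hat{\xi}$ and division by $B_\lambda$.
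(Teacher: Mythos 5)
Your proof is correct and is exactly the argument the paper intends when it says the proof is ``analogous'' to that of Lemma~\ref{lemma: empirical fairness violations, disparity minimizing}: you reuse the sandwich $\hat{\xi} + B_\lambda \max\{\cdot\} - \nu \leq L(\hat{\xi}, \hat{Q}_h, \hat{\lambda}) \leq \xi + \nu$ from Lemmas~\ref{lemma: complementary slackness for disparity minimization}--\ref{lemma: empirical slack variable bound for disparity minimization}, read off the cost term as one argument of the maximum, and bound $\xi - \hat{\xi} \leq B_\xi$ before dividing by $B_\lambda$. No gaps; this matches the paper's approach verbatim.
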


\begin{lemma}\label{lemma: solution quality, disparity minimizing}
    Suppose that $(\xi, Q_h)$ is a feasible solution to the empirical version of (\ref{equation: disparity minimizing, randomized, slack variable}). Then, the solution $(\hat \xi, \Qhat_h)$ returned by Algorithm \ref{alg: exp grad for fairness frontier, disparity minimizing} satisfies
        \begin{align*}
            & \hat{\xi} \leq \xi + 2 \nu, \\
            & \disphat(\Qhat_h) - \hat \xi \leq \frac{B_\xi + 2 \nu}{B_\lambda}, \\
            & -\disphat(\Qhat_h) - \hat \xi \leq \frac{B_\xi + 2 \nu}{B_\lambda} \\
            & \costhat(\Qhat_h) - \hat{\epsilon} \leq \frac{ B_\xi + 2 \nu }{ B_\lambda }.
        \end{align*}
\end{lemma}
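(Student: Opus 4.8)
The plan is to assemble the claim directly from the iteration-complexity lemma together with the three solution-quality auxiliary lemmas, exactly mirroring the proof of Lemma~\ref{lemma: solution quality of exp grad for extremes} in the range-characterization case. First I would invoke Lemma~\ref{lemma: iteration complexity for disparity minimizing} to guarantee that Algorithm~\ref{alg: exp grad for fairness frontier, disparity minimizing} terminates after finitely many iterations and, upon terminating, produces a genuine $\nu$-approximate saddle point $(\hat{\xi}, \Qhat_h, \hat{\lambda})$ of the Lagrangian $L(\xi, Q_h, \lambda)$. This is the object that each of the remaining lemmas presupposes, so establishing it up front licenses the subsequent applications of the saddle-point optimality conditions.

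Next I would unpack the hypothesis that $(\xi, Q_h)$ is feasible in the empirical analogue of (\ref{equation: disparity minimizing, randomized, slack variable}). By definition this means precisely that $\disphat(Q_h) - \xi \leq 0$, that $-\disphat(Q_h) - \xi \leq 0$, and that $\costhat(Q_h) \leq \hat{\epsilon}$. With this feasibility in hand, the four displayed inequalities follow by applying each auxiliary lemma in turn to the returned saddle point: Lemma~\ref{lemma: empirical slack variable bound for disparity minimization} yields $\hat{\xi} \leq \xi + 2\nu$; Lemma~\ref{lemma: empirical fairness violations, disparity minimizing} yields the two one-sided disparity bounds $\disphat(\Qhat_h) - \hat{\xi} \leq (B_\xi + 2\nu)/B_\lambda$ and $-\disphat(\Qhat_h) - \hat{\xi} \leq (B_\xi + 2\nu)/B_\lambda$; and Lemma~\ref{lemma: empirical cost violation, disparity minimizing} yields $\costhat(\Qhat_h) - \hat{\epsilon} \leq (B_\xi + 2\nu)/B_\lambda$. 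Stacking these four gives exactly the stated conclusion.

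Finally I would dispose of the degenerate branch: if Algorithm~\ref{alg: exp grad for fairness frontier, disparity minimizing} returns $null$, then all four inequalities are vacuously satisfied, so the statement holds unconditionally. The main obstacle here is bookkeeping rather than mathematics. The one point requiring care is to verify that the notion of feasibility appearing in the statement matches the precise hypotheses of Lemmas~\ref{lemma: empirical slack variable bound for disparity minimization}--\ref{lemma: empirical cost violation, disparity minimizing} — in particular that the pair of slack-variable constraints $\pm\disphat(Q_h) - \xi \leq 0$ together with the cost constraint $\costhat(Q_h) \leq \hat{\epsilon}$ are exactly the feasibility conditions of the empirical slack-variable reformulation — and that the $\nu$-approximate saddle point furnished by Lemma~\ref{lemma: iteration complexity for disparity minimizing} is the same object those lemmas assume. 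Once these identifications are confirmed, no new concentration estimates or computations are needed, and the result is immediate.
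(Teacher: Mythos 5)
Your proposal is correct and matches the paper's proof, which simply cites Lemmas~\ref{lemma: empirical slack variable bound for disparity minimization}--\ref{lemma: empirical cost violation, disparity minimizing} (whose hypotheses are exactly the empirical feasibility conditions you unpack) and handles the $null$ return as vacuous, just as you do. Your additional appeal to Lemma~\ref{lemma: iteration complexity for disparity minimizing} to certify termination at a $\nu$-approximate saddle point is a harmless (and arguably cleaner) explicit step that the paper makes in the analogous Lemma~\ref{lemma: solution quality of exp grad for extremes} but leaves implicit here.
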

\begin{proof}
The proof follows from Lemmas \ref{lemma: empirical fairness violations, disparity minimizing}-\ref{lemma: empirical cost violation, disparity minimizing}. If the algorithm returns $null$, then these inequalities are vacuously satisfied. 
\end{proof}

\section{Additional Details on the Consumer Lending Data}\label{section: additional details on lending data}

\subsection{Construction of IRSD for SA4 Regions}\label{section: additional details on the IRSD}
As discussed in \textsection~\ref{section: consumer lending application}, we focus our analysis on predictive disparities across SA4 geographic regions within Australia. We use the Australian Bureau of Statistics' Index of Relative Socioeconomic Disadvantage (IRSD) to define socioeconomically disadvantaged SA4 regions. The IRSD is calculated for SA2 regions, which are more granular statistical areas used by the ABS, by aggregating sixteen variables that were collected in the 2016 Australian census. These variables include, for example, the fraction of households making less than AU\$26,000, the fraction of households with no internet access, and the fraction of residents who do not speak English well. Higher scores on the IRSD are associated with less socioeconomically disadvantaged regions, and conversely, lower scores on the IRSD are associated with more socioeconomically disadvantaged regions. The full list of variables that are included in the IRSD and complete details on how the IRSD is constructed is provided in \cite{SEIFA(16)}.

\begin{figure}[htbp!]
    \center
    \includegraphics[scale = 0.17]{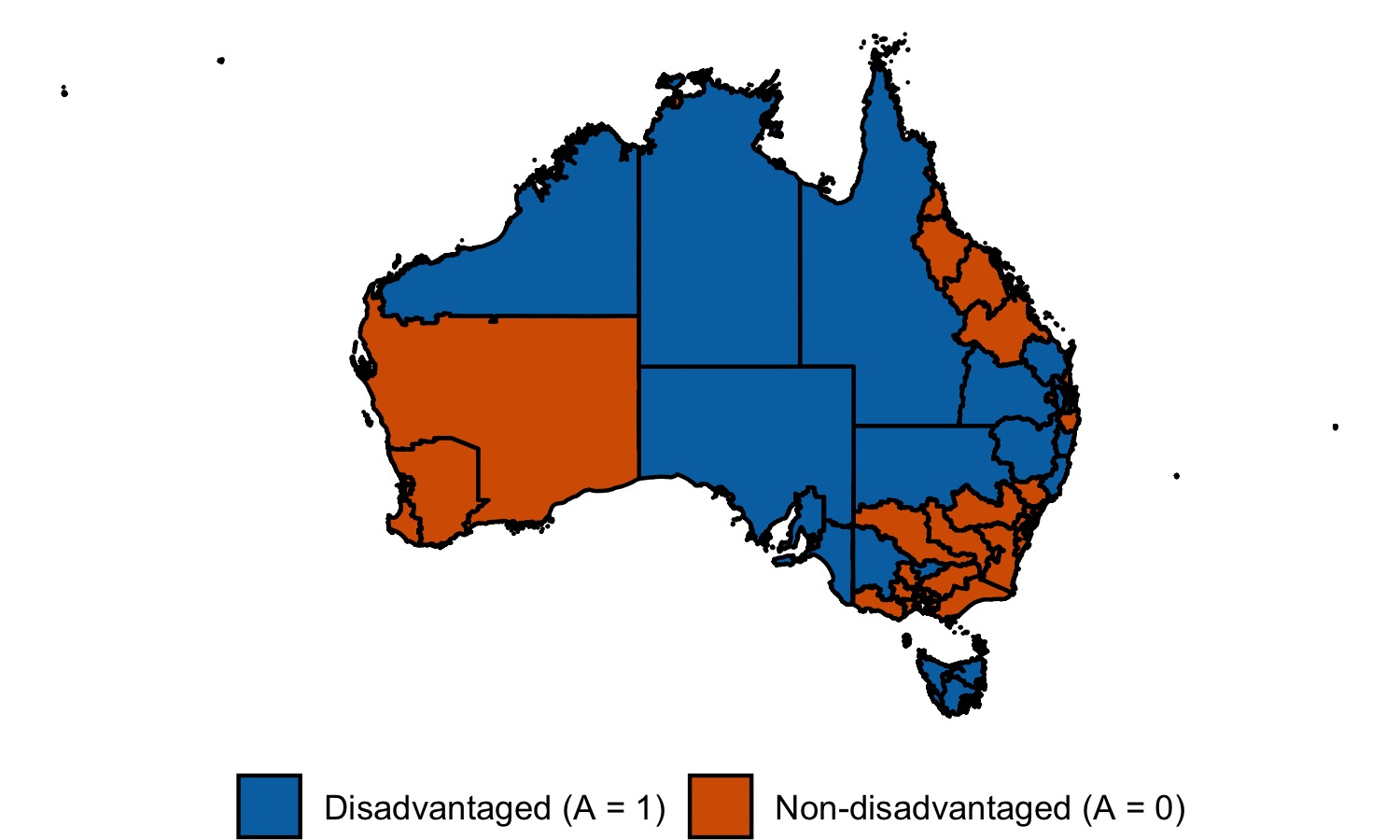}
    \caption{SA4 regions in Australia. We classify SA4 regions as being "socioeconomically disadvantaged" (red) and "non-socioeconomically disadvantaged" (blue) based on the Index of Relative Socioeconomic Disadvantage (IRSD). }
    \label{fig: map of SA4 and IRSD}
\end{figure}

Because the IRSD is constructed for SA2 regions, we first aggregate this index to SA4 regions. We construct an aggregated IRSD for each SA4 region by constructing a population-weighted average of the IRSD for all SA2 regions that fall within each SA4 region. This delivers a quantitative measure of which SA4 regions are the most and least socioeconomically disadvantaged. For example, the bottom ventile (i.e., the 20th ventile) of SA4 regions based upon the population-weighted average IRSD (i.e., the least socioeconomically disadvantaged SA4 regions) are regions associated with Sydney and Perth. The top ventile (i.e., the 1st ventile) of SA4 regions based upon the population-weighted average IRSD (i.e., the most socioeconomically disadvantaged SA4 regions) are regions associated with the Australian outback such as the Northern territory outback and the Southern Australia outback. Figure \ref{fig: map of SA4 and IRSD} provides a map of SA4 regions in Australia, in which colors SA4 regions classified as socioeconomically disadvantaged in blue.

\section{Additional Experimental Details and Results}\label{section: additional experiments}

In this section, we present additional details on our experimental setup as well as additional results for both experiments presented in the main paper.

\subsection{Consumer Lending: Additional Experimental Details} \label{section: experimental details} 

We performed experiments on a random $2\%$ sample of over 360,000 loan applications submitted from July 2017 to July 2019 by customers who did not have a prior financial relationship with CommBank, yielding our experimental sample of 7414 applications. We did a 2:1 train-test split, resulting in 4906 applications in our training set and 2508 applications in our test set. 

In order to evaluate our methods on the full population (including applications that are not funded), we generate synthetic funding decisions $D_i$ and outcomes $\tilde Y_i^*$ from the observed application features. On a $20\%$ sample of the full 360,000 applicants, we train a classifier $\pi(x) $ to predict the observed funding decision $D_i$ using the application features $X_i$, and we train a classifier $\mu(x)$ on funded applicants to predict the observed default outcome $Y_i$ using the application features $X_i$. In other words, $\pi(x)$ estimates $P(D_i = 1 | X_i = x)$ and $\mu(x)$ estimates $P(Y_i = 1 | D_i = 1, X_i = x)$. For both models we use probability forests from the R package {\tt ranger} with the default hyperparameters: $500$ trees, $mtry = \sqrt{[dim(X)]} = 6$, min node size equal $10$, and max depth equal to $0$. To learn $\mu$, we use bootstrap sampling of the $(0,1)$ classes with probabilities $(0.01, 1)$, respectively, in order to down-sample the applicants who repaid the loans because we have significant class imbalance: Only $2.0\%$ of applicants have default outcomes $=1$.

We generate synthetic funding decisions $\tilde D_i$ according to $\widetilde D_i \mid X_i \sim Bernoulli(\pi(X_i))$ and synthetic default outcomes $\widetilde Y_i^*$ according to $\widetilde Y_i^* \mid X_i \sim Bernoulli(\mu(X_i))$. 
We then proceed with our learning as if we only had access to labels $\widetilde Y_i^*$ for applicants with $\widetilde D_i = 1$. We estimate $\hat \mu(x) := \hat{P}(\widetilde Y_i = 1 | X_i = x, \widetilde D_i =1)$ using random forests with the same hyperparameters as above and use $\hat \mu(x)$ to construct the pseudo-outcomes used by the IE and RIE approaches. The KGB, IE, and RIE approaches use linear regression. Our \fairs algorithm ran the exponentiated gradient algorithm for at most 500 iterations on a fixed discretization grid, $\mathcal{Z}_\alpha = \{1/40, 2/40,\hdots, 1\}$ with parameters $B = \sqrt{n}$ and $\nu = 1/\sqrt{n}$ and $\eta = 2$. These choices were guided by our theoretical results as well as prior work \cite{AgarwalEtAl(19)-FairRegression}. The average runtime for a single error tolerance $\epsilon$ was $26.4$ minutes.
The experiments were conducted on a machine with one Intel Xeon E5-2650 v2 processor with 2.60 GHz and 16 cores. 

Our comparison against prior work used the fairlearn\footnote{See \href{https://fairlearn.github.io/v0.5.0/api_reference/fairlearn.reductions.html}{Fairlearn Github} for code.} API with logistic regression, using parameters parameters $C=10$ and maximum iterations $=10,000$. We ran fairlearn using both grid search and exponentiated gradient algorithm, but we report only the grid search algorithm since it traced out a larger fairness-performance tradeoff curve than the exponentiated gradient algorithm. We used a grid size of $41$ with a grid limit of $2$. 

We also compared against the Target-Fair Covariate Shift method in \cite{coston2019fair}. To construct our covariate shift weights, we first estimated the propensity scores  $P(D = 1 \mid X =x)$ by regressing  $D \sim X$ , yielding propensity estimates $\hat \pi(x)$. Our propensity model used  {\tt ranger} probability forests that with the default hyperparameters: $500$ trees, $mtry = \sqrt{[dim(X)]} = 6$, min node size equal $10$, and max depth equal to $0$. We used $\max(\hat \pi(X), 50)$ as covariate shift weights. We ran the method for $\lambda = \{0, 10, 1000, 20000, 50000\}$ using step size $\eta = 0.01$. We terminated the algorithm when the L1 distance in the weight vector $\leq 1e-7$ or after $500$ iterations (whichever came first). 

\subsection{Consumer Lending Risk Scores: Additional Results}\label{section: consumer lending additional results}
Figure~\ref{fig:selective_disparities extended} provides an extended version of Figure~\ref{fig:selective_disparities} that reports models over a range of hyperparameters (e.g. loss tolerance for \fairs) to show the range of possible fairness-performance combinations.

\begin{figure}[htbp!]
    \centering
    \includegraphics[scale=0.25]{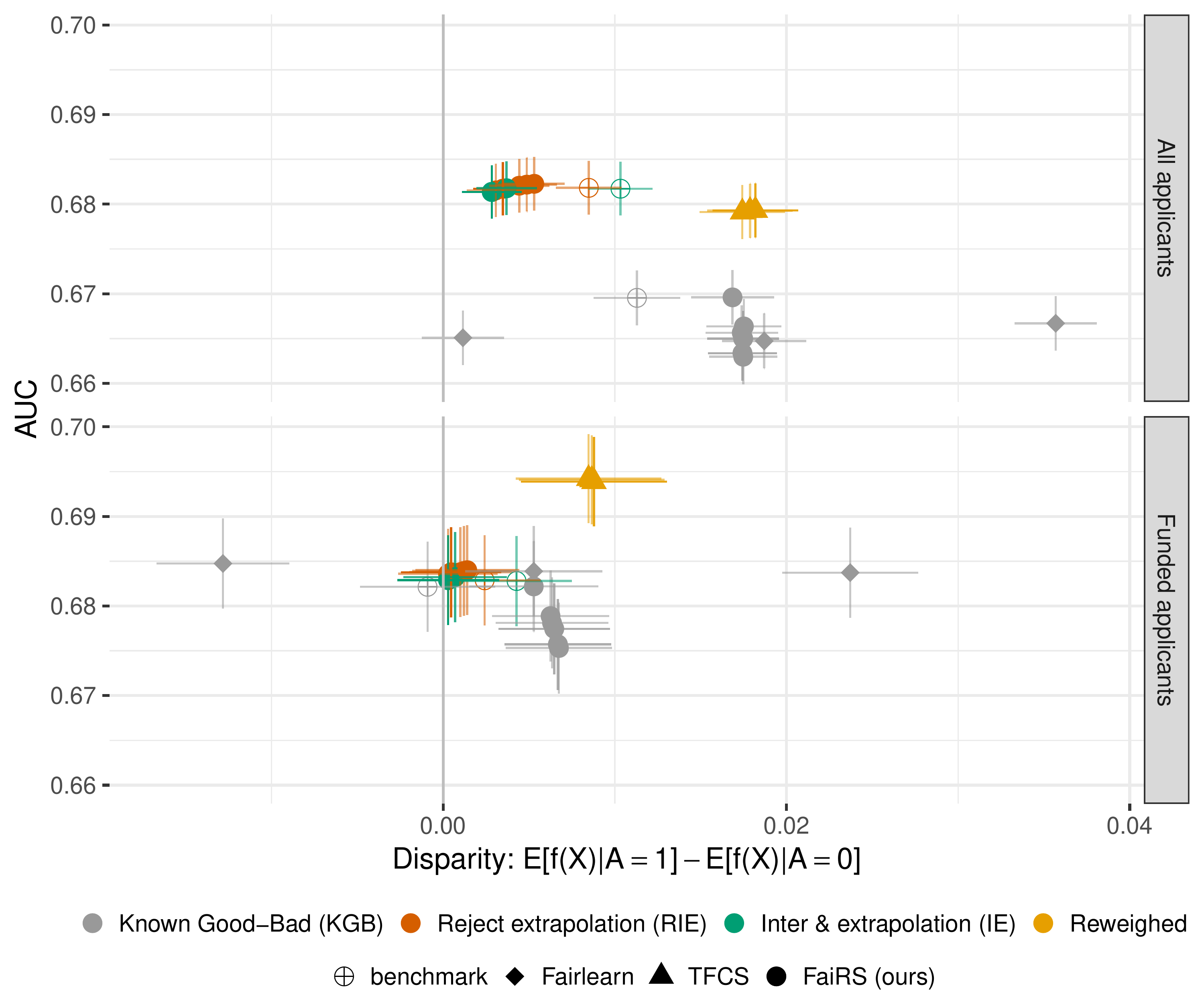}
    \caption{Area under the ROC curve (AUC) with respect to the synthetic outcome against disparity in the average risk prediction for the disadvantaged ($A_i=1$) vs advantaged ($A_i=0$) groups.
    \fairs reduces disparities for the RIE and IE approaches while maintaining AUCs comparable to the benchmark models (first row). 
    Evaluation on only funded applicants (second row) overestimates the performance of TFCS and KGB models and underestimates disparities for all models.
    Error bars show the $95\%$ confidence intervals. See \textsection~\ref{section: consumer lending application} of the main paper for details.}
    \label{fig:selective_disparities extended}
\end{figure}

We next consider the implications of \fairs for the credit applicants from the sensitive group. One might hope that encouraging statistical parity will increase access to credit for the sensitive group, and indeed we see some evidence for this.
Figure~\ref{fig:sel_scores} shows the distribution of risk scores for the disadvantaged group for the KGB, RIE, and IE models for the benchmark models (first row) and for \fairs with $1\%$ loss tolerance (second row). The $75\%$ percentile score is given as a dashed line. 
\fairs shifts the $75\%$ percentile KGB score to the left (left column). \fairs therefore reduces the predicted risk of the sensitive group, thereby expanding access to credit. We see a smaller shift for the RIE and IE approaches, which have lower risk distributions than the KGB model.
As we have seen elsewhere, evaluation on the funded only applicants (right column) lends misleading conclusions, e.g., underestimating both the difference in distributions between the KGB and RIE/IE approaches as well as differences between the benchmark and \fairs variants. 

Finally, we present results for the benchmarks and \fairs models with respect to the loss they were trained to minimize, mean-squared error. 
Figure \ref{fig:lending application disparity vs mse on test} shows the mean square error (MSE) against predictive disparity for the KGB, RIE, IE benchmarks and \fairs variants on held-out test data. The qualitative patterns are the same as Figure \ref{fig:selective_disparities} in \textsection~\ref{section: consumer lending application} of the main text. Evaluation on all applicants shows that \fairs with reject extrapolation (RIE and IE) reduces disparities without impacting MSE. The RIE and IE methods achieve lower disparity and lower MSE than the KGB model trained only on funded data,  highlighting the importance of adjusting for selective labels. We again observe that evaluation on only funded applications is misleading as it suggests that the KGB models have comparable MSE and it drastically underestimates predictive disparities for all models. 

Figure \ref{fig:selective_disparities} in \textsection~\ref{section: consumer lending application} of the main text and Figure \ref{fig:lending application disparity vs mse on test} shows that the \fairs KGB model appears to produce larger predictive disparities than the benchmark KGB model. This is likely due to generalization error on the held-out test data. To verify this hypothesis, Figure \ref{fig:lending application disparity vs mse on train} shows the MSE against predictive disparity for the KGB, RIE, IE benchmarks and \fairs variants on the training data. Indeed among funded applicants in the train data, \fairs-KGB models produce smaller absolute predictive disparities than the benchmark KGB model (second row).

\begin{figure}[htbp!]
    \centering
    \includegraphics[scale = 0.25]{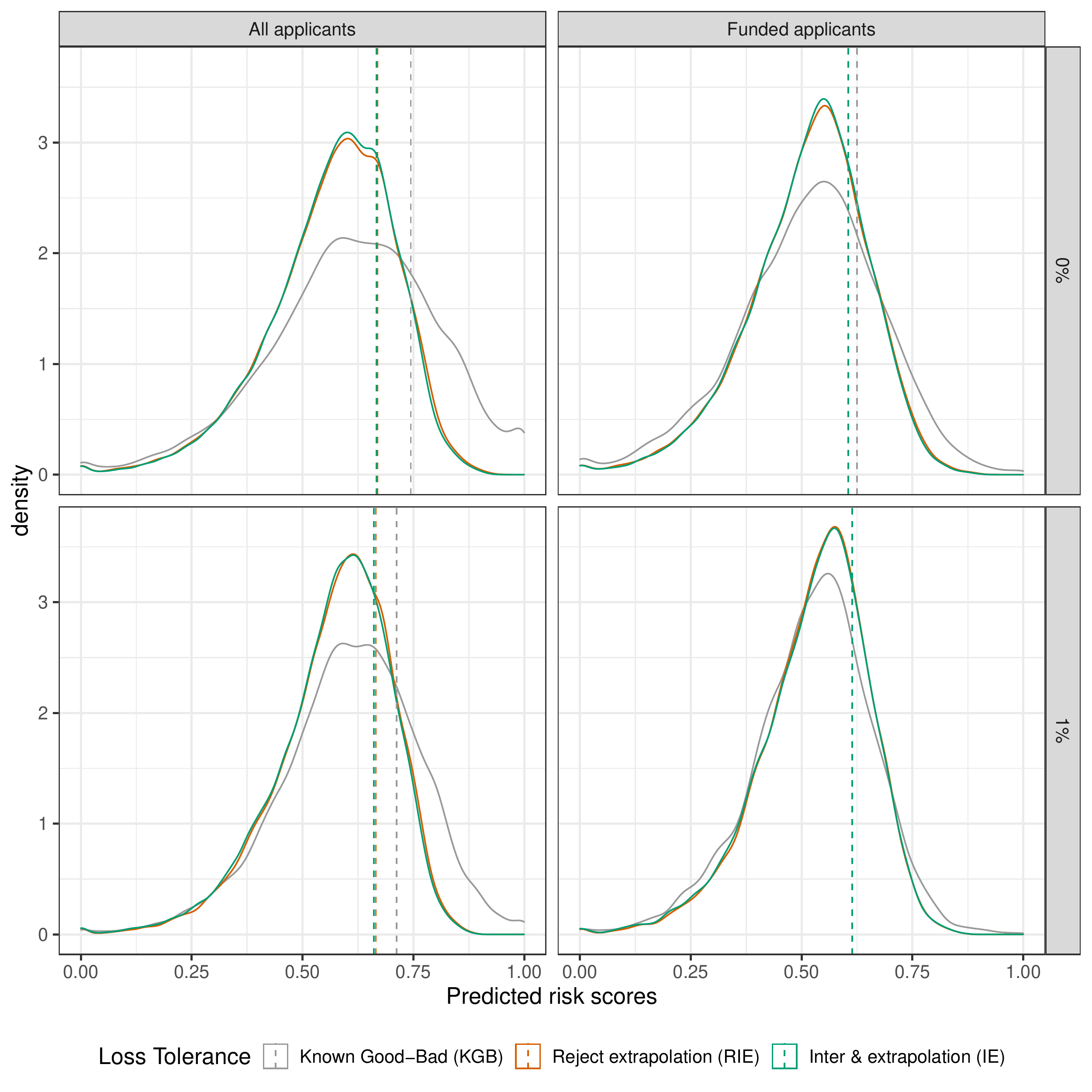}
    \caption{Predicted risk distributions for disadvantaged group $A_i = 1$ for \fairs algorithm using KGB, RIE and IE approaches. The first row shows the benchmark model risk scores. The second row shows our FaiRS's risk scores for a loss tolerance of $1\%$. The left and right columns show risk scores on all applicants and funded applicants from the disadvantaged group respectively. The dashed line indicates the $75$-percentile score. The RIE and IE methods predict lower rates of default for the disadvantaged group than the KGB method. The densities for the funded applicants (right column) underestimate the differences in risk scores across the KGB, RIE, and IE methods (compare to left column). See \textsection~\ref{section: consumer lending application} for details.}
    \label{fig:sel_scores}
\end{figure}

\begin{figure}[htbp!]
    \centering
    \includegraphics[scale=0.25]{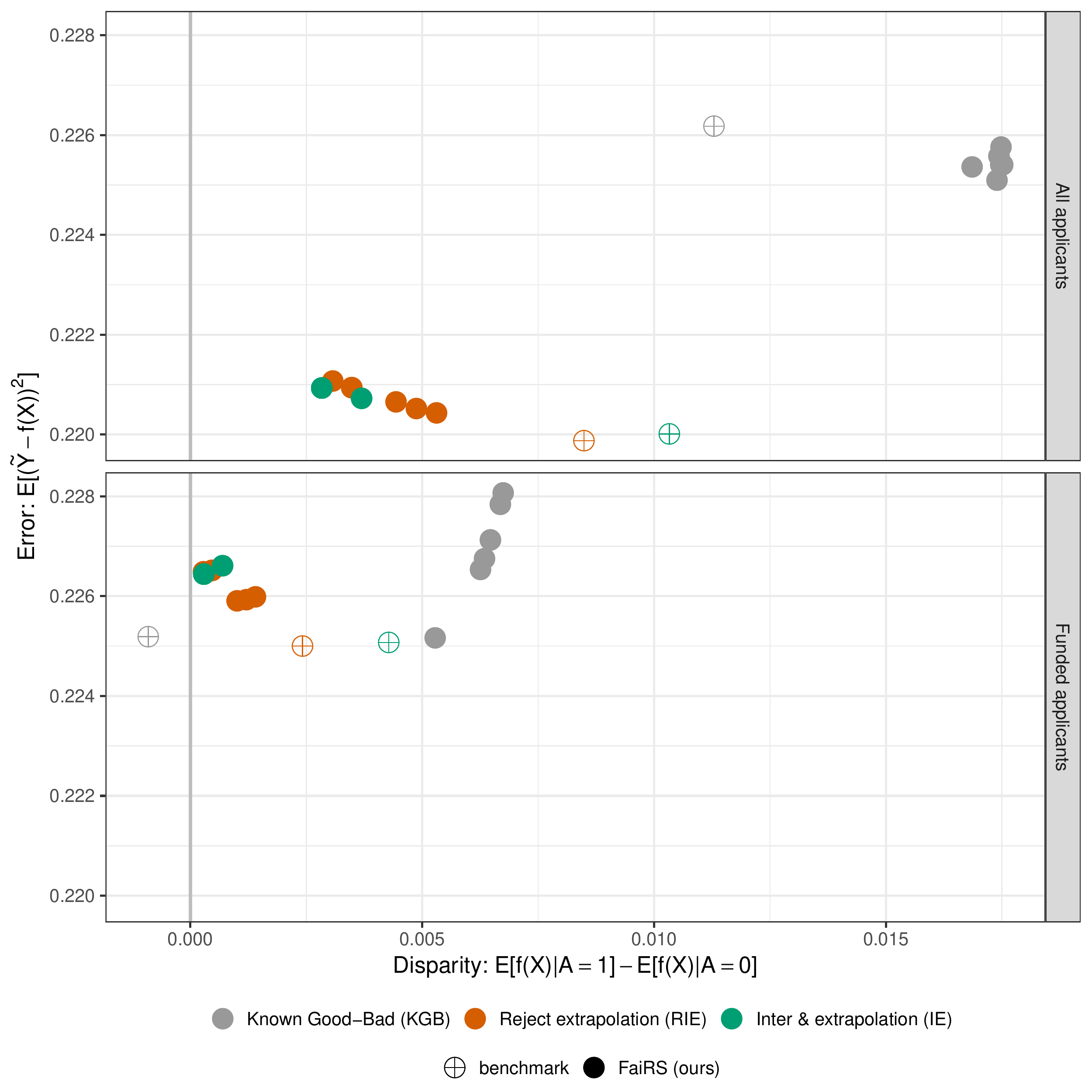}
    \caption{Mean square error (MSE) with respect to the synthetic outcome $\tilde Y_i$ against disparity in the average risk prediction for the disadvantaged ($A_i = 1$) vs. advantaged ($A_i = 0$) groups in held-out test data. The first row evaluates each method on all applicants and and the second row evaluates each method on funded  applicants only. See \textsection~\ref{section: consumer lending application} and \textsection~\ref{section: consumer lending additional results} for details.
    }
    \label{fig:lending application disparity vs mse on test}
\end{figure}

\begin{figure}[htbp!]
    \centering
    \includegraphics[scale=0.25]{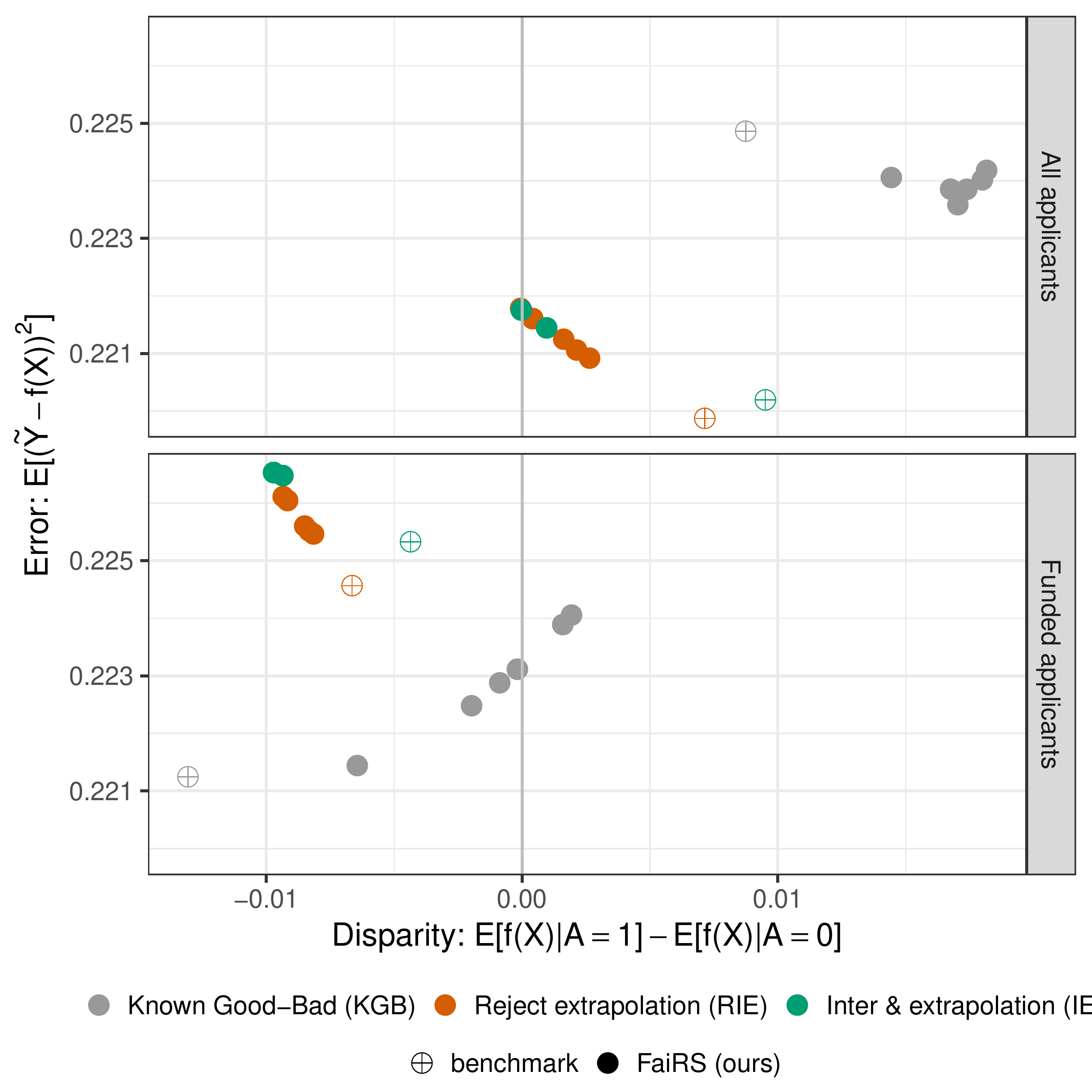}
    \caption{Mean square error (MSE) with respect to the synthetic outcome $\tilde Y_i$ against disparity in the average risk prediction for the disadvantaged ($A_i = 1$) vs. advantaged ($A_i = 0$) groups in the training data. The first row evaluates each method on all applicants and and the second row evaluates each method on funded  applicants only. See \textsection~\ref{section: consumer lending application} and \textsection~\ref{section: consumer lending additional results} for details.
    }
    \label{fig:lending application disparity vs mse on train}
\end{figure}

\subsection{Recidivism Risk Prediction: Additional Results}\label{section: recidivism additional results}

ProPublica's COMPAS recidivism data \cite{AngwinEtAl(16)} contains 7,214 examples. We randomly split this data 50\%-50\% into a train and test set. We evaluate models using logistic regression loss, defined as $l(y, f(x)) = \log(1 + e^{-C(2y-1)(2 f(x) - 1)})/(\log(1 + e^C))$ for $C = 5$. We ran the exponentiated gradient algorithm for at most 500 iterations on a fixed discretization grid, $\mathcal{Z}_\alpha = \{1/40, 2/40,\hdots, 1\}$. Letting $n = 3,607$, we set the parameters of the exponentiated gradient algorithm to be $B = \sqrt{n}/2$ for minimization problems, $B = \sqrt{n}$ for maximization problems, $\nu = 1/\sqrt{n}$ and $\eta = 2$. We report the average run time results for a single run of the exponentiated gradient algorithm to solve the minimization and maximization problems for each disparity measure in Table \ref{table: Compas Race - Traing Timing} below. These experiments were conducted on a 2012 MacBook Pro with a 2.3 GHz Quad-Core Intel Core i7.

\begin{table}[t]
\caption{Timing for the recidivism risk prediction experiment on the ProPublic COMPAS dataset. We report the average time for the exponentiated gradient algorithm to complete at most $500$ iterations on the train set ($n_{train} = 3,607$) in computing the disparity minimizing model (Min. Disp.) and the disparity maximizing model (Max. Disp.). Timing is reported in minutes. See \textsection~\ref{section: recidivism application} for details.}
\label{table: Compas Race - Traing Timing}
\vskip 0.15in
\begin{center}
\begin{small}
\begin{sc}
\begin{tabular}{c c c c }
\toprule
& & \multicolumn{2}{c}{Timing (in minutes)} \\
& & Min. Disp.  & Max. Disp. \\
\midrule
SP & & 7.29 & 24.10 \\
BFPC & & 8.45 & 24.18 \\
BFNC & & 22.24 & 23.64 \\
\bottomrule
\end{tabular}
\end{sc}
\end{small}
\end{center}
\vskip -0.1in
\end{table}

\subsubsection{Test Loss}\label{section: race test loss performance}

Table \ref{table: Compas Race - ref model compas Test, loss} reports the test loss of COMPAS and the test losses of the disparity minimizing and disparity maximizing models over the set of good models. The disparity minimizing and disparity maximizing models achieve comparable and in some cases lower test loss than COMPAS.

\begin{table}[t]
\caption{The disparity minimizing and disparity maximizing models over the set of good models (performing within $1\%$ of COMPAS's training loss) achieve comparable test loss to COMPAS. The first panel (SP) displays the test loss for the models that minimize (Min. Disp.) and maximize (Max. Disp.) the disparity in average predictions for black versus white defendants (Def. \ref{definition: Statistical parity}). The second panel (BFPC) analyzes the test loss for the models that minimize and maximize the disparity in average predictions for black versus white defendants in the positive class, and the third panel examines the test loss for the models that minimize and maximize the disparity in average predictions for black versus white defendants in the negative class (Def. \ref{definition: balance for positive and negative class}). Standard errors are reported in parentheses. See \textsection~\ref{section: recidivism application} for details.}
\label{table: Compas Race - ref model compas Test, loss}
\vskip 0.05in
\begin{center}
\begin{small}
\begin{sc}
\begin{tabular}{c cccr}
\toprule
& & Test loss & & \\
& Min. Disp. & Max. Disp.  & COMPAS \\
\midrule
SP & 0.095 & 0.067 & 0.102 \\
& (0.001) & (0.002) & (0.003) \\
\midrule 
BFPC & 0.099 & 0.085 & 0.102 \\
& (0.003) & (0.002) & (0.003) \\
\midrule
BFNC & 0.094 & 0.073 & 0.102 \\
& (0.004) & (0.001) & (0.003) \\
\bottomrule
\end{tabular}
\end{sc}
\end{small}
\end{center}
\vskip -0.1in
\end{table}

\subsubsection{Train Set Performance}\label{section: race train set performance}
Figure \ref{fig: Compas Race - ref model compas Train Plots} plots the range of predictive disparities over the train set when the parameter $\epsilon$ is calibrated using COMPAS. We report the train set performance for various choices of the loss tolerance parameter, setting $\epsilon = 1\%, 5\%, 10\%$ of COMPAS' training loss.  The blue error bars plot the relative disparities associated with the linear program reduction (\textsection~\ref{section: linear program reduction}), the green error bars plot the relative disparities associated with the stochastic prediction function returned by Algorithm \ref{alg: exp grad for fairness frontier, extremes} and the orange dashed line plots the relative disparity associated with COMPAS. The range of disparities produced by the linear program reduction closely track the range of disparities produced by the stochastic prediction function returned by Algorithm \ref{alg: exp grad for fairness frontier, extremes} in the train set, confirming the quality of the linear programming reduction.

\begin{figure}[htbp!]
     \centering
     \begin{subfigure}[b]{0.4\textwidth}
         \centering
         \includegraphics[scale=0.11]{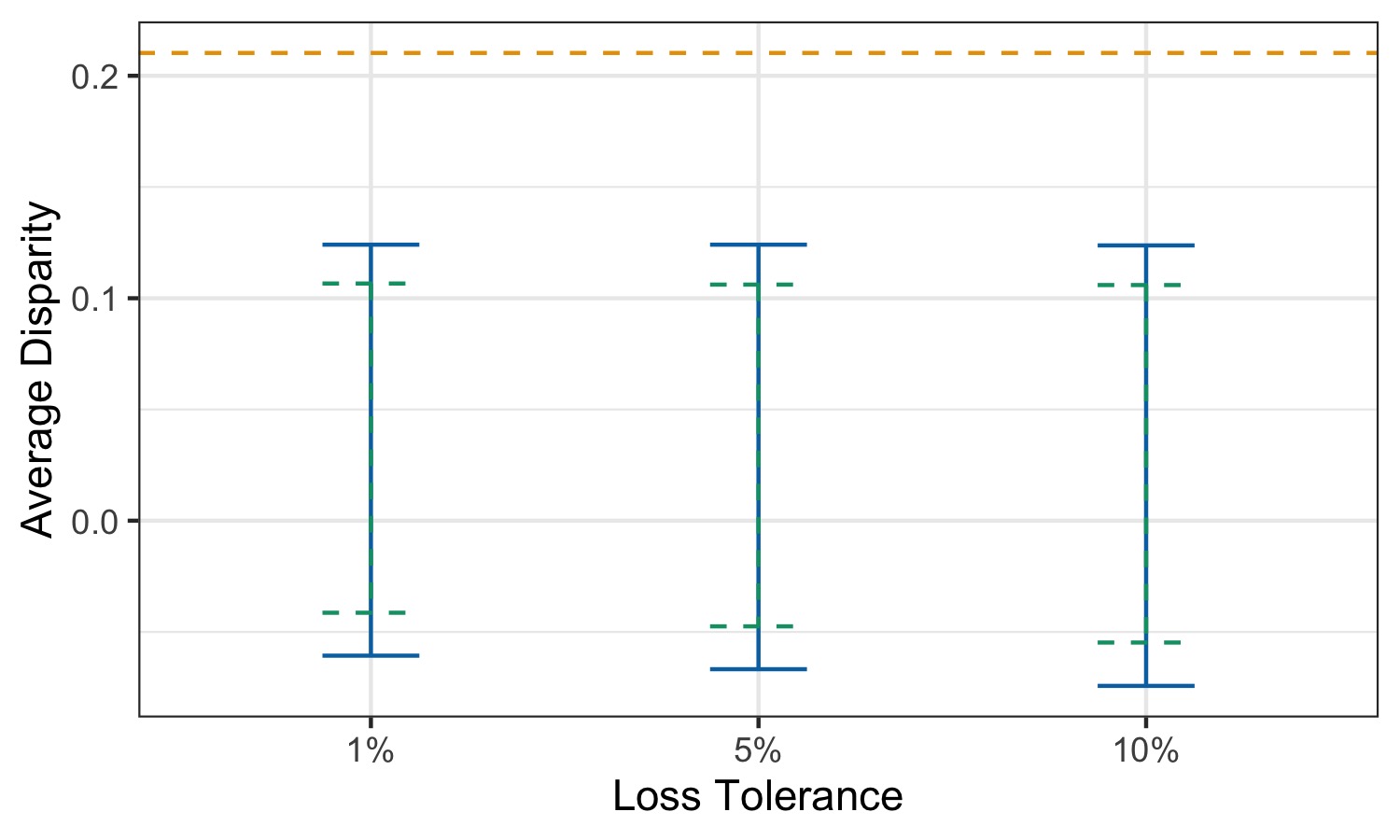}
         \caption{Statistical Parity}
         \label{fig: Compas Race - ref model compas Train SP}
     \end{subfigure}
     \hfill
     \begin{subfigure}[b]{0.4\textwidth}
         \centering
         \includegraphics[scale=0.11]{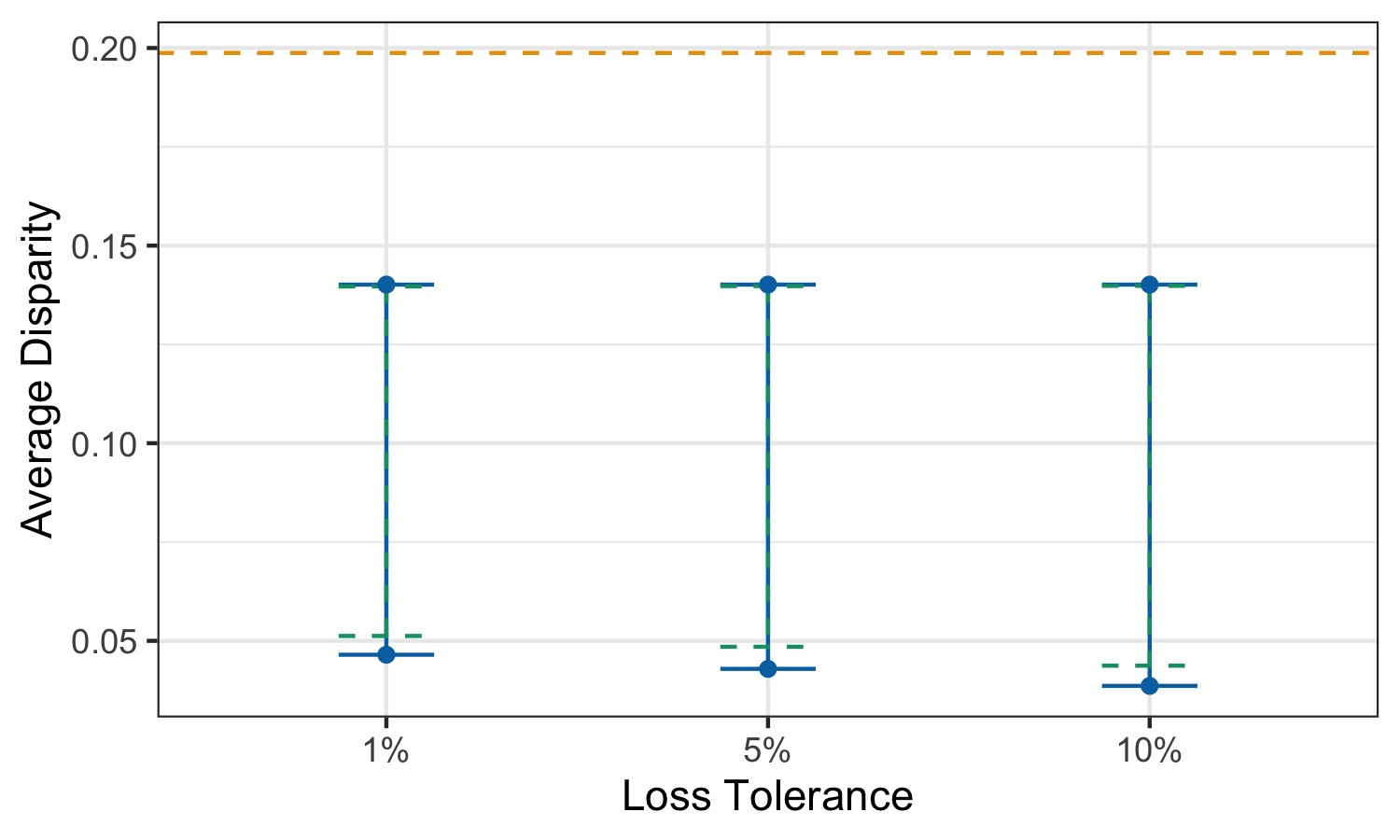}
         \caption{Balance for the Positive Class}
         \label{fig: Compas Race - ref model compas Train BFPC}
     \end{subfigure}
     \hfill
     \begin{subfigure}[b]{0.4\textwidth}
         \centering
         \includegraphics[scale=0.11]{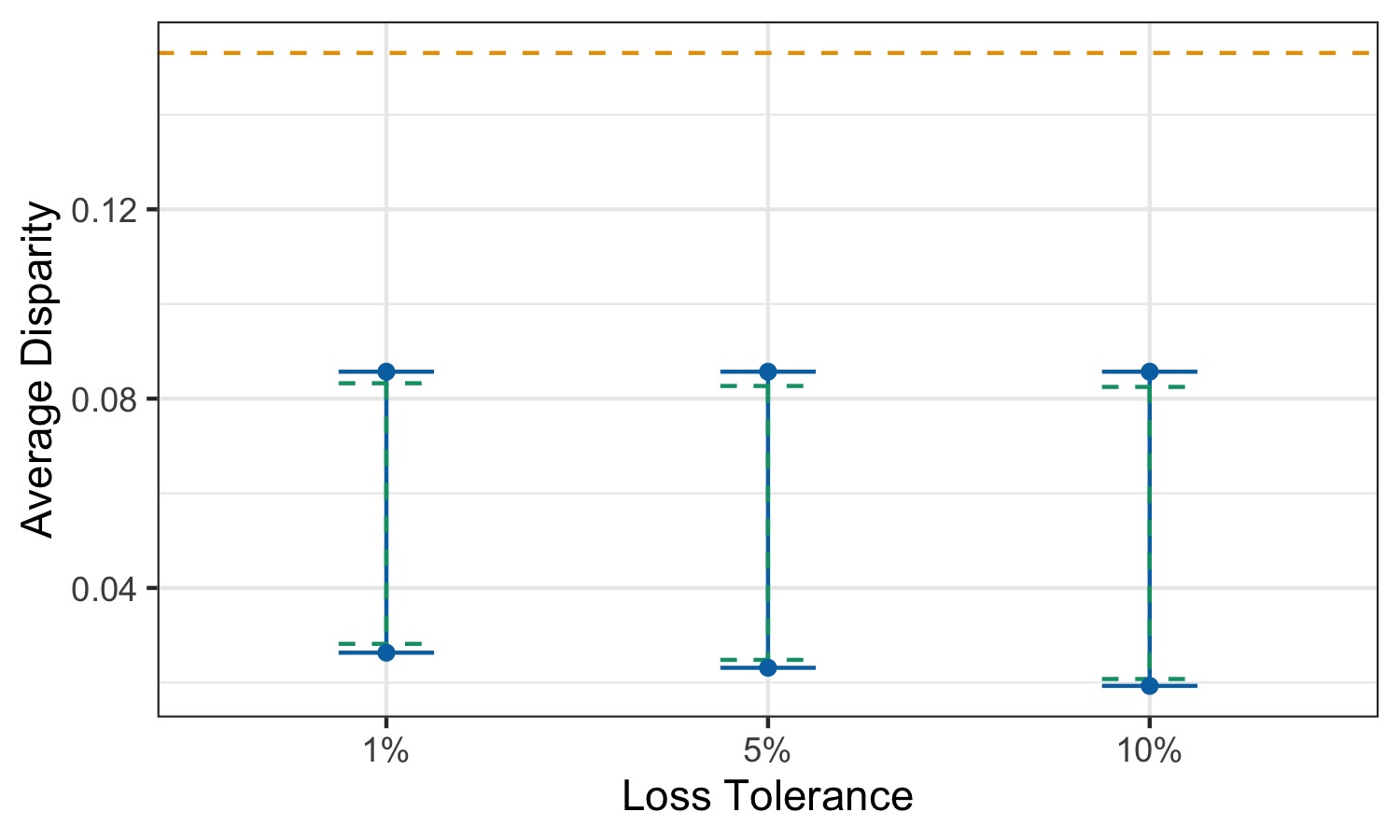}
         \caption{Balance for the Negative Class}
         \label{fig: Compas Race - ref model compas Train BFNC}
     \end{subfigure}
    \caption{The minimal and maximal predictive disparities between black defendants ($A_i = 1$) and white defendants ($A_i = 0$) over the set of good models in the train set. We set the loss tolerance as $\epsilon = 1\%, 5\%, 10\%$ of COMPAS' training loss. The blue error bars plot the relative disparities associated with the linear program reduction (\textsection~\ref{section: linear program reduction}), the green error bars plot the relative disparities associated with the stochastic prediction function returned by Algorithm \ref{alg: exp grad for fairness frontier, extremes} and the orange dashed line plots the predictive disparity associated with COMPAS. See \textsection~\ref{section: recidivism application} and \textsection~\ref{section: race train set performance} for details.}
    \label{fig: Compas Race - ref model compas Train Plots}
\end{figure}

\subsubsection{Results for Predictive Disparities across Young and Older Defendants}\label{section: COMPAS age experiments}

We also examine the range of predictive disparities between defendants that are younger than 25 years old ($A_i = 1$) and defendants older than 25 years old ($A_i = 0$), focusing on the range of predictive disparities that could be generated by a risk score that is constructed using logistic regression on a quadratic polynomial of the defendant's age and number of prior offenses. We calibrate the loss tolerance parameter $\epsilon$ such that (\ref{equation: disparity range problem}) constructs the fairness frontier over all models that achieve a logistic regression loss within $1\%$ of COMPAS's training loss. We provide the results for the statistical parity, balance for the positive class, and balance for the negative class disparity measures (Def. \ref{definition: Statistical parity} and Def \ref{definition: balance for positive and negative class}).  Table \ref{table: Compas age ref model compas test} summarizes the range of predictive disparities over the test set when the parameter $\epsilon$ is calibrated using COMPAS' training loss. While COMPAS lies within the range of possible disparities for each measure, notice that there exists a predictive model that produces strictly smaller disparities between young and older defendants than the COMPAS risk assessment at minimal cost to predictive performance. The disparity minimizing and disparity maximizing models over the set of good models achieve a test loss that is comparable to COMPAS (see Table \ref{table: Compas Age - ref model compas Test loss}).

Figure \ref{fig: Compas age ref model compas train} plots the range of predictive disparities over the train set when the parameter $\epsilon$ is calibrated using COMPAS. We report the train set performance for various choices of the loss tolerance parameter, setting $\epsilon = 1\%, 5\%, 10\%$ of COMPAS' training loss. The blue error bars plot the relative disparities associated with the linear program reduction (Section \ref{section: linear program reduction}), the green error bars plot the relative disparities associated with the stochastic prediction function returned by Algorithm \ref{alg: exp grad for fairness frontier, extremes} and the orange dashed line plots the relative disparity associated with COMPAS. We again find that the range of disparities produced by the linear program reduction closely track the range of disparities produced by the stochastic prediction function returned by Algorithm \ref{alg: exp grad for fairness frontier, extremes}.

\begin{table}[htbp!]
\caption{The minimal and maximal disparities between young defendants ($A_i = 1$) and older defendants ($A_i = 0$) over the set of good models (performing within 1\% of COMPAS' training loss) on the test set. The first panel (SP) displays the disparity in average predictions for young versus older defendants (Def. \ref{definition: Statistical parity}). The second panel (BFPC) displaces the disparity in average predictions for young versus old defendants in the positive class, and the third panel examines the disparity in average predictions for young versus older defendants in the negative class (Def. \ref{definition: balance for positive and negative class}). Standard errors are reported in parentheses. See \textsection~\ref{section: COMPAS age experiments} of the Supplement for details.}
\label{table: Compas age ref model compas test}
\vskip 0.15in
\begin{center}
\begin{small}
\begin{sc}
\begin{tabular}{ccccr}
\toprule
& Min. Disp. & Max. Disp. & COMPAS \\
\midrule
SP & -0.296 & 0.433 & 0.173 \\ 
& (0.019) & (0.008) & (0.014) \\
\midrule 
BFPC & -0.207 & 0.260 & 0.101 \\ 
& (0.010) & (0.008) & (0.019) \\
\midrule
BFNC & -0.040 & 0.329 & 0.200 \\
& (0.038) & (0.008) & (0.022) \\
\bottomrule
\end{tabular}
\end{sc}
\end{small}
\end{center}
\vskip -0.1in
\end{table}

\begin{table}[t]
\caption{The disparity minimizing and disparity maximizing models over the set of good models (performing within $1\%$ of COMPAS's training loss) achieve comparable test loss to COMPAS. The first panel (SP) displays the test loss for the models that minimize (Min. Disp.) and maximize (Max. Disp.) the disparity in average predictions for young versus older defendants (Def. \ref{definition: Statistical parity}). The second panel (BFPC) analyzes the test loss for the models that minimize and maximize the disparity in average predictions for young versus older defendants in the positive class, and the third panel examines the test loss for the models that minimize and maximize the disparity in average predictions for young versus older defendants in the negative class (Def. \ref{definition: balance for positive and negative class}). Standard errors are reported in parentheses. See \textsection~\ref{section: recidivism application} for details.}
\label{table: Compas Age - ref model compas Test loss}
\vskip 0.05in
\begin{center}
\begin{small}
\begin{sc}
\begin{tabular}{ccccr}
\toprule
& & Test loss & & \\
& Min. Disp. & Max. Disp.  & COMPAS \\
\midrule
SP & 0.096 & 0.097 & 0.102 \\
& (0.004) & (0.003) & (0.003) \\
\midrule 
BFPC & 0.098 & 0.098 & 0.102 \\
& (0.002) & (0.003) & (0.003) \\
\midrule
BFNC & 0.094 & 0.093 & 0.102 \\
& (0.016) & (0.002) & (0.003) \\
\bottomrule
\end{tabular}
\end{sc}
\end{small}
\end{center}
\vskip -0.1in
\end{table}

\begin{figure}
     \centering
     \begin{subfigure}[b]{0.4\textwidth}
         \centering
         \includegraphics[scale=0.11]{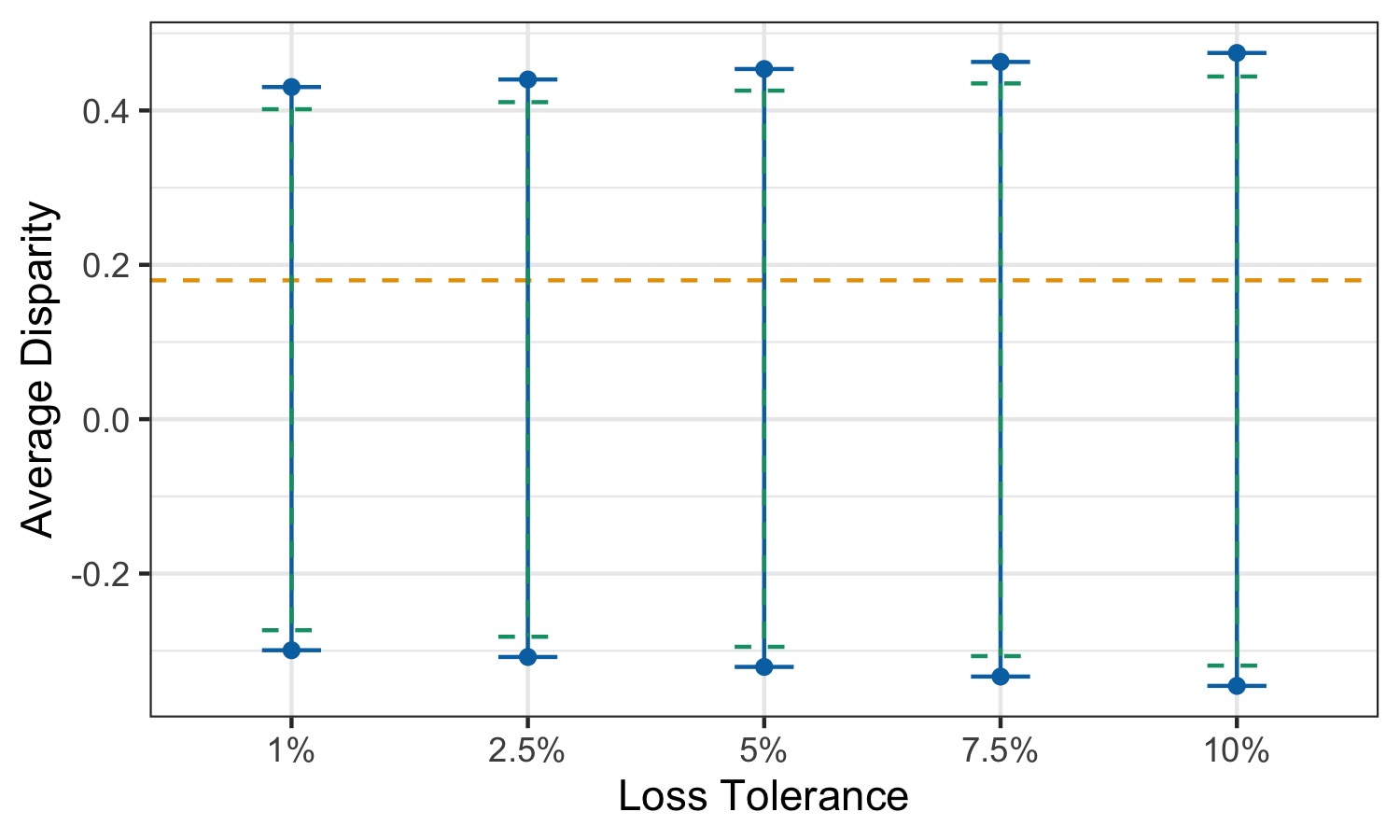}
         \caption{Statistical Parity}
         \label{fig: Compas age ref model compas SP train}
     \end{subfigure}
     \hfill
     \begin{subfigure}[b]{0.4\textwidth}
         \centering
         \includegraphics[scale=0.11]{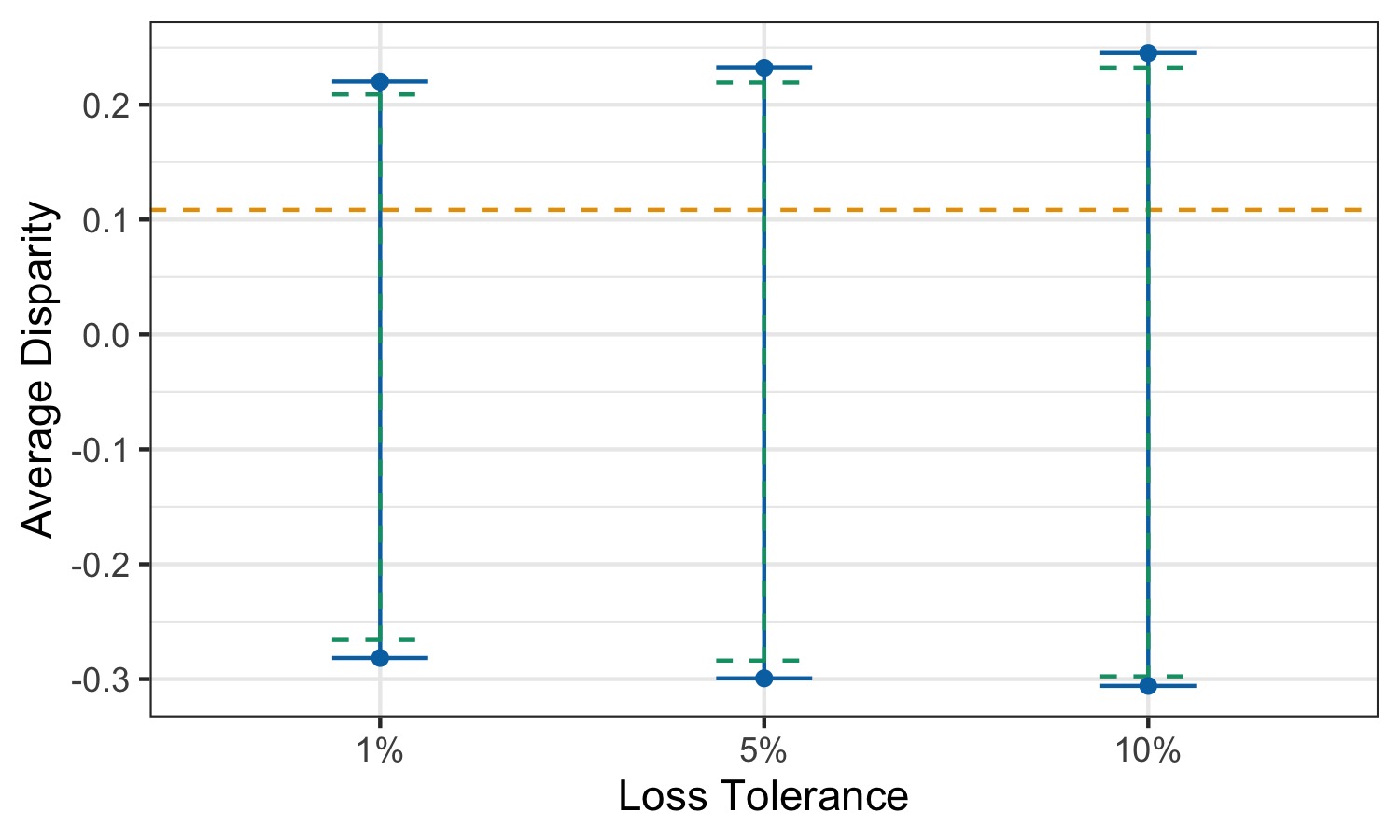}
         \caption{Balance for the Positive Class}
         \label{fig: Compas age ref model compas BFPC train}
     \end{subfigure}
     \hfill
     \begin{subfigure}[b]{0.4\textwidth}
         \centering
         \includegraphics[scale=0.11]{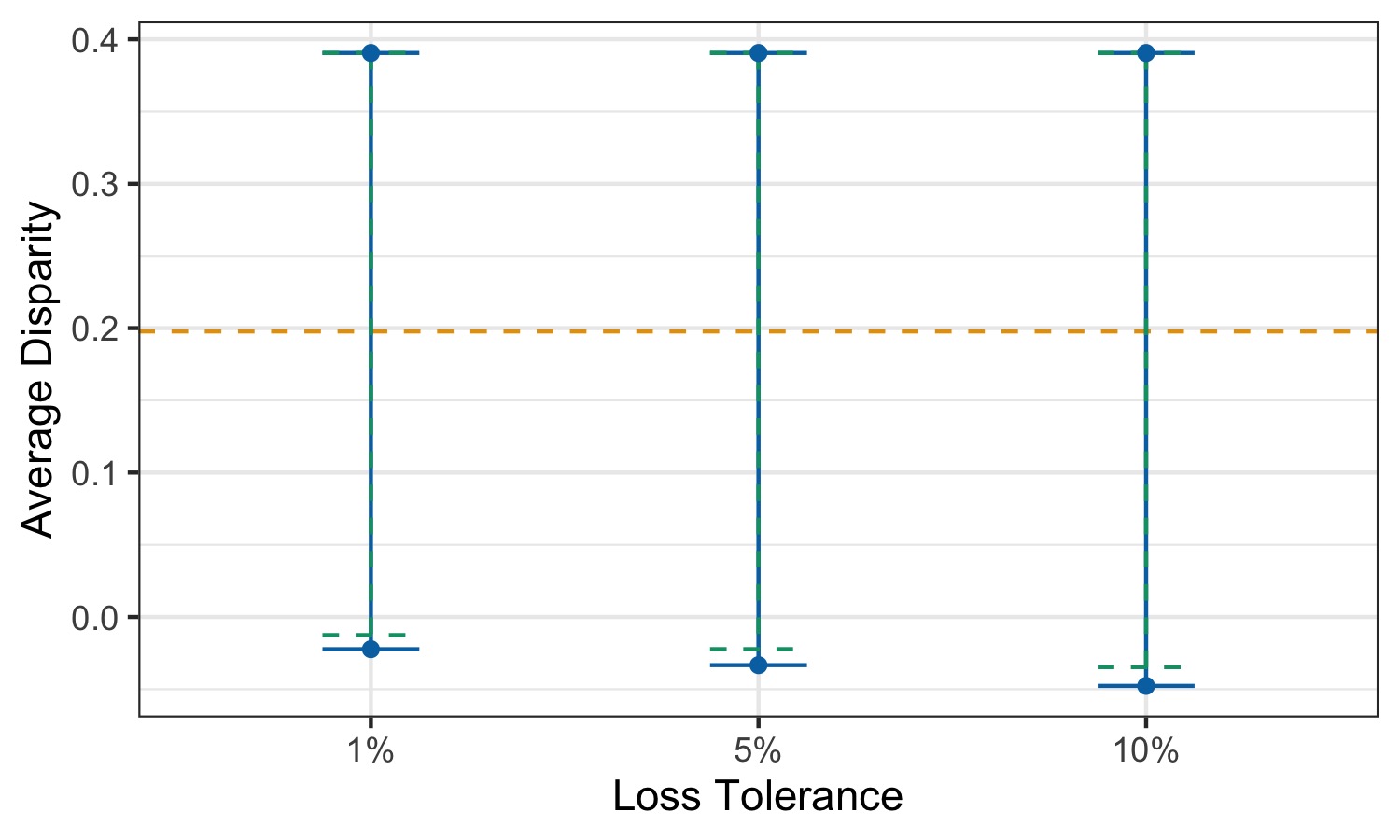}
         \caption{Balance for the Negative Class}
         \label{fig: Compas age ref model compas BFNC train}
     \end{subfigure}
\caption{The minimal and maximal disparities between young defendants ($A_i = 1$) and older defendants ($A_i = 0$) over the set of good models on the train set. We set the loss tolerance as $\epsilon = 1\%, 5\%, 10\%$ of COMPAS' training loss. The blue error bars plot the relative disparities associated with the linear program reduction (\textsection~\ref{section: linear program reduction}), the green error bars plot the relative disparities associated with the stochastic prediction function returned by Algorithm \ref{alg: exp grad for fairness frontier, extremes} and the orange dashed line plots the predictive disparity associated with COMPAS. See \textsection~\ref{section: COMPAS age experiments} of the Supplement for details.}
    \label{fig: Compas age ref model compas train}
\end{figure}

\subsection{Regression Experiments: Communities \& Crime Dataset}\label{section: regression experiments, communities & crime dataset}

The Communities \& Crime dataset \cite{Dua:2019} contains 1,994 examples. We randomly split this data 50\%-50\% into a train and test set. We train models to predict the violent crime rate within each community (the number of violent crimes per 100,000 people), which is a continuous outcome. We evaluate models using least squares loss, define the benchmark model to be the loss-minimizing linear regression and focus on the statistical parity measure of predictive disparities between communities that are majority white vs. majority non-white. We use \fairs to search for the predictive disparity minimizing linear regression that achieves a loss that is comparable to the benchmark (loss tolerance $\epsilon = 1\%, 5\%, 10\%$ of the loss-minimizing linear regression). In this dataset, there is no selective labels problem, and so we construct the \fairs model following approach detailed in \textsection~\ref{section: reductions approach} and Supplement \textsection~\ref{section: computing abs disp min model}.
 
Table \ref{table: communities and crime, disparities and loss over test set} summarizes both the predictive disparities and least squares losses over the test set of the $\fairs$ models and the benchmark linear regression. The \fairs models achieve comparable performance to the test loss of the benchmark loss-minimizing linear regression while producing lower predictive disparities. These results highlight that our proposed methods continue to perform well in regression tasks.

\begin{table}[t]
\caption{
The \fairs models over the set of good models (performing within 1\%, 5\%, and 10\% of the loss-minimizing linear regression's training loss) achieve comparable performance to the test loss of the loss-minimizing linear regression and produce lower predictive disparities. The first column reports the disparity in average predictions between majority white and majority non-white communities (Def. \ref{definition: Statistical parity}). The second column reports the test losses for each model. See \textsection~\ref{section: regression experiments, communities & crime dataset} for details.}
\label{table: communities and crime, disparities and loss over test set}
\vskip 0.05in
\begin{center}
\begin{small}
\begin{sc}
\begin{tabular}{c c c}
\toprule
& Loss  & Disp. \\
\midrule
Benchmark & 0.0101 & -0.3386 \\ 
& (0.0007) & (0.0135) \\
\midrule
\fairs \\
$\epsilon = 1\%$ & 0.0103 & -0.2989 \\
& (0.0008) & (0.0130) \\
$\epsilon = 5\%$ & 0.0105 & -0.2856 \\ 
& (0.0008) & (0.0129) \\
$\epsilon = 10\%$ & 0.0108 & -0.2658 \\ 
& (0.0008) & (0.0127) \\
\bottomrule
\end{tabular}
\end{sc}
\end{small}
\end{center}
\vskip -0.1in
\end{table}

\end{document}